\theoremstyle{plain}
\newtheorem{theorem}{Theorem}[section]
\newtheorem{lemma}[theorem]{Lemma}
\newtheorem{corollary}[theorem]{Corollary}
\theoremstyle{definition}
\theoremstyle{remark}
\newcommand{\methodName}{\texttt{PP-SSL}}
\DeclareMathOperator*{\argmin}{arg\,min}
\newcommand{\mycomm}[3]{{\color{#2} \textbf{[#1: #3]}}} 
\newcommand{\RD}[1]{\mycomm{RD}{violet}{#1}}
\newcommand{\purple}[1]{\textcolor{purple}{#1}}
\newcommand{\blue}[1]{\textcolor{blue}{#1}}
\definecolor{darkblue}{rgb}{0.0, 0.0, 0.55}
\newcommand{\calig}[1]{\mathcal{#1}}
\newcommand{\bb}[1]{\mathbb{#1}}
\newcommand{\brac}[1]{\left(#1\right)}
\newcommand{\sbrac}[1]{\left[#1\right]}
\newcommand{\cbrac}[1]{\left\{#1\right\}}
\newcommand{\bbE}{\bb{E}}
\newcommand{\reals}{\bb{R}}
\newcommand{\A}{\calig{A}}
\newcommand{\D}{\calig{D}}
\newcommand{\E}{\calig{E}}
\newcommand{\F}{\calig{F}}
\renewcommand{\L}{\calig{L}}
\renewcommand{\O}{\calig{O}}
\newcommand{\R}{\calig{R}}
\newcommand{\X}{\calig{X}}
\newcommand{\Y}{\calig{Y}}
\newcommand{\norm}[1]{\left\Vert#1\right\Vert}
\newcommand*{\QEDW}{\ensuremath{\square}}
\newcommand{\ef}{\mathcal{E}^{f}}
\newcommand{\var}{\mathbb{V}}
\renewcommand{\E}{\mathbb{E}}
\title{Prediction-Powered Semi-Supervised Learning with Online Power Tuning}
\author{%
  Noa Shoham\textsuperscript{1} \hspace{2pt}
  Ron Dorfman\textsuperscript{1} \hspace{2pt}
  Shalev Shear\textsuperscript{1} \hspace{2pt}
  Kfir Y. Levy\textsuperscript{1} \hspace{2pt}
  Yaniv Romano\textsuperscript{1,2} \hspace{2pt}\\
\textsuperscript{1}Department of Electrical and Computer Engineering, Technion IIT\\
\textsuperscript{2}Department of Computer Science, Technion IIT
}
\begin{document}
\maketitle

\begin{abstract}
Prediction-Powered Inference (PPI) is a recently proposed \emph{statistical inference} technique for parameter estimation that leverages pseudo-labels 
on both labeled and unlabeled data to construct an unbiased, low-variance estimator.
In this work, we extend its core idea to semi-supervised learning (SSL) for \emph{model training}, introducing a novel unbiased gradient estimator. 
This extension addresses a key challenge in SSL: while unlabeled data can improve model performance, its benefit heavily depends on the quality of pseudo-labels. Inaccurate pseudo-labels can introduce bias, leading to suboptimal models.
To balance the contributions of labeled and pseudo-labeled data, we utilize an \emph{interpolation parameter} and tune it on the fly, alongside the model parameters, using a one-dimensional online learning algorithm. 
We verify the practical advantage of our approach through experiments on both synthetic and real datasets, demonstrating improved performance over classic SSL baselines and PPI methods that tune the interpolation parameter offline.

\end{abstract}

\section{Introduction}\label{Intro}

Semi-supervised learning (SSL) has garnered significant attention due to its success in leveraging both labeled and unlabeled data to improve model performance~\cite{SSLzhu2009introduction,SSL-surveyzhu2005semi,SSLchapelle2009semi,SSLlee2013pseudo,SSLarazo2020pseudo,SSL-pseudotriguero2015self,SSL-estimation-zhang2019semi,SSLvan2020survey,SSLchen2020simple,SSL-estimation-azriel2022semi,song2022graph,SSLhendrycks2019using,SSLPrizve2021defense}. A standard SSL approach augments pseudo-labeled data with trusted labeled data and then fits a new model---or fine-tunes a pre-trained one---by minimizing empirical risk~\cite{SSLlee2013pseudo,SSL-pseudotriguero2015self,SSLarazo2020pseudo,zhu2024doubly,PShe2023does,PSSfrei2022self,PSSwei2020theoretical,PSTkontonis2023slam,PSTlang2024theoretical,PSTildiz2024high,PSTnagarajan2023student,PSTsafaryan2023knowledge,SSLPrizve2021defense}. A key limitation of this approach arises when the pseudo-labels are inaccurate. For instance, consider a minority subpopulation on which the pre-trained model performs poorly. In such cases, the resulting pseudo-labels introduce bias into the optimization objective. This issue is especially problematic when trusted labeled data is scarce and pseudo-labeled data is abundant. Under these conditions, the SSL model becomes biased toward the erroneous pseudo-labels, effectively ignoring the informative trusted labels from the minority group and failing to improve performance. 

Recently, several approaches have been proposed to address the issue of biased risk using pseudo-labels, with Prediction-Powered Inference (PPI)~\cite{angelopoulos2023prediction,angelopoulos2023ppi++,zrnic2024cross,lihua-ji2025predictions} emerging as a notable example. PPI is designed to construct valid confidence intervals for parameters of interest by leveraging both labeled and pseudo-labeled data while correcting for potential biases in the latter, offering robustness in inference under minimal assumptions. Building on the principles introduced in PPI, subsequent works have extended these ideas to practical SSL settings~\cite{zhu2024doubly,sifaou2024semi}, incorporating corrections for pseudo-label bias directly into the learning objective to improve robustness and generalization.

Nevertheless, existing PPI-based SSL methods leave two critical gaps: \textbf{(i)} Their analyses evaluate the effect of pseudo-labels only asymptotically—at the \textbf{population optimum}, an idealized point never exactly reached during training—and offer no guarantees about how pseudo-labels affect the actual convergence of the learning algorithm. \textbf{(ii)} Realizing the full benefit of pseudo-labeled data hinges on an interpolation parameter that balances their quality against the variance of the labeled data. Prior work chooses this parameter offline, assuming knowledge of the pseudo-label quality and the labeled data variance~\cite{angelopoulos2023ppi++, zhu2024doubly}. Unfortunately, these quantities are unknown in practice, so an estimated fixed value may be markedly sub-optimal.

\vspace{-5pt}
\subsection*{Contributions}
We present Prediction-Powered SSL (\methodName), a novel framework for unbiased semi-supervised learning. We analyze PPI-inspired gradient estimates and demonstrate how their variance decreases as the pre-trained teacher model's accuracy improves and the amount of unlabeled data increases. We further show how this reduced variance leads to faster convergence.
This contrasts with prior work~\cite{zhu2024doubly,sifaou2024semi}, which provides only asymptotic guarantees and no finite-time convergence bounds. 

Our work also guides us on how to choose the interpolation parameter $\lambda \in[0,1]$ to optimally balance the effect of pseudo-labeled data against the variance of the labeled data. Unfortunately, the optimal choice of $\lambda$ relies on impractical prior knowledge of the labeled data variance and the teacher model's accuracy. To address this, we design an online learning approach that tunes $\lambda$ on the fly during training and achieves guarantees matching those of the optimal $\lambda$. This result highlights the significance of moving from offline to online analysis, enabling a rigorous account of training dynamics.

Numerical experiments on both real and synthetic data demonstrate the advantage of our online SSL framework on regression and classification tasks. Our method  outperforms both classic (biased) SSL and PPI-like training methods in scenarios where there is a subgroup in the data on which the teacher model performs poorly. \footnote{Code for reproducing the experiments is available at \href{https://github.com/noashoham/PP-SSL}{https://github.com/noashoham/PP-SSL}}

\section{Preliminaries and background}

\label{sec:background}
Consider $n$ labeled data points, sampled i.i.d. from some unknown distribution $P=\mathbb{P}_X\times\mathbb{P}_{Y|X}$, denoted by $\mathcal{D}_l=\{(x^i,y^i)\}_{i=1}^n$, where $(x^i,y^i)\in\X\times\Y$.
Also, assume we have $N$ unlabeled data points sampled i.i.d. from the same  $\mathbb{P}_X$, denoted by $\mathcal{D}_{\text{unl}}=\{\tilde{x}^i\}_{i=1}^N$. In this work, we focus on the more common and interesting scenario where unlabeled data is abundant while labeled data is scarce and costly to obtain, i.e., $N\gg n$.

\subsection*{Background and related work}
Semi-supervised learning (SSL) 
  is a machine learning paradigm that leverages both labeled and unlabeled data to improve model performance. In contrast, standard supervised learning relies solely on labeled data and typically resort to optimizing the empirical loss (a.k.a.~empirical risk):
  \begin{equation}\label{eq:loss_l}
    L_n(w)\coloneqq \frac{1}{n} \sum_{i=1}^n \ell(w; x^i, y^i),
\end{equation}
 where $w\in\mathbb{R}^{d}$ denotes the trainable model parameters. 
 A common SSL approach is pseudo-labeling~\cite{SSLlee2013pseudo,SSL-pseudotriguero2015self,SSLarazo2020pseudo,oh2022daso}, where a model $f:\X\rightarrow\Y$ is used to generate artificial labels for the unlabeled data to augment training. There are two main strategies for choosing this model. The first is self-training~\cite{zhu2024doubly,PShe2023does,PSSfrei2022self,PSSwei2020theoretical}, where the same model being trained is also used to generate pseudo-labels that are continuously updated during training. In contrast, we adopt the teacher-student framework~\cite{PSTkontonis2023slam,PSTlang2024theoretical,PSTildiz2024high,PSTnagarajan2023student,PSTsafaryan2023knowledge}, in which a fixed teacher model produces pseudo-labels that remain constant while a separate student model is trained. This setup offers greater stability and is better suited for theoretical analysis of pseudo-label bias.
  The typical loss used in such settings is 
\begin{equation}\label{eq:loss_ssl}
    L_{\text{SSL}}(w) = L_n(w) + \tilde{L}_N^f(w),\quad \text{where}\quad \tilde{L}_N^f(w)\coloneqq \frac{1}{N} \sum_{i=1}^N \ell(w; \tilde{x}^i, f(\tilde{x}^i))\; .
\end{equation}
However, pseudo-labeling may introduce confirmation bias, where erroneous pseudo-labels reinforce the model’s mistakes~\cite{SSLarazo2020pseudo}.
Theoretical analyses of pseudo-labeling in SSL have primarily focused on offline settings. Previous works~\cite{PSSwei2020theoretical,SSLarazo2020pseudo} study convergence and confirmation bias under confidence-based pseudo-labeling, but their analyses are limited to self-training with biased estimators. The authors of~\cite{PSTsafaryan2023knowledge} analyze knowledge distillation (e.g., teacher-student) as a variance reduction mechanism but assume a self-distillation setup with identical teacher and student models and derive guarantees in an infeasible setting without ensuring unbiased risk estimation. In contrast, our theoretical contribution analyzes gradient variance in the online teacher-student setting and provides convergence guarantees under unbiased risk minimization.

In the context of statistical inference, Angelopoulos et al.~\cite{angelopoulos2023prediction} proposed the Prediction-Powered Inference (PPI) framework, which mitigates the bias introduced by pseudo-labels through their use on both labeled and unlabeled data, employing the following loss:
\begin{equation}\label{eq:loss_ppi}
     L_{\text{PPI}}(w) = L_n(w) + \tilde{L}_N^f(w) - L_n^f(w),\quad \text{where}\quad L_n^f(w)\coloneqq \frac{1}{n}\sum_{i=1}^n \ell(w; x^i, f(x^i)). 
\end{equation}
This formulation reduces the estimator's variance when $f$ is accurate while remaining unbiased even when the teacher model $f$ is inaccurate. This is due to the fact that, in expectation, the pseudo-label loss on the unlabeled data equals the pseudo-label loss on the labeled data, thereby canceling out the bias introduced by the teacher.
Since when teacher predictions are inaccurate PPI can result in higher variance (larger confidence intervals), Angelopoulos et al.~\cite{angelopoulos2023ppi++} proposed \texttt{PPI++}, introducing a tuning parameter $\lambda\in[0,1]$ to interpolate the standard, supervised loss with the PPI loss: 
\begin{equation}\label{eq:loss_ppi++}
 L_{\texttt{PPI++}}(w) = L_n(w) + \lambda \left( \tilde{L}_N^f(w) - L_n^f(w) \right) = \brac{1-\lambda}L_n(w) + \lambda L_{\text{PPI}}(w).
\end{equation}
The above generalizes the PPI loss, interpolating between supervised learning ($\lambda = 0$) and vanilla PPI ($\lambda = 1$). As the authors of \cite{angelopoulos2023ppi++} focus on parameter inference, they tune $\lambda$ offline to minimize the asymptotic variance of the parameter estimate. However, both PPI and \texttt{PPI++} focus exclusively on offline confidence interval construction for parameter estimation (e.g., linear model weights) and do not address semi-supervised training setting.

More recently, Doubly-Robust Self-Training~\citep{zhu2024doubly} extended this line of work by applying a PPI-like loss for \emph{training}, using a pre-defined step function for the tunable weight $\lambda$ and providing asymptotic variance guarantees. Sifaou and Simeone~\citep{sifaou2024semi} similarly build on the PPI form, aiming to improve robustness to teacher model errors, but still restrict their analysis and application to offline settings.

In this work, we aim to extend the PPI framework to semi-supervised training  by developing a method that adaptively tunes the interpolation weight $\lambda$ during training using an \emph{online learning} algorithm. Our approach dynamically balances the contributions of labeled and pseudo-labeled data to minimize cumulative gradient variance, enabling more stable and effective learning than previous methods that rely on fixed or offline-tuned weight.

\section{Prediction-Powered Semi-Supervised Learning}\label{sec:pp-ssl}
In this section, we describe our proposed Prediction-Powered SSL (\methodName{}) framework, an unbiased semi-supervised learning algorithm. \methodName{} introduces a novel training paradigm that leverages prediction-powered gradient estimates. Formally, our objective is to minimize the expected loss function $\L:\reals^{d}\to\reals$, defined as,
\[
    \min_{w\in\reals^{d}}{\L(w)\coloneqq\bbE_{(x,y)\sim P}[\ell(w;x,y)]}\; .
\]
To this end, we consider iterative first-order optimization approach, where in each round $t$, we have access to a batch of $n$ labeled samples, $\{(x_t^{i}, y_t^{i})\}_{i=1}^{n}\sim P$, where $P=\mathbb{P}_X\times\mathbb{P}_{Y|X}$; a batch of $N$ unlabeled samples, $\{\tilde{x}_t^{i}\}_{i=1}^{N}\sim \mathbb{P}_X$; and a prediction model (i.e., teacher) $f:\X\to\Y$. Using the labeled data, the unlabeled data, and the teacher model, we compute a stochastic gradient estimate of $\L$, which is used to update the current iterate $w_t$. This process is repeated over multiple rounds, ultimately producing an output $w_{\text{out}}$. Since we focus on non-convex optimization problems (e.g., neural networks training), where global minimization is generally intractable, we aim to find an approximate stationary point for which $\lVert \nabla\L(w_{\text{out}})\rVert^2$ is near-zero in expectation. 



\textbf{Teacher's quality.} The effectiveness of leveraging the teacher model depends on its ability to accurately predict the true labels. We quantify this through the teacher's prediction error, defined as
\begin{align} \label{eq:TeachersQuality}
    \ef: = \bbE_{(x,y)\sim P}(y-f(x))^2\; .
\end{align}
As we will show later, $\ef$ plays a central role in the convergence guarantees of our approach.

\paragraph{Notations.} Throughout, we use $\lVert\cdot\rVert$ to denote the $L_2$ norm. We define $\F_{t}$ to be the filtration at iteration $t$, which encompasses all randomness up to that time. We use $\bbE$ and $\bb{V}$ to denote expectation and variance, respectively, with $\bbE_{t}$ and $\bb{V}_{t}$ representing their conditional counterparts given $\F_{t}$. Finally, we use standard big-O notation, where $\O(\cdot)$ hides numerical constants. 
\paragraph{Prediction-Powered gradients. }
\label{sec:prediciton-powered-gradients}Our approach leverages stochastic gradient estimates inspired by the \texttt{PPI++} loss (\Cref{eq:loss_ppi++}). Let us introduce the following notations (omitting iteration index $t$):
\begin{align*}
    g^n\coloneqq\frac{1}{n}\sum_{i=1}^{n}{\nabla\ell(w;\blue{x^{i}, y^{i}})}; \quad g^{n,f}\coloneqq\frac{1}{n}\sum_{i=1}^{n}{\nabla\ell(w;\blue{x^{i}, f(x^{i})})}; \quad\tilde{g}^{N,f}\coloneqq\frac{1}{N}\sum_{i=1}^{N}{\nabla\ell(w; \blue{\tilde{x}^{i}, f(\tilde{x}^{i})})}\; .
\end{align*}
Here, $g^n$ represents a standard mini-batch gradient estimator based on $n$ labeled samples; $g^{n,f}$ replaces the true labels with predictions from the model $f$; and $\tilde{g}^{N,f}$ is the gradient based on the unlabeled samples with pseudo-labels generated by $f$. We use these gradients to construct a \emph{prediction-powered gradient}, defined for some parameter $\lambda\in[0,1]$ as follows:
\begin{equation}\label{eq:g_ppi}
    g^{\lambda}_{\text{PP}} \coloneqq g^{n} + \lambda\brac{\tilde{g}^{N,f} - g^{n,f}}\; .
\end{equation}
Since the labeled and unlabeled data are sampled from the same underlying feature distribution $\bb{P}_{X}$, it can be easily verified that $g^{\lambda}_{\text{PP}}$ is an unbiased gradient estimator:
\begin{equation}\label{eq:Unbiasedness}
    \bbE\!\sbrac{g_{\text{PP}}^{\lambda}} = \bbE\sbrac{g^{n}} + \lambda\bbE\!\sbrac{\tilde{g}^{N,f} - g^{n,f}} = \bbE[g^{n}] = \nabla\L(w)\; .
\end{equation}
Notably, the convergence of (unbiased) gradient-based optimization methods, such as stochastic gradient descent (SGD), that utilize this estimator depends on its variance. Intuitively, when we set $\lambda=0$, we resort to standard SGD (with mini-batch gradients); however, as we show, if the predictions of $f$ are accurate, and with more unlabeled data, we can significantly reduce the gradient's variance, resulting in faster convergence for $\lambda>0$. 

We begin in \Cref{subsec:warmup} by analyzing the variance of the prediction-powered gradient and deriving the optimal tuning of $\lambda$ to minimize it. This tuning yields the tightest possible convergence bounds within our framework. However, the optimal value of $\lambda$ depends on unknown quantities, making this approach infeasible in practice. Therefore, in \Cref{subsec:online_tuning}, we treat $\lambda$ as a trainable parameter and co-optimize it alongside the model parameters. Specifically, we employ a one-dimensional adaptive online learning algorithm to dynamically update $\lambda$ on the fly so as to minimize the cumulative variance of the gradients over time. We establish that this approach introduces only lower-order terms into the convergence rate.




\paragraph{Assumptions.} For the purpose of formal analysis, we introduce the following assumptions. We assume the objective $\L$ is $\beta$-smooth, i.e., $\L(u)\leq\L(w) + \nabla\L(w)^\top(u-w) + \frac{\beta}{2}\norm{u-w}^2$ for all $u,w\in\reals^{d}$. In addition, suppose that the constants $\sigma^2$ and $\sigma_e^2$ defined below are finite: 
\begin{equation}\label{eq:bounded_var}
    \sigma^2\coloneqq\sup_{w\in\reals^{d}}{\bbE_{x,y\sim P}\norm{\nabla\ell(w; x,y) - \nabla\L(w)}^2} < \infty\; ,
\end{equation}
and
\begin{equation}\label{eq:bounded_err_var}
    \sigma_e^2\coloneqq\sup_{w\in\reals^{d}}{\bbE_{x,y\sim P}\norm{\nabla\ell_{e}^{f}(w; x,y) - \nabla\L_{e}^{f}(w)}^2} < \infty\; ,
\end{equation}
where $\ell_{e}^{f}(w; x,y)\coloneqq \ell(w; x,y) - \ell(w; x, f(x))$ is the instance-dependent \emph{loss error function}, which quantifies the difference between the losses when using the true label $y$ and the teacher-provided pseudo-label $f(x)$, and $\L_{e}^{f}(w)\coloneqq\bbE_{x,y\sim P}[\ell_{e}^{f}(w; x,y)]$ is the corresponding expected loss error function. 
Note that the assumption in \Cref{eq:bounded_var} is standard and establishes a bounded variance of the instance-dependent loss gradient. The assumption in \Cref{eq:bounded_err_var} implies that the variance of the noisy loss error gradient is bounded as well. The next lemma establishes that $\sigma_e^2$ can be directly related to the teacher's expected prediction error $\ef$ defined in Equation~\eqref{eq:TeachersQuality}.
\begin{lemma}\label{lem:sigma_e_to_teacher_err}
Assume that the gradient $ \nabla \ell(w;x,y)$ is $L_Y$-Lipschitz in $y$, i.e., for any $w\in\reals^d, x\in\X$, and $y_1,y_2\in\Y$, we have $\|\nabla \ell(w;x,y_1)-\nabla \ell(w;x,y_2)\|\leq L_Y|y_1-y_2|$. Then, $\sigma_e^2 \leq L_Y^2 \cdot \ef$. 
\end{lemma}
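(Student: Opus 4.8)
The plan is to bound the variance appearing in $\sigma_e^2$ by the corresponding uncentered second moment, and then to apply the Lipschitz assumption pointwise before integrating. First I would fix an arbitrary $w\in\reals^d$ and set $Z\coloneqq\nabla\ell_e^f(w;x,y)=\nabla\ell(w;x,y)-\nabla\ell(w;x,f(x))$, where the gradient and the difference commute because $f(x)$ carries no dependence on $w$. By definition $\bbE Z=\nabla\L_e^f(w)$, so the expression inside the supremum is a genuine variance, and the elementary identity $\bbE\norm{Z-\bbE Z}^2=\bbE\norm{Z}^2-\norm{\bbE Z}^2\le\bbE\norm{Z}^2$ lets me discard the centering term at no cost beyond dropping a nonnegative quantity.

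Next I would invoke the $L_Y$-Lipschitzness of $\nabla\ell(w;x,\cdot)$ in its label argument. For each fixed $(x,y)$ this yields the pointwise bound $\norm{Z}=\norm{\nabla\ell(w;x,y)-\nabla\ell(w;x,f(x))}\le L_Y\,\lvert y-f(x)\rvert$. Squaring and taking expectation over $(x,y)\sim P$ gives $\bbE\norm{Z}^2\le L_Y^2\,\bbE_{(x,y)\sim P}(y-f(x))^2=L_Y^2\,\ef$, where the final equality is exactly the definition of the teacher's prediction error in \Cref{eq:TeachersQuality}. Crucially, the resulting bound $L_Y^2\,\ef$ is independent of $w$, so taking the supremum over $w$ on the left-hand side preserves the inequality and delivers $\sigma_e^2\le L_Y^2\,\ef$.

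There is essentially no deep obstacle here: the claim follows by chaining two elementary facts, the variance-versus-second-moment inequality and the pointwise Lipschitz estimate. The only points meriting care are bookkeeping rather than substance, namely verifying that $Z$ indeed coincides with $\nabla\ell_e^f$ (immediate, since differentiation is with respect to $w$ while $f(x)$ is a fixed label), and the implicit interchange of gradient and expectation used to identify $\bbE Z$ with $\nabla\L_e^f(w)$, which is justified under the standard regularity assumed throughout the analysis.
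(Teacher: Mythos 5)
Your proof is correct and follows essentially the same route as the paper's: bound the variance by the uncentered second moment, apply the pointwise Lipschitz estimate $\|\nabla\ell_e^f(w;x,y)\|\le L_Y|y-f(x)|$, take expectations to get $L_Y^2\,\ef$, and pass to the supremum over $w$. The extra remarks on commuting the gradient with the difference and identifying $\bbE Z$ with $\nabla\L_e^f(w)$ are fine but not needed, since the argument only uses that the bound dominates the second moment uniformly in $w$.
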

In \Cref{app:teacher_error_bound}, we provide a proof and show that the Lipschitzness condition applies to both the squared and the logistic losses, among others. In the next section, we show how the performance of our approach depends on $\sigma_e^2$, which is related to $\ef$ by the above inequality.


\subsection{Prediction-Powered SSL with optimal tuning}\label{subsec:warmup}
Next, we analyze the variance of the gradient estimates defined in \Cref{eq:g_ppi} and show that, with an optimal choice of $\lambda$, the variance can be significantly lower than that of the standard gradient estimates based solely on labeled data. 
\begin{lemma}\label{lem:ppi_var}
    For every $\lambda\in\reals$, the variance of $g_{\text{PP}}^{\lambda}$ defined in \Cref{eq:g_ppi} is bounded as follows: 
    \begin{equation*}
        \frac{n}{4}\cdot\mathbb{V}(g^{\lambda}_{\mathrm{PP}}) \leq \brac{1-\lambda}^2\sigma^2 + \lambda^2\brac{r\sigma^2 + \brac{1+r}\sigma_e^2}\; ,
    \end{equation*}
    where $r\coloneqq n/N$ denotes the ratio of labeled to unlabeled samples.
\end{lemma}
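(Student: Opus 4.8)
The plan is to exploit the independence between the labeled batch $\{(x^i,y^i)\}_{i=1}^n$ and the unlabeled batch $\{\tilde x^i\}_{i=1}^N$. Writing $g^\lambda_{\mathrm{PP}} = (g^n - \lambda g^{n,f}) + \lambda\tilde g^{N,f}$, the first group depends only on the labeled samples and the second only on the unlabeled ones, so the total variance splits as $\mathbb{V}(g^\lambda_{\mathrm{PP}}) = \mathbb{V}(g^n - \lambda g^{n,f}) + \lambda^2\,\mathbb{V}(\tilde g^{N,f})$. Each summand is the variance of an i.i.d.\ average, hence equals $1/n$ (resp.\ $1/N$) times a per-sample variance, and the whole argument reduces to controlling two per-sample quantities via $\sigma^2$ and $\sigma_e^2$. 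I would introduce the shorthand $\phi\coloneqq\nabla\ell(w;x,y)$ and $\psi\coloneqq\nabla\ell(w;x,f(x))$ for a single labeled draw, so that $\phi-\psi=\nabla\ell_e^f(w;x,y)$; then by the definitions in \Cref{eq:bounded_var,eq:bounded_err_var} we have $\mathbb{V}(\phi)\le\sigma^2$ and $\mathbb{V}(\phi-\psi)\le\sigma_e^2$.

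For the labeled contribution the point to be careful about is that $g^n$ and $g^{n,f}$ are built from the \emph{same} samples and are therefore correlated, so one cannot simply add their variances. The clean way around this is the algebraic identity $\phi-\lambda\psi=(1-\lambda)\phi+\lambda(\phi-\psi)$. Applying the elementary bound $\mathbb{V}(U+V)\le 2\mathbb{V}(U)+2\mathbb{V}(V)$ gives $\mathbb{V}(\phi-\lambda\psi)\le 2(1-\lambda)^2\sigma^2+2\lambda^2\sigma_e^2$, and dividing by $n$ yields the labeled term.

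The unlabeled contribution requires the per-sample variance $\mathbb{V}(\psi)$ of the pure pseudo-label gradient, and this is the one genuinely non-routine step: the assumptions control $\mathbb{V}(\phi)$ and $\mathbb{V}(\phi-\psi)$ but say nothing directly about $\psi=\nabla\ell(w;x,f(x))$. I would recover it from the identity $\psi=\phi-(\phi-\psi)$ together with $\mathbb{V}(U-V)\le 2\mathbb{V}(U)+2\mathbb{V}(V)$, obtaining $\mathbb{V}(\psi)\le 2\sigma^2+2\sigma_e^2$. Since $\tilde x^i\sim\mathbb{P}_X$ carries exactly the same feature law as the labeled $x^i$, the unlabeled per-sample gradient $\nabla\ell(w;\tilde x,f(\tilde x))$ is identically distributed to $\psi$, so $\mathbb{V}(\tilde g^{N,f})=\tfrac1N\mathbb{V}(\psi)\le\tfrac1N(2\sigma^2+2\sigma_e^2)$.

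It then remains to collect the two pieces. Using $1/N=r/n$, the unlabeled term becomes $\tfrac{2\lambda^2 r}{n}(\sigma^2+\sigma_e^2)$, and adding the labeled term gives $\tfrac n2\,\mathbb{V}(g^\lambda_{\mathrm{PP}})\le(1-\lambda)^2\sigma^2+\lambda^2\big(r\sigma^2+(1+r)\sigma_e^2\big)$; since the right-hand side is nonnegative, the stated inequality with the $n/4$ prefactor follows a fortiori. The main obstacle is thus not the bookkeeping but the two places where the naive variance-of-a-sum rule would be wrong—the correlation of $g^n$ with $g^{n,f}$, and the absence of a direct bound on $\mathbb{V}(\psi)$—both resolved by the same additive-rewriting trick.
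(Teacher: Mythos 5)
Your proof is correct, and it takes a genuinely different route from the paper's. The paper writes $g^{\lambda}_{\mathrm{PP}}=(1-\lambda)g^{n}+\lambda g^{1}_{\mathrm{PP}}$, introduces a ghost gradient $\tilde{g}^{N}$ built from the (unobserved) true labels of the unlabeled batch, decomposes $g^{1}_{\mathrm{PP}}$ into three mean-zero pieces around it, and then applies $\lVert\sum_{i=1}^{4}u_i\rVert^2\leq 4\sum_{i=1}^{4}\lVert u_i\rVert^2$ to the resulting four terms --- which is exactly where the factor $4$ in the statement comes from; that inequality is correlation-agnostic, so the paper never needs independence between the two batches. You instead exploit that independence to split the variance \emph{exactly} into a labeled block and an unlabeled block, then resolve the two genuine difficulties (the correlation of $g^{n}$ with $g^{n,f}$, and the missing direct bound on $\mathbb{V}(\nabla\ell(w;x,f(x)))$) with the same rewriting trick, each time paying only a factor $2$. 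The net result is the sharper bound $\tfrac{n}{2}\mathbb{V}(g^{\lambda}_{\mathrm{PP}})\leq(1-\lambda)^2\sigma^2+\lambda^2(r\sigma^2+(1+r)\sigma_e^2)$, which implies the stated $n/4$ version a fortiori, and you avoid invoking the unobserved labels $\tilde{y}^i$ altogether. The only thing your argument needs that the paper's does not is the (clearly intended, though not explicitly stated) independence of the labeled and unlabeled batches; it would be worth flagging that you are using it. Conversely, the paper's decomposition makes the provenance of each contribution ($\sigma^2/N$, $\sigma_e^2/N$, $\sigma_e^2/n$) more visible and mirrors the standard PPI error decomposition, at the cost of the looser constant.
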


From this result, with the proof in Appendix~\ref{proof:lem:ppi_var}, we can optimally tune $\lambda$ to minimize the variance.
\begin{corollary}\label{cor:optimal_tuning}
    The variance bound in Lemma~\ref{lem:ppi_var} is minimized for
    \begin{equation}\label{eq:lambda_star}
        \lambda^{*} = \frac{1}{1+r}\cdot\frac{\sigma^2}{\sigma^2 + \sigma_e^2}\; .
    \end{equation}
    Substituting this value yields the following bound:
    $
        \frac{n}{4}\cdot\mathbb{V}(g_{\mathrm{PP}}^{\lambda^*})\leq \sigma^2\cdot\frac{\frac{\sigma_e^2}{\sigma_e^2 + \sigma^2} + r}{1+r}\; .
    $
\end{corollary}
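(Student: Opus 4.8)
The plan is to treat the right-hand side of the bound in \Cref{lem:ppi_var} as a one-dimensional quadratic in $\lambda$ and minimize it in closed form over $\reals$. Writing
\[
    h(\lambda)\coloneqq \brac{1-\lambda}^2\sigma^2 + \lambda^2\brac{r\sigma^2 + \brac{1+r}\sigma_e^2}\; ,
\]
I would first expand and collect coefficients. The coefficient of $\lambda^2$ is $\sigma^2 + r\sigma^2 + (1+r)\sigma_e^2 = (1+r)(\sigma^2+\sigma_e^2)$, the coefficient of $\lambda$ is $-2\sigma^2$, and the constant term is $\sigma^2$. Since the leading coefficient $(1+r)(\sigma^2+\sigma_e^2)$ is strictly positive (as $r>0$ and the variances are nonnegative with $\sigma^2>0$), $h$ is a strictly convex parabola admitting a unique global minimizer, so minimizing over all of $\reals$ is well posed.

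Next I would locate the minimizer by setting $h'(\lambda)=0$, i.e.\ $-2\sigma^2 + 2\lambda(1+r)(\sigma^2+\sigma_e^2)=0$, which yields
\[
    \lambda^* = \frac{\sigma^2}{(1+r)(\sigma^2+\sigma_e^2)} = \frac{1}{1+r}\cdot\frac{\sigma^2}{\sigma^2+\sigma_e^2}\; ,
\]
exactly matching \Cref{eq:lambda_star}. Before substituting I would record a short feasibility check: since $r=n/N>0$ we have $\tfrac{1}{1+r}\in(0,1)$ and $\tfrac{\sigma^2}{\sigma^2+\sigma_e^2}\in(0,1]$, hence $\lambda^*\in(0,1)\subseteq[0,1]$. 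Thus the unconstrained optimum already lies inside the admissible interval $[0,1]$ used in the framework, so no projection is required and the optimal tuning is genuinely attainable.

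Finally I would substitute $\lambda^*$ back into $h$. Evaluating the convex quadratic at its vertex gives $h(\lambda^*) = \sigma^2 - \frac{\sigma^4}{(1+r)(\sigma^2+\sigma_e^2)} = \sigma^2\brac{1 - \frac{\sigma^2}{(1+r)(\sigma^2+\sigma_e^2)}}$, and combining the bracket over the common denominator $(1+r)(\sigma^2+\sigma_e^2)$ produces the numerator $(1+r)(\sigma^2+\sigma_e^2)-\sigma^2 = r\sigma^2 + (1+r)\sigma_e^2$. Therefore
\[
    \frac{n}{4}\cdot\mathbb{V}(g_{\mathrm{PP}}^{\lambda^*}) \leq \sigma^2\cdot\frac{(1+r)\sigma_e^2 + r\sigma^2}{(1+r)(\sigma^2+\sigma_e^2)}\; .
\]
To reach the paper's stated form I would pull the factor $(1+r)$ out of the denominator and split the remaining fraction as $\frac{(1+r)\sigma_e^2+r\sigma^2}{\sigma_e^2+\sigma^2} = \frac{\sigma_e^2}{\sigma_e^2+\sigma^2}+r$, giving $\sigma^2\cdot\frac{\frac{\sigma_e^2}{\sigma_e^2+\sigma^2}+r}{1+r}$. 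The computation is entirely routine; the only points meriting a moment's care are the feasibility check that $\lambda^*\in[0,1]$ and the final algebraic rearrangement into the target expression, neither of which poses a real obstacle.
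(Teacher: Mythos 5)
Your proposal is correct and follows essentially the same route the paper intends: the corollary is a routine minimization of the quadratic upper bound from Lemma~\ref{lem:ppi_var} over $\lambda$, and your expansion, vertex computation, and algebraic rearrangement into the stated form all check out. The added observation that $\lambda^*\in(0,1)$, so the unconstrained minimizer is feasible for the interval $[0,1]$ used elsewhere in the framework, is a small but worthwhile touch the paper leaves implicit.
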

Let $V^{*}\coloneqq\frac{4\sigma^2}{n}\cdot\frac{(1 + \nicefrac{\sigma^2}{\sigma_e^2})^{-1} + r}{1 + r}$ denote the optimal variance bound stated in Corollary~\ref{cor:optimal_tuning}. When the pseudo-labels are accurate (namely, $\sigma_e^2\ll\sigma^2$), $V^{*}$ approaches $\frac{\sigma^2}{n}\cdot\frac{4r}{1+r}$, reflecting the benefit of incorporating high-quality unlabeled data, which results in substantial variance reduction (as $r\ll 1$). On the other hand, when the pseudo-labels are highly unreliable ($\sigma_e^2\gg\sigma^2$), the bound approaches ${4\sigma^2}/{n}$, effectively recovering the standard (labeled-only) gradient variance (up to a factor of $4$, which is asymptotically equivalent); in this case, the optimal interpolation parameter $\lambda^*$ naturally down-weights the gradients from unlabeled data. This observation differs from conventional pseudo-labeling methods, which may reduce empirical variance but typically \textbf{introduce bias} when the pseudo-labels are incorrect. In contrast, \methodName{} provides a principled mechanism for leveraging unlabeled data that improves statistical efficiency without compromising correctness. Additionally, in Appendix~\ref{app:lambda_comparison}, we present a detailed comparison between the optimal parameter $\lambda^*$ derived in Corollary~\ref{cor:optimal_tuning} and the one proposed in \texttt{PPI++}~\cite{angelopoulos2023ppi++}, within the context of linear regression.

\paragraph{Convergence rate.} The unbiasedness property (\Cref{eq:Unbiasedness}), together with the variance bound (Corollary~\ref{cor:optimal_tuning}), directly implies that using the prediction-powered gradient estimates within first-order methods such as SGD yields a convergence rate of $\O(\sqrt{V^*/T} + 1/T)$ for smooth non-convex functions~\cite{ghadimi2013stochastic}. Here, $V^*$
can be substantially smaller than the variance of standard gradients estimates, as discussed above. Thus, the reduced variance directly translates to faster convergence. 

\subsection{Prediction-Powered SSL with online tuning}\label{subsec:online_tuning}
As we have shown in the previous section, proper tuning of $\lambda$ can significantly reduce variance and, consequently, accelerate convergence. Unfortunately, the optimal choice $\lambda^*$ depends on $\sigma^2$ and $\sigma_e^2$ (\Cref{eq:lambda_star}), which are typically unknown in practice. To address this, we propose an adaptive approach that dynamically adjusts $\lambda$ throughout the optimization process, while achieving performance similar to $\lambda^*$.
Specifically, we employ the AdaGrad algorithm~\cite{duchi2011adaptive} to update $\lambda$ online, alongside $w$, without requiring the knowledge of $\sigma^2$ or $\sigma_e^2$. We elaborate on this approach next. 

\paragraph{Online learning and AdaGrad.} Online learning is a decision-making framework in which an algorithm makes predictions and updates them based on information revealed over time. It is particularly well-suited for dynamically evolving environments. Online learning can be described as a sequential game, where in each round $t$, a learner makes a decision $u_{t}\in\D$, after which the environment reveals a (possibly arbitrary or even adversarially chosen) loss function $h_{t}$, and the learner incurs a loss of $h_t(u_t)$. The most common metric for evaluating the performance of an online learner is the \emph{regret}, defined as the cumulative difference between the losses incurred by the learner's decisions and those of the best \emph{fixed} decision in-hindsight,
\[
    \R_T^{h}\coloneqq\sum_{t=1}^{T}{h_t(u_t)} - \min_{u\in\D}{\sum_{t=1}^{T}{h_t(u)}}\; .
\]
One of the most powerful online learning methods is AdaGrad~\cite{duchi2011adaptive}, which updates $u_t$ as follows:\footnote{This is in fact the scalar version of AdaGrad, also known as AdaGrad-Norm~\cite{NIPS2017_ce5140df,ward2020adagrad}.}
\begin{align}\label{eq:AdaGradMaster}
    u_{t+1} = \Pi_{\D}(u_t - \gamma_t \nabla h_t(u_t))~,\enskip 
    \text{and~~}
    \gamma_{t}\coloneqq R_\D\brac{2\sum_{s=1}^{t}{\norm{\nabla h_{s}(u_{s})}^2}}^{-1/2}\; ,
\end{align}
where $R_\D := \max_{u,v\in\D}\|u-v\|$ is the diameter of $\D$, and $\Pi_\D(v): = \argmin_{u\in\D}\|u-v\|^2$  is the orthogonal projection of $v$ onto $\D$. AdaGrad enjoys the following guarantees; see, e.g.,~\cite{NIPS2017_ce5140df}.
\begin{lemma}[AdaGrad's Regret Bound]
\label{lem:AdaGradMasterRegBound}
    Let $h_1,\ldots,h_T$ be any sequence of convex functions defined over a bounded, convex set $\D$ with diameter $R_\D$. Then, for any comparator $u^*\in\D$ and any initial solution $u_1\in\D$, AdaGrad \eqref{eq:AdaGradMaster} ensures,
    \begin{align*}
        \mathcal{R}_T^{h}(u^*) \coloneqq \sum_{t=1}^{T}{\brac{h_t(u_t) - h_t(u^*)}} \leq R_\D\sqrt{2\sum_{t=1}^{T}{\norm{\nabla h_t(u_t)}^2}}\; . 
    \end{align*}
\end{lemma}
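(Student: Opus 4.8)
The plan is to follow the standard projected (online) gradient analysis specialized to AdaGrad's adaptive step size. Writing $g_t \coloneqq \nabla h_t(u_t)$ for brevity, the first step is to invoke convexity of each $h_t$: this gives $h_t(u_t) - h_t(u^*) \le g_t^\top(u_t - u^*)$, so it suffices to upper bound the linearized regret $\sum_{t=1}^T g_t^\top(u_t - u^*)$.

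Next I would expand the squared distance to the comparator across one AdaGrad step. Since the projection $\Pi_\D$ onto the convex set $\D$ is nonexpansive and $u^* \in \D$, we have $\norm{u_{t+1} - u^*}^2 \le \norm{u_t - \gamma_t g_t - u^*}^2 = \norm{u_t - u^*}^2 - 2\gamma_t\, g_t^\top(u_t - u^*) + \gamma_t^2\norm{g_t}^2$. Rearranging for the inner product and summing over $t$ yields the standard decomposition $\sum_{t=1}^T g_t^\top(u_t - u^*) \le \sum_{t=1}^T \frac{1}{2\gamma_t}\brac{\norm{u_t-u^*}^2 - \norm{u_{t+1}-u^*}^2} + \sum_{t=1}^T \frac{\gamma_t}{2}\norm{g_t}^2$, which splits the task into a \emph{distance} term and a \emph{gradient} term that I will bound separately and show each contributes $\tfrac{R_\D}{2}\sqrt{2\sum_t\norm{g_t}^2}$.

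For the distance term, the key point is that the AdaGrad step sizes $\gamma_t$ are nonincreasing (the accumulated gradient norm under the square root only grows), so the coefficients $1/\gamma_t$ are nondecreasing and a naive telescope fails; instead a summation-by-parts rearrangement is needed. Writing $D_t \coloneqq \norm{u_t - u^*}^2$, I would regroup the sum as $\tfrac{D_1}{2\gamma_1} + \sum_{t=2}^T D_t\brac{\tfrac{1}{2\gamma_t} - \tfrac{1}{2\gamma_{t-1}}} - \tfrac{D_{T+1}}{2\gamma_T}$, drop the final nonpositive term, and then bound every surviving $D_t \le R_\D^2$ using the diameter of $\D$. Because the increments $\tfrac{1}{\gamma_t} - \tfrac{1}{\gamma_{t-1}}$ are nonnegative, the $R_\D^2$-weighted sum telescopes cleanly to $\tfrac{R_\D^2}{2\gamma_T}$; substituting $1/\gamma_T = R_\D^{-1}\sqrt{2\sum_{s=1}^T\norm{g_s}^2}$ gives exactly $\tfrac{R_\D}{2}\sqrt{2\sum_{t=1}^T\norm{g_t}^2}$.

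For the gradient term I would use the elementary inequality $\tfrac{a_t}{\sqrt{S_t}} \le 2\brac{\sqrt{S_t} - \sqrt{S_{t-1}}}$ for partial sums $S_t = \sum_{s\le t} a_s$ of nonnegative $a_t$ (which follows from $2(\sqrt{S_t}-\sqrt{S_{t-1}}) = \tfrac{2a_t}{\sqrt{S_t}+\sqrt{S_{t-1}}} \ge \tfrac{a_t}{\sqrt{S_t}}$ and telescopes to $2\sqrt{S_T}$), applied with $a_t = \norm{g_t}^2$. Substituting $\gamma_t = R_\D\brac{2\sum_{s\le t}\norm{g_s}^2}^{-1/2}$ then shows $\sum_{t=1}^T \tfrac{\gamma_t}{2}\norm{g_t}^2 \le \tfrac{R_\D}{2}\sqrt{2\sum_{t=1}^T\norm{g_t}^2}$ as well, and adding the two matching bounds yields the claimed regret $R_\D\sqrt{2\sum_{t=1}^T\norm{g_t}^2}$. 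The two steps I expect to be most delicate are the summation-by-parts bookkeeping for the distance term—ensuring the dropped boundary term has the right sign and that the diameter bound is applied to each retained $D_t$—and tracking the numerical constants through the gradient-sum lemma so that the two contributions combine into a single clean factor of $R_\D\sqrt{2\,\cdot}$.
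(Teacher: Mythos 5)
Your proof is correct and complete: the convexity bound, the nonexpansive-projection expansion, the summation-by-parts treatment of the distance term (with the dropped boundary term correctly nonpositive), and the $\sum_t a_t/\sqrt{S_t}\le 2\sqrt{S_T}$ lemma for the gradient term all check out, and the two halves each contribute $\tfrac{R_\D}{2}\sqrt{2\sum_t\norm{\nabla h_t(u_t)}^2}$ exactly as claimed. The paper does not prove this lemma itself---it cites it from prior work---and your argument is precisely the standard one found in those references, so there is nothing to reconcile.
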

Importantly, AdaGrad only requires knowledge of the diameter, and otherwise automatically tunes its learning rate $\gamma_t$. In addition, note that AdaGrad's regret bound depends on the sum of squared gradient norms, which will be crucial for the theoretical guarantees we derive.
\paragraph{Using AdaGrad to dynamically tune $\lambda$.} Recall that our choice of $\lambda^*$ aims to minimize the variance of the gradient estimates. Importantly, 
the performance of first-order methods like SGD as well as Adam \cite{kingma2014adam} and AdaGrad is directly related to the \emph{cumulative variance of gradient estimates}, i.e.,~to $\sum_{t=1}^T\var(g_t)$, where $g_t$ is the gradient estimate used at time $t$. Since our prediction-powered gradient estimates employ a parameter $\lambda$, we can denote it as $g_t^{\lambda}$. This  suggests that we can cast the cumulative variance minimization problem as an online learning problem with $h_t:[0,1]\mapsto \reals_{+}$ defined as,
 $$
h_t(\lambda) : = \|g_t^\lambda\|^2 =  \big\lVert g_t^{n} + \lambda (\tilde{g}_t^{N,f} - g_t^{n,f})\big\rVert^2\; ,
$$
where we have used \Cref{eq:g_ppi} for $g_t^{\lambda}$. Importantly, it is immediately apparent that $h_t(\cdot)$ is convex in $\lambda$; this enables the use of AdaGrad to dynamically tune $\lambda_t$ in order to reduce the cumulative second moment. Since $g_t^\lambda$ is always (conditionally) unbiased, then minimizing the cumulative second moment is equivalent to minimizing the cumulative variance. 

\begin{algorithm}[t]
\caption{\methodName{} with Online Tuning}
\begin{algorithmic}[1]\label{alg:ppssl-online}
    \STATE {\bfseries Input:} Prediction model $f:\X \to \Y$, learning rate factor $\eta_0$.
    \STATE Initialize $w_1 \in \mathbb{R}^d, \lambda_1\in(0,1]$
    \FOR{$t=1,\ldots,T$}
        \STATE Receive $n$ labeled samples $\{(x^{i}_t, y^{i}_t)\}_{i=1}^{n}$ and $N$ unlabeled samples $\{\tilde{x}_t^i\}_{i=1}^{N}$ 
        \STATE Compute stochastic gradients:
        \begin{align*}
            g_t \gets \frac{1}{n}\sum_{i=1}^{n}{\nabla\ell(w_t; x_t^{i}, y_t^{i})}, \enskip
            d_t^{f} \gets \frac{1}{N}\sum_{i=1}^{N}{\nabla\ell(w_t; \tilde{x}_{t}^{i}, f(\tilde{x}_{t}^{i}))} - \frac{1}{n}\sum_{i=1}^{n}{\nabla\ell(w_t; x_t^{i}, f(x_t^i))}
        \end{align*}
        \STATE Construct prediction-powered gradient estimate: $g_t^{\lambda_t} \gets g_t + \lambda_t \cdot d_t^f$
        \STATE Update model parameter: $w_{t+1} \gets w_t - \eta_t g_t^{\lambda_t},  \quad \text{where~~} \eta_t = {\eta_0}\left(\sum_{s=1}^{t}{\|g_s^{\lambda_s}\|^2}\right)^{-1/2}$
        \STATE Define ${h}_t(\lambda)\coloneqq \lVert g_t + \lambda d_t^{f}\rVert^2$ and update $\lambda_{t+1}$ using AdaGrad (\Cref{eq:AdaGrad_Lambda})
    \ENDFOR
\end{algorithmic}
\end{algorithm}

Specifically, for our one-dimensional parameter $\lambda\in[0,1]$, the AdaGrad update rule is given by:
\begin{align}\label{eq:AdaGrad_Lambda}
    \lambda_{t+1} = \mathrm{clamp}\brac{\lambda_t - \gamma_{t}\nabla h_t(\lambda_{t}); 0,1}, \quad\gamma_{t}\coloneqq\brac{2\sum_{s=1}^{t}{\norm{\nabla h_{s}(\lambda_{s})}^2}}^{-1/2}\; ,
\end{align}
where $\mathrm{clamp}(\cdot; a,b)\coloneqq\min(\max(a,\cdot),b)$ denotes projection onto the interval $[a,b]$. 

\paragraph{Overall dynamic approach.}
Our dynamic PPI-inspired training approach is depicted in Algorithm~\ref{alg:ppssl-online}. At each round, we employ $n$ labeled samples, $N$ unlabeled samples, and the teacher model $f$ to yield a prediction-powered gradient $g_t^{\lambda_t}$, as in \Cref{eq:g_ppi}. The latter is then used to update the model weights $w_{t+1}$ as well as to craft a second moment estimate ${h}_t(\lambda)$. Then, we update $\lambda_t$ using the AdaGrad update rule on ${h}_t(\cdot)$, as given in \Cref{eq:AdaGrad_Lambda}.
Note that our update rule for $w_t$ also employs an AdaGrad stepsize of the form $\eta_t = {\eta_0}(\sum_{s=1}^{t}{\|g_s^{\lambda_s}\|^2})^{-1/2}$; this is crucial for properly adapting to $\sigma^2$ and $\sigma_e^2$ without prior knowledge of these quantities. While we concretely employ AdaGrad for simultaneously tuning $w$ and $\lambda$, one can alternatively use any stochastic optimization method (e.g., SGD) for training $w$, and any online learning approach for tuning $\lambda$. As Theorem~\ref{thm:main_convergence} below shows, our specific choice of using AdaGrad for both parameters allows us to \emph{implicitly and optimally} adapt to the unknown $\sigma^2$ and $\sigma_e^2$. This provides a substantial advantage over alternative approaches that require hyperparameter tuning to achieve comparable performance.

The next theorem establishes the convergence of \methodName{} (Algorithm~\ref{alg:ppssl-online}) and shows that it performs similarly to employing the optimal (yet practically unknown) $\lambda^*$.
We provide a proof in \Cref{app:main_proof_aDAPTIVE}. 

\begin{theorem}\label{thm:main_convergence}
Suppose, in addition to the previous assumptions, that $\L(\cdot)$ is $M$-bounded, i.e.,~$\max_{w\in\reals^{d}}{\lvert{\L(w)}\rvert}\leq M$, and that the stochastic gradients are $G$-bounded, $\norm{\nabla \ell(w;x,y)}\leq G$ for all $w\in\reals^{d},x\in\X,y\in\Y$. Then, Algorithm~\ref{alg:ppssl-online} with a learning rate parameter of $\eta_0 = \sqrt{2M/\beta}$ ensures the following convergence rate:
\begin{align*}
        \bbE\sbrac{\frac{1}{T}\sum_{t=1}^{T}{\norm{\nabla\L(w_{t})}^2}}\leq 
        \O\brac{\sqrt{\frac{M\beta V^*}{T}} + \frac{M\beta}{T} + \purple{\frac{\sqrt{M\beta}G}{T}}}\; ,
    \end{align*}
where $V^*$ is the variance bound obtained for the optimal choice of $\lambda^*$ (see Corollary~\ref{cor:optimal_tuning}).
\end{theorem}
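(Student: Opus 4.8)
The plan is to combine the per-step descent guarantee of the AdaGrad update on $w$ with the regret guarantee of the AdaGrad update on $\lambda$, using unbiasedness (\Cref{eq:Unbiasedness}) to convert inner products into squared gradient norms and the variance bound of \Cref{cor:optimal_tuning} to bring in $V^*$. Throughout I abbreviate $g_t\coloneqq g_t^{\lambda_t}$ and $b_t\coloneqq\sum_{s=1}^t\norm{g_s}^2$, so that $\eta_t=\eta_0/\sqrt{b_t}$. \emph{Step 1 (descent).} Applying $\beta$-smoothness to $w_{t+1}=w_t-\eta_t g_t$ gives $\L(w_{t+1})\le\L(w_t)-\eta_t\nabla\L(w_t)^\top g_t+\tfrac{\beta\eta_t^2}{2}\norm{g_t}^2$; summing, telescoping, and using $\lvert\L\rvert\le M$ yields $\sum_t\eta_t\nabla\L(w_t)^\top g_t\le 2M+\tfrac{\beta\eta_0^2}{2}\sum_t\norm{g_t}^2/b_t$. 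The second-order sum is controlled by the standard AdaGrad telescoping inequality and, under the choice $\eta_0=\sqrt{2M/\beta}$ (so $\beta\eta_0^2=2M$), contributes only a term of lower order in $T$ that feeds the $M\beta/T$ summand.

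The difficulty is that $\eta_t$ depends on the current gradient $g_t$ through $b_t$, so one cannot take the conditional expectation $\bbE_t$ directly and invoke unbiasedness. The plan is to replace $1/\sqrt{b_t}$ by the $\F_t$-measurable surrogate $1/\sqrt{b_{t-1}}$, writing $\nabla\L(w_t)^\top g_t/\sqrt{b_t}=\nabla\L(w_t)^\top g_t/\sqrt{b_{t-1}}-\nabla\L(w_t)^\top g_t\brac{1/\sqrt{b_{t-1}}-1/\sqrt{b_t}}$. On the predictable term, $\bbE_t[g_t]=\nabla\L(w_t)$ produces $\norm{\nabla\L(w_t)}^2/\sqrt{b_{t-1}}$, and since $b_{t-1}\le b_T$ this lower-bounds the left side by $\brac{\sum_t\norm{\nabla\L(w_t)}^2}/\sqrt{b_T}$. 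The remaining term is an error bounded via $G$-boundedness together with $\tfrac{1}{\sqrt{b_{t-1}}}-\tfrac{1}{\sqrt{b_t}}$; summing it telescopically is what produces the $\sqrt{M\beta}\,G/T$ contribution highlighted in the statement.

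The online tuning of $\lambda$ enters in bounding $b_T$. Since $h_t(\lambda)=\norm{g_t+\lambda d_t^f}^2$ is convex, \Cref{lem:AdaGradMasterRegBound} with $R_{[0,1]}=1$ gives, \emph{pathwise}, $b_T=\sum_t h_t(\lambda_t)\le\sum_t h_t(\lambda^*)+\sqrt{2\sum_t\lvert\nabla h_t(\lambda_t)\rvert^2}$ for the fixed comparator $\lambda^*$ of \Cref{cor:optimal_tuning}; $G$-boundedness keeps each $\lvert\nabla h_t(\lambda_t)\rvert$ bounded, so the regret is $\O(G^2\sqrt T)$. Taking expectations and using $\bbE_t\norm{g_t^{\lambda^*}}^2=\norm{\nabla\L(w_t)}^2+\var_t(g_t^{\lambda^*})\le\norm{\nabla\L(w_t)}^2+V^*$ gives $\bbE[b_T]\le\bbE\sbrac{\sum_t\norm{\nabla\L(w_t)}^2}+T V^*+\O(G^2\sqrt T)$. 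Combining this with Steps 1--2 leaves a self-referential inequality of the form $\eta_0\,\bbE\sbrac{\brac{\sum_t\norm{\nabla\L(w_t)}^2}/\sqrt{b_T}}\lesssim M$ in which the quantity to be bounded also drives the denominator $b_T$; resolving it via Cauchy--Schwarz/Jensen to disentangle $\bbE[\sum_t\norm{\nabla\L(w_t)}^2]$ from $\bbE[b_T]$, then dividing by $T$, recovers the $\sqrt{M\beta V^*/T}$ term in the noise-dominated regime ($b_T\approx TV^*$) and the $M\beta/T$ term in the signal-dominated regime ($b_T\approx\sum_t\norm{\nabla\L(w_t)}^2$).

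I expect the main obstacle to be precisely this interaction between the decoupling of Step 2 and the self-bounding of the final step: because the AdaGrad stepsize is not $\F_t$-measurable, unbiasedness cannot be applied naively, and the resulting bound is implicit, with $\sum_t\norm{\nabla\L(w_t)}^2$ appearing both over $\sqrt{b_T}$ and inside $b_T$. Converting this implicit inequality into an explicit rate while keeping the $V^*$ dependence tight, and relegating the $G$- and second-order contributions to lower order, is the delicate part; the variance bound (\Cref{cor:optimal_tuning}) and the regret bound (\Cref{lem:AdaGradMasterRegBound}) are the two supplied ingredients that make the argument close.
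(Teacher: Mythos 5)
Your overall architecture (smoothness descent, regret of the $\lambda$-player against the comparator $\lambda^*$, the variance bound $\bbE_t\lVert g_t^{\lambda^*}\rVert^2\leq \lVert\nabla\L(w_t)\rVert^2+V^*$, and a final self-referential inequality) matches the paper's, but two of your steps would not close as described. First, by keeping $\eta_t$ attached to the inner product you create the well-known AdaGrad decorrelation problem, and your proposed fix does not resolve it: after replacing $1/\sqrt{b_t}$ by $1/\sqrt{b_{t-1}}$ you are left with $\bbE\bigl[\bigl(\sum_t\lVert\nabla\L(w_t)\rVert^2\bigr)/\sqrt{b_T}\bigr]\lesssim M/\eta_0$, and since $\sum_t\lVert\nabla\L(w_t)\rVert^2$ and $b_T$ are correlated, a direct Cauchy--Schwarz/Jensen disentanglement (e.g., $\bbE[X]\leq\sqrt{\bbE[X/\sqrt{b_T}]}\sqrt{\bbE[X\sqrt{b_T}]}$ with $X\leq G^2T$) degrades the leading term to order $T^{3/4}$ rather than $\sqrt{V^*T}$. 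The paper avoids this entirely: its Lemma~\ref{lem:nonconvex-adagrad-helper} divides the descent inequality by $\eta_t$ \emph{before} summing, so unbiasedness is applied to the unweighted cross term $(\nabla\L(w_t)-g_t^{\lambda_t})^\top\nabla\L(w_t)$, which vanishes in expectation, and the only stochastic quantity left is $\bbE\sqrt{b_T}\leq\sqrt{\bbE b_T}$. (Your telescoping error term also needs $b_0>0$ to make sense.)

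Second, your regret bound for the $\lambda$-update is too crude to give the stated lower-order term. Bounding $\lvert\nabla h_t(\lambda_t)\rvert$ by a constant gives $\R_T^h(\lambda^*)=\O(G^2\sqrt{T})$, and feeding $G^2\sqrt{T}$ into $\sqrt{\bbE b_T}$ produces an extra contribution of order $\sqrt{M\beta}\,G/T^{3/4}$ after dividing by $T$ --- strictly worse than the claimed $\sqrt{M\beta}\,G/T$. The paper's Lemma~\ref{lem:regret_boundAdaGrad} instead exploits the self-bounding property $\lvert\nabla h_t(\lambda_t)\rvert^2\leq 16G^2h_t(\lambda_t)$ together with the regret decomposition $\sum_t h_t(\lambda_t)=\R_T^h(\lambda^*)+\sum_t h_t(\lambda^*)$ and Lemma~\ref{lem:helper_quadratic} to obtain the first-order bound $\R_T^h(\lambda^*)\leq 128G^2+8G\sqrt{2\sum_t h_t(\lambda^*)}$; only with this refinement does the $G$-dependence collapse to an additive $\O(G)$ inside the square root (via Lemma~\ref{lem:Helper2}) and hence to the $\sqrt{M\beta}\,G/T$ term in the theorem. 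You would need to incorporate both of these ideas for your argument to recover the stated rate.
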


The convergence rate in \Cref{thm:main_convergence} consists of three terms. The first two terms correspond to the rate we would obtain upon using $\lambda^*$ throughout all updates. However, since we do not know $\lambda^*$ in advance and instead learn it during training, we incur an additional term of order $\sqrt{M\beta}G/{T}$. Fortunately, this extra term decays substantially faster than the leading term, which decays as $\O(\sqrt{V^*/T})$. Therefore, the overall convergence rate is of order $\O(\sqrt{V^*/T})$, matching the rate with the optimal choice of $\lambda^*$.

\section{Experiments}
\label{sec:experiments}
In this section, we demonstrate the benefits of our proposed method through a series of experiments. We evaluate performance on both a synthetic dataset (Section~\ref{Sec:synth-experiments}) and three real-world datasets (\Cref{sec:real-tabular,sec:exp-age-estimation,sec:cifar-exp}). Our experiments encompass both regression and classification tasks. For regression tasks, we report Mean Absolute Error (MAE), Mean Squared Error (MSE), and the coefficient of determination ($R^2$). For classification tasks, we report accuracy. Additional experiments analyzing the choice and dynamics of $\lambda$ are provided in Appendix~\ref{app:lambda_experiments}.

\textbf{Baseline methods. } We compare our proposed \methodName{} method to 4 baseline training methods: \textbf{(1)} \texttt{Teacher} model $f$ used for pseudo-labeling; \textbf{(2)} \texttt{Only Labeled} model that is trained only on limited labeled data ($n$ samples) by minimizing \eqref{eq:loss_l}; \textbf{(3)} \texttt{SSL} model trained both on labeled and pseudo-labeled data ($n+N$ samples) by minimizing \eqref{eq:loss_ssl}; and \textbf{(4)} \texttt{PPI++} model trained both on labeled and pseudo-labeled data ($n+N$ samples) minimizing the  debiased SSL loss \eqref{eq:loss_ppi++} with $\lambda=\lambda^{*}_{\text{PP}}$ from \cite{angelopoulos2023ppi++}.\footnote{Vanilla PPI is omitted due to its inferior performance compared to \texttt{PPI++} with tunable $\lambda$.} Importantly, our \methodName{} model is also trained both on the same limited labeled data and pseudo-labeled data, minimizing \eqref{eq:loss_ppi++} but with $\lambda$ obtained by Algorithm~\ref{alg:ppssl-online}.

\subsection{Experiments on synthetic data with a linear model}
\label{Sec:synth-experiments}


\textbf{Dataset. } We design a synthetic linear regression experiment with two groups to simulate a scenario in which the \texttt{Teacher} model exhibits different performance across groups. Each input has $m = 10$ features: the first nine are drawn from a standard normal distribution, and the last is a binary indicator denoting group membership.
For group A, labels are generated using a linear model with additive zero-mean Gaussian noise. Group A comprises the 80\% of samples with the smallest clean labels, i.e., the outputs of the true model before adding noise.
The remaining samples form group B, where labels follow the same linear model but include an additional biased Gaussian noise term with mean $\mu$, which controls the magnitude of the bias. This induces a distributional shift between the two groups.
The \texttt{Teacher} is a fixed linear regressor that serves as an oracle for group A by using the true model weights for prediction.

\textbf{Experimental setup.} We set \( n = 20 \), $N = 1,\!000$, and $n_{\text{test}}=1,\!000$.
We vary the noise bias \( \mu \in \{0.1,1,3,5, 7\} \) to control the intensity of the distributional shift. All methods (except the \texttt{Teacher}) implemented by fitting a linear regression model using ADAM optimizer with the same hyper-parameters and batch size. See Appendix~\ref{app:exp-synth} for more details on data generation process, training schemes, additional experimental setups, and results that include the $R^2$ metric.

\textbf{Main Results. } Figure~\ref{fig:synth-wfeature-groupwise} presents group-wise MSE obtained by different models across varying group B's noise bias levels $\mu$. As can be seen, the average MSE across the two groups (left panel) increases as $\mu$ grows, but our \methodName{} shows better performance, demonstrating the advantage of using debiased SSL risk with adaptive $\lambda$ when the pseudo-labels are biased. In group A (middle), the \texttt{Teacher} model provides oracle predictions by design, and thus the \texttt{SSL} method has strong performance. However, in group B (right) the \texttt{SSL} model performs even worse than the \texttt{Only Labeled} model as the \texttt{Teacher} performs poorly. This is in striking contrast with our \methodName{} that tends to achieve the best MSE for that group, especially in the high bias regimes. Crucially, \methodName{} outperforms \texttt{PPI++} although the two use a debiased risk but with different strategies to tune $\lambda$.
\begin{figure}[t]
\captionsetup{skip=2pt}
    \centering
    \begin{subfigure}[t]{0.275\linewidth}
        \includegraphics[width=\linewidth,valign=t]{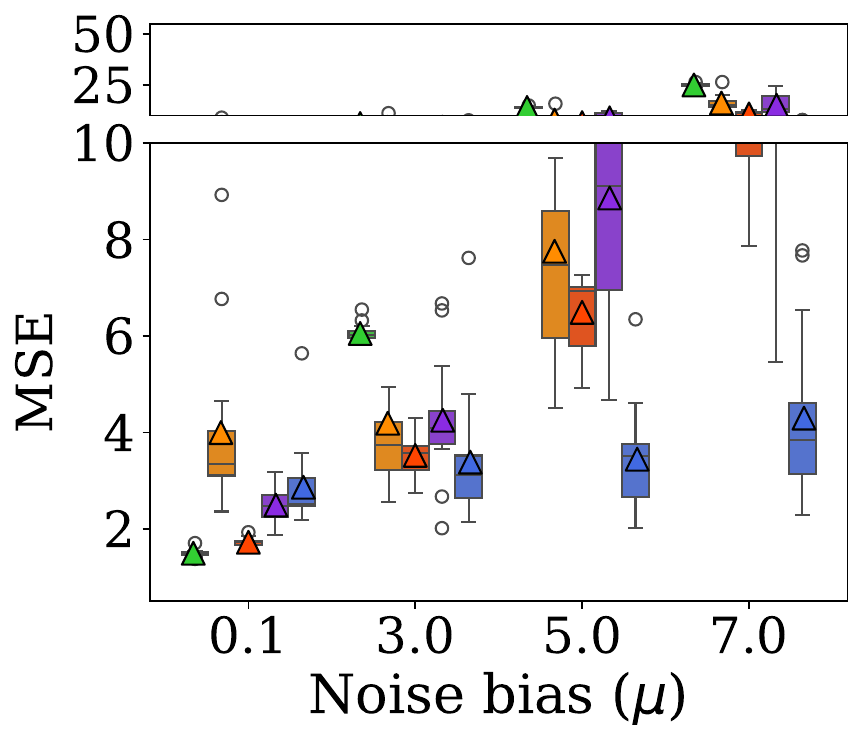}
    \caption{All test set}
    \end{subfigure}
    \begin{subfigure}[t]{0.275\linewidth}
        \includegraphics[width=\linewidth,valign=t]{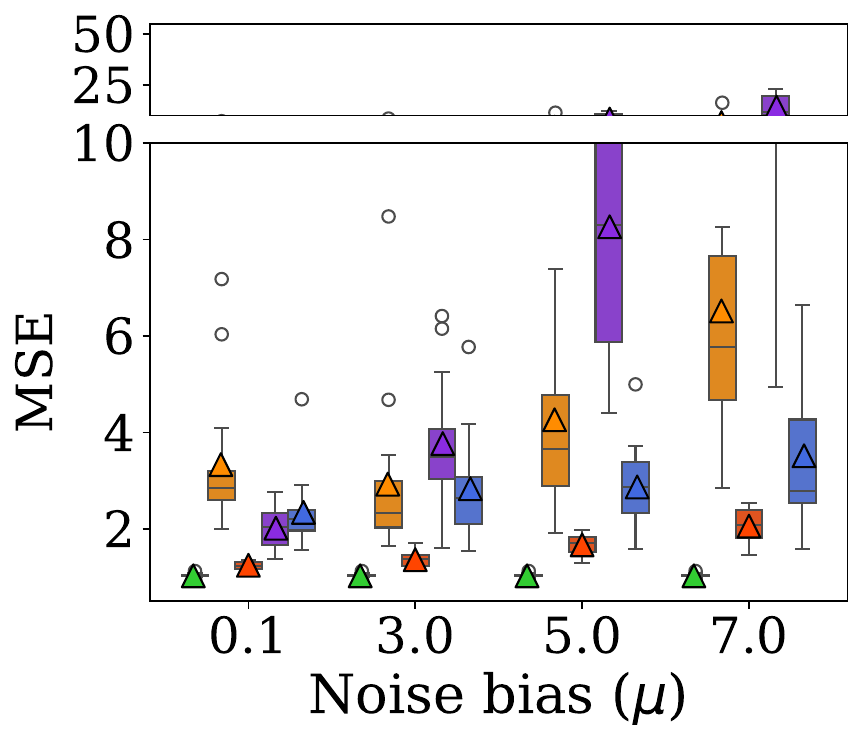}
    \caption{Group A}
    \end{subfigure}
    \begin{subfigure}[t]{0.275\linewidth}
    \includegraphics[width=\linewidth,valign=t]{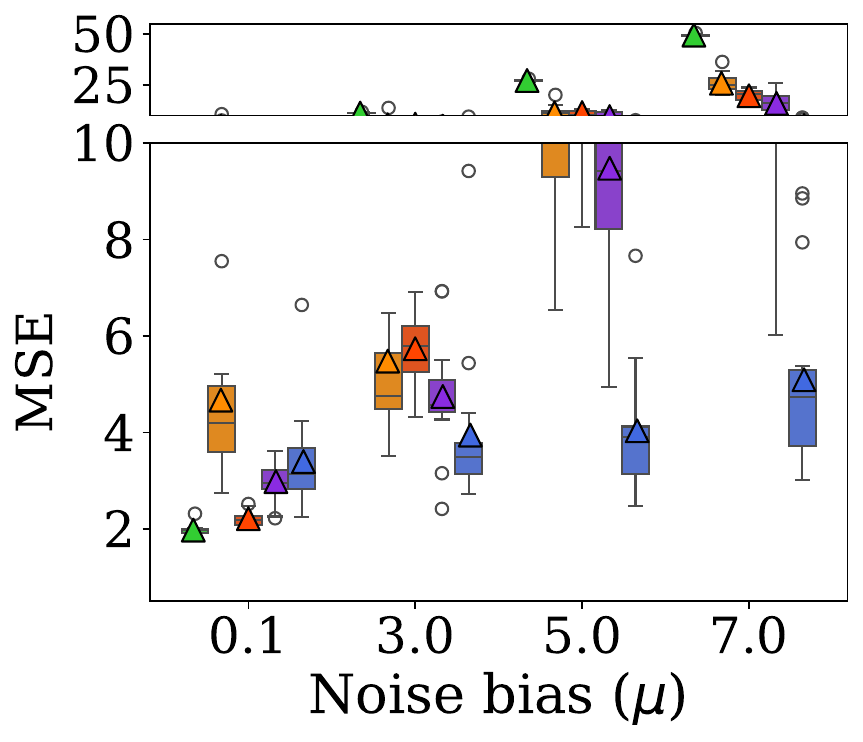}
    \caption{Group B}
    \end{subfigure}
    \begin{subfigure}[t]{0.15\linewidth}
    \vspace{10pt}
    \includegraphics[width=\linewidth]{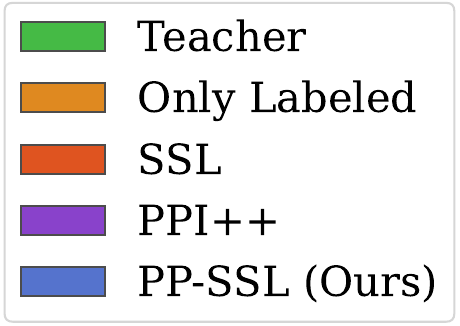}
    \end{subfigure}
    \caption{MSE on synthetic data as a function of noise bias \( \mu \). Results are shown for: (a) the full test set; (b) Group A, where the teacher model is accurate (oracle-like); and (c) Group B, which includes additive biased noise. Results evaluated on 100 independent experiments.}
    \label{fig:synth-wfeature-groupwise}
\end{figure}
In the above experiment, all models have access to the group indicator during both training and testing. In Appendix~\ref{app:exp-synth}, we report similar results for the setting where the group indicator is not provided to the models.

\begin{wrapfigure}[15]{r}{0.48\linewidth}
    \centering
    \vspace{-12pt}
    \includegraphics[clip, trim=0.3cm 0.2cm 0.25cm 0.25cm, width=0.88\linewidth]{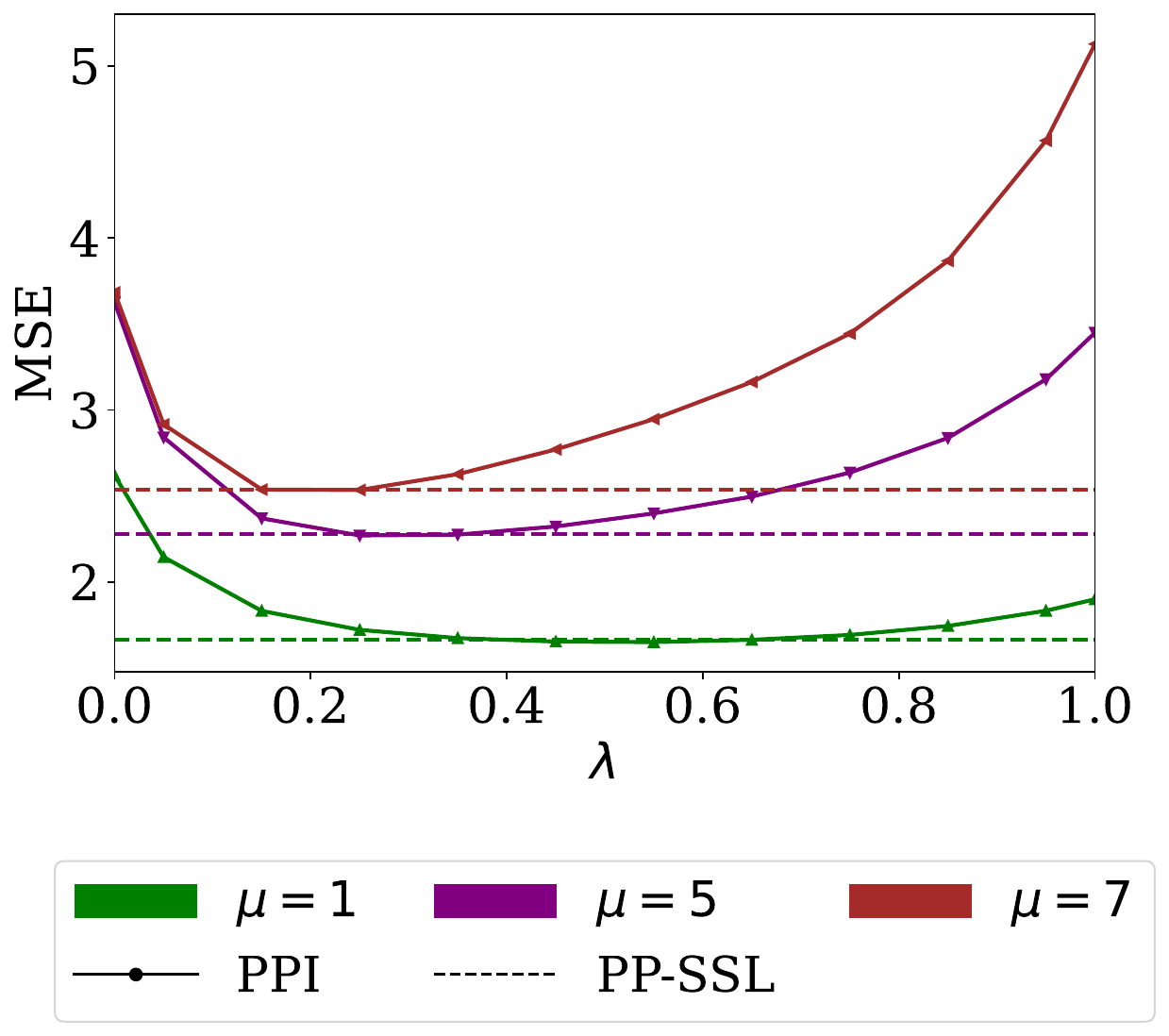}
    \caption{
        Final test set MSE for \methodName{} and PPI baseline with constant $\lambda$ for varying $\lambda$ values.
    }
    \label{fig:lambda_bias}
\end{wrapfigure}
\vspace{-8pt}
\paragraph{Adaptive tuning of $\lambda$.}
To evaluate the benefit of adaptively tuning $\lambda$, we conduct a targeted experiment comparing \methodName{} with a \texttt{PPI++}-inspired baseline, that employs prediction-powered gradients (see Eq.~\eqref{eq:g_ppi}) but uses a fixed $\lambda$ value throughout training. In \Cref{fig:lambda_bias}, we show the final MSE on the test set for \methodName{} (dashed line) and the baseline for different $\lambda$ values. Our adaptive method consistently achieves performance comparable to the baseline with the best fixed $\lambda$ value. The optimal error depends on the teacher’s quality (bias), with smaller $\lambda$ values preferred for highly biased teachers, which aligns with our theory. These results demonstrate that our method automatically adjusts to the teacher's quality, achieving near-optimal performance. 


\subsection{Experiments on real tabular data with a linear model}
\label{sec:real-tabular}
\textbf{Dataset.} We evaluate our method on the ``California Housing'' dataset \citep{california-housing}, which contains 8 numeric features and a target variable representing house prices, across $20,\!640$ samples. To demonstrate the importance of the debiased SSL risk we split the data into two groups. Group A includes the $(x^i,y^i)$ with the 40\% lowest target $y^i$; otherwise the samples are assigned to group B.

\textbf{Experimental setup and results.} We split the data by setting $n \!\approx\! 100$, $N \!\approx\! 18,\!000$, $n_{\text{val}} \!\approx \!1,\!000$ for validation, and $n_{\text{test}}\! \approx \!1,\!000$ for testing. For all methods, we train a linear model. The \texttt{Teacher} model is trained on $N_A \!\approx\! 50$ disjoint labeled samples from group A, and a varied number of samples from group B, denoted by $N_B$, in the range of 5 to about 50. See~\Cref{app:tabular_exp} for more details. Following Figure~\ref{fig:tabular-exp-main}, we can see that as \({N_B}/{N_A}\) increases and the \texttt{Teacher}'s performance on group B improves, and all methods show reduced MSE values. \texttt{PPI++} and \methodName{} have comparable MSE across the board, where both methods tend to outperform the other baselines as the teacher becomes less accurate due to the limited data from Group B. Additional experiments in Appendix~\ref{app:tabular_exp} further support these observations.
\begin{figure}
    \centering
    \begin{subfigure}[t]{0.275\linewidth}
        \includegraphics[width=\linewidth,valign=t]{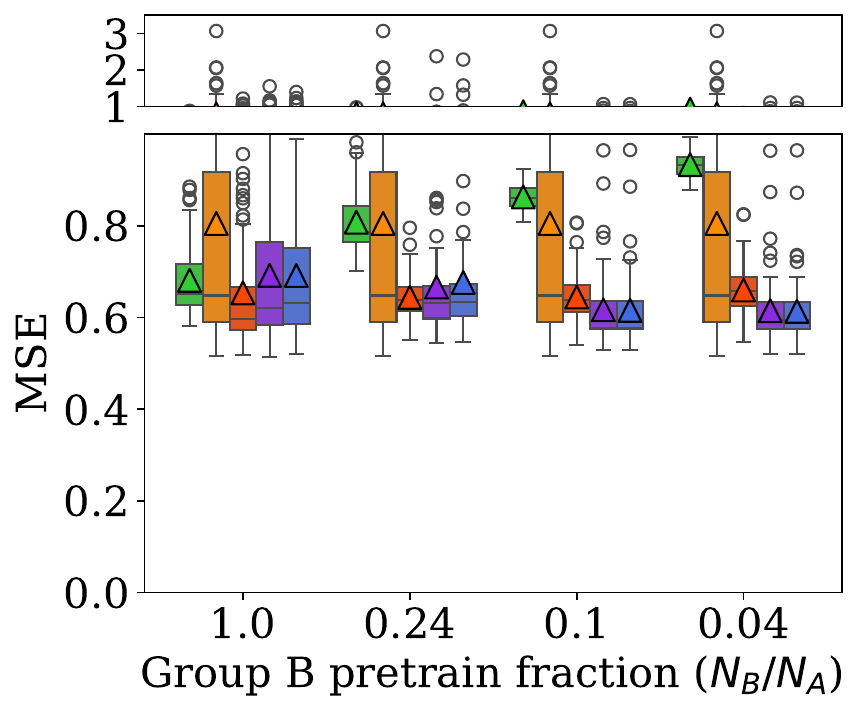}
    \caption{All test set}
    \end{subfigure}
    \begin{subfigure}[t]{0.275\linewidth}
        \includegraphics[width=\linewidth,valign=t]{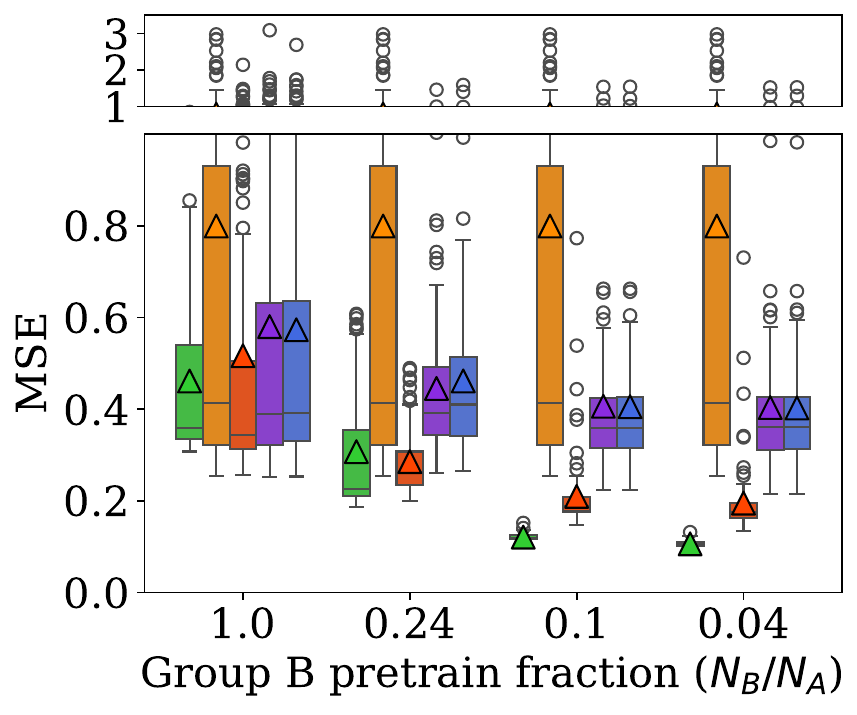}
    \caption{Group A}
    \end{subfigure}
    \begin{subfigure}[t]{0.275\linewidth}
    \includegraphics[width=\linewidth,valign=t]{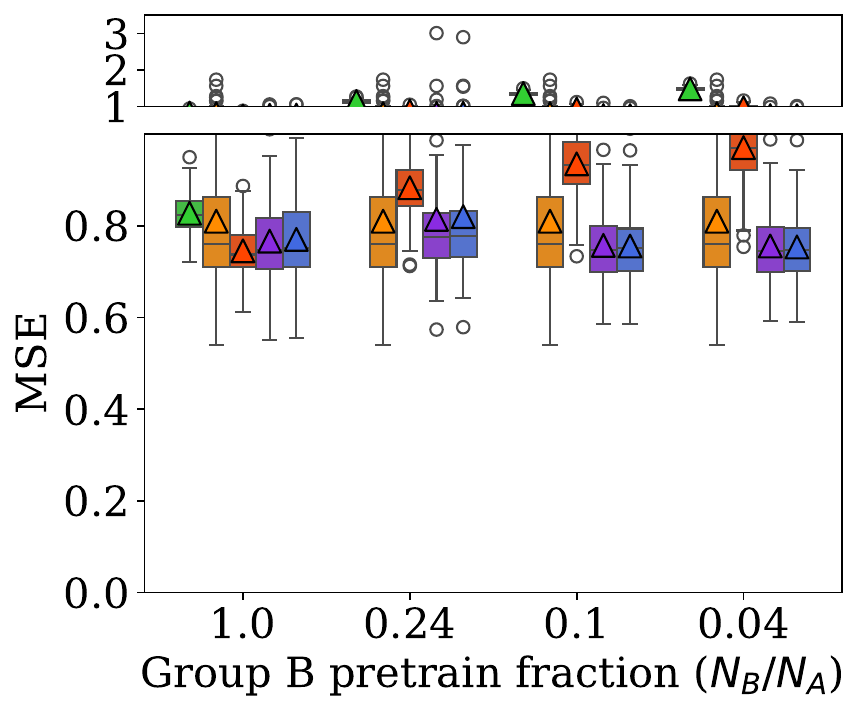}
    \caption{Group B}
    \end{subfigure}
    \begin{subfigure}[t]{0.15\linewidth}
    \vspace{10pt}
    \includegraphics[width=\linewidth]{figures/legend_cropped.pdf}
    \end{subfigure}
    \caption{
    Results for the California Housing dataset: MSE of various methods as a function of $N_B/N_A$---the fraction of samples from each group used to train the \texttt{Teacher} model; e.g., $N_B/N_A\!=\!0.5$ means twice as many group A samples as group B. Results correspond to 100 data splits.
}
    \label{fig:tabular-exp-main}
\end{figure}

\subsection{Experiments on real visual data with a deep neural network}
\subsubsection{UTKFace data}
\label{sec:exp-age-estimation}
\textbf{Dataset.} We use the facial age estimation UTKFace dataset \cite{zhifei2017cvpr}. This dataset contains about $20,\!000$ face images with age annotations ranging from 0 to 116 years. We select the aligned and cropped version of the dataset, and resize the images to $224 \times 224$ pixels. 

\textbf{Experimental setup and results.} We repeat a similar setup to that from Section~\ref{sec:real-tabular}. Specifically, we split the data by setting $n\!\approx\!700$, $N\!\approx\!16,\!500$, $n_{\text{val}}\!\approx\!2,\!000$, and $n_{\text{test}}\!\approx\!2,\!000$. For all methods, we train a ResNet50 model~\cite{he2016deep}. The teacher model is trained on $N_A\!\approx\!800$ disjoint labeled samples from group A (ages over 30) and a varied number of samples from group B (ages 0-29), with $N_B$ ranging from $10$ to about $800$. As portrayed in Figure \ref{fig:age_estimation_exp}, the \texttt{SSL} method is highly sensitive to the performance of the \texttt{Teacher} model: observe how \texttt{SSL} performs worse than the \texttt{Only Labeled} approach. By contrast, \texttt{PPI++} and \methodName{} achieve lower MSE values, where our method has a noticeable advantage. This experiment shows that the debiasing approach can offer performance gains over the \texttt{Only Labeled} baseline, even when the \texttt{Teacher} model is relatively weak. In Appendix~\ref{app:age-estimation} we provide additional implementation details and results based on MSE and $R^2$ metrics.

\begin{figure}
    \centering
    \begin{subfigure}[t]{0.275\linewidth}
        \includegraphics[width=\linewidth,valign=t]{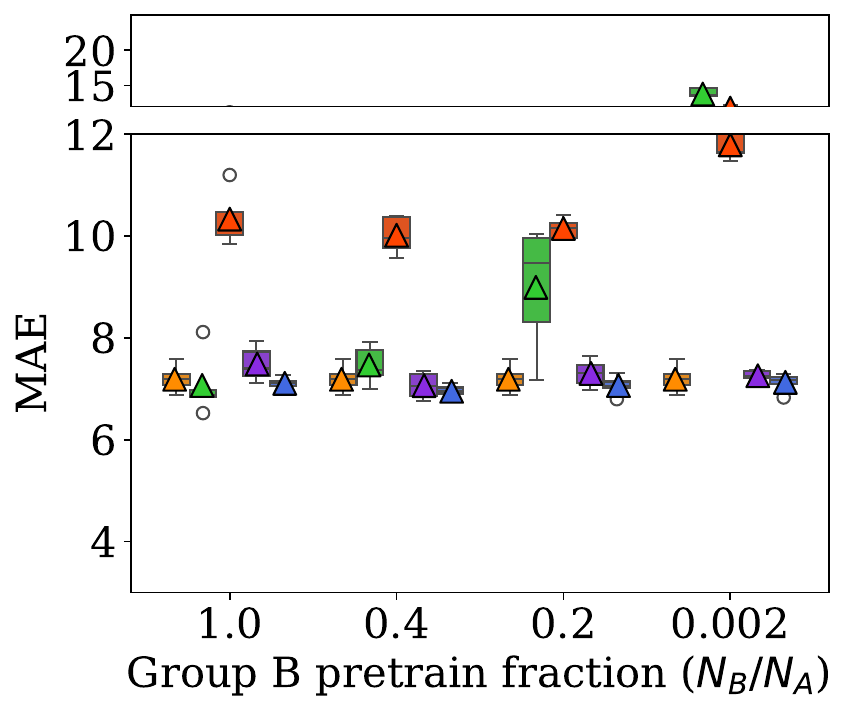}
    \caption{All test set}
    \end{subfigure}
    \begin{subfigure}[t]{0.275\linewidth}
        \includegraphics[width=\linewidth,valign=t]{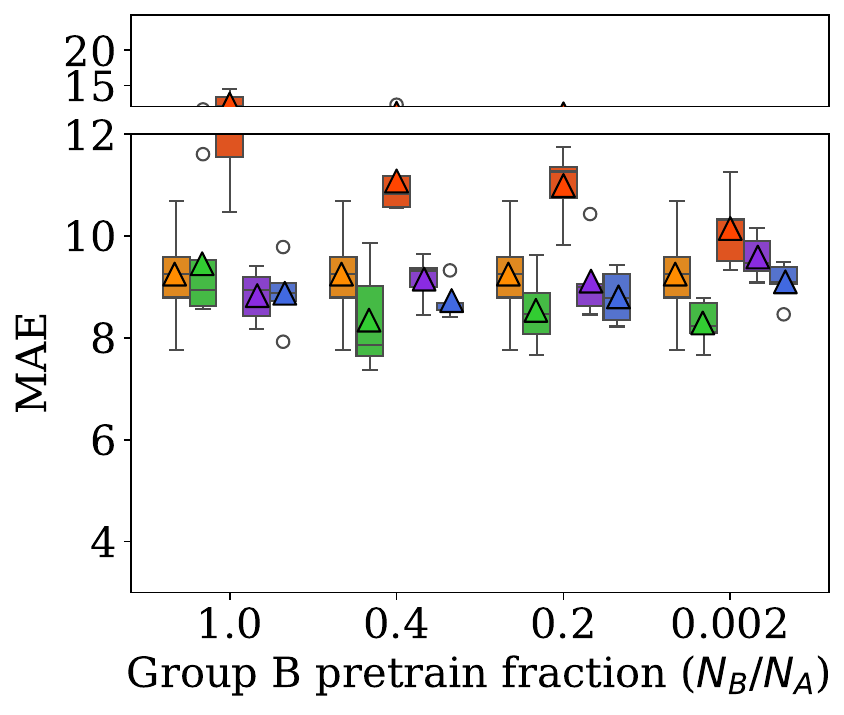}
    \caption{Group A}
    \end{subfigure}
    \begin{subfigure}[t]{0.275\linewidth}
    \includegraphics[width=\linewidth,valign=t]{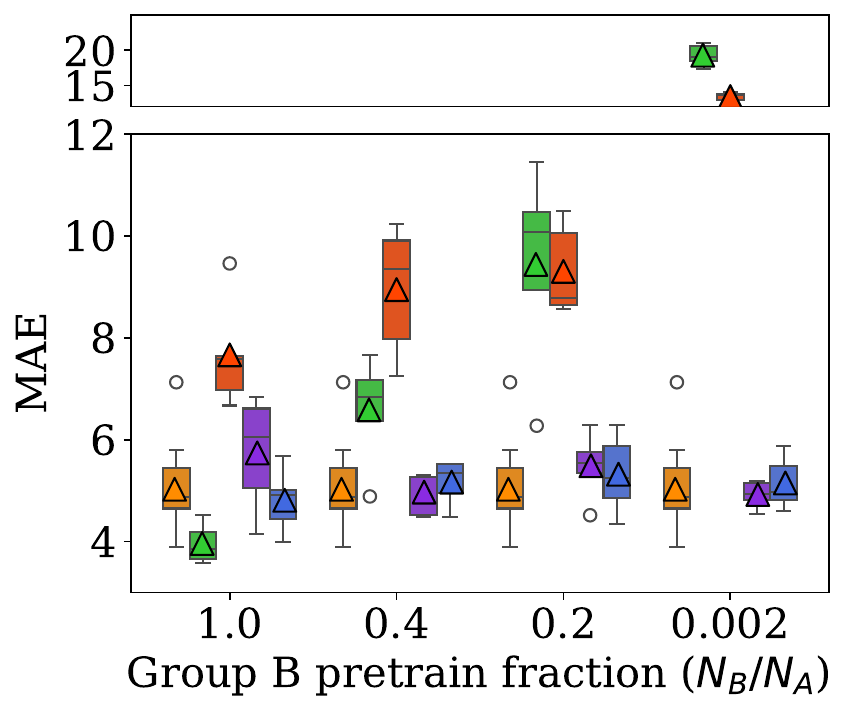}
    \caption{Group B}
    \end{subfigure}
    \begin{subfigure}[t]{0.15\linewidth}
    \vspace{12pt}
    \includegraphics[width=\linewidth]{figures/legend_cropped.pdf}
    \end{subfigure}
    \caption{Results for age estimation: MAE as a function of $N_B/N_A$, across 5 data splits.
   }
    \label{fig:age_estimation_exp}
\end{figure}

\subsubsection{CIFAR-10 data}\label{sec:cifar-exp}
\textbf{Dataset. }
 We evaluate our method on the CIFAR-10 dataset~\citep{krizhevsky2009learning} and its corrupted variant, CIFAR-10-C~\citep{hendrycks2019benchmarking}. CIFAR-10 contains $60,\!000$ images of size $32\times32$ from $10$ classes, split into $50,\!000$ training images and $10,\!000$ test images.

\textbf{Experimental setup and results. }
To simulate a biased teacher, we split the dataset into two groups: Group B consists of the \texttt{dog}, \texttt{cat}, and \texttt{deer} classes from CIFAR-10-C (corrupted samples), while Group A consists of the remaining clean CIFAR-10 classes. Corruptions such as Saturate, Spatter, and Speckle Noise from the CIFAR-10-C benchmark were applied to both training and test samples in Group B. We followed a standard SSL protocol, where features are extracted using a ResNet50 pretrained on ImageNet, and a linear classifier was trained on top. The teacher model is a linear  classifier trained on a small clean subset. See Appendix~\ref{app:cifar-exp} for additional details. As shown in Figure~\ref{fig:cifar10-results}, \texttt{PP-SSL} consistently outperforms all baselines in terms of both total accuracy and Group B (corrupted) accuracy, while maintaining comparable performance on Group A (clean). The performance gap on Group B varies with the corruption type, reflecting the differing degrees of pseudo-label inaccuracy under different distortions. Additional corruption types and severity levels yielded similar trends, which we omit here for brevity.
\begin{figure}[t]
\captionsetup{skip=2pt}
    \centering
    \begin{subfigure}[t]{0.275\linewidth}
        \includegraphics[width=\linewidth,valign=t]{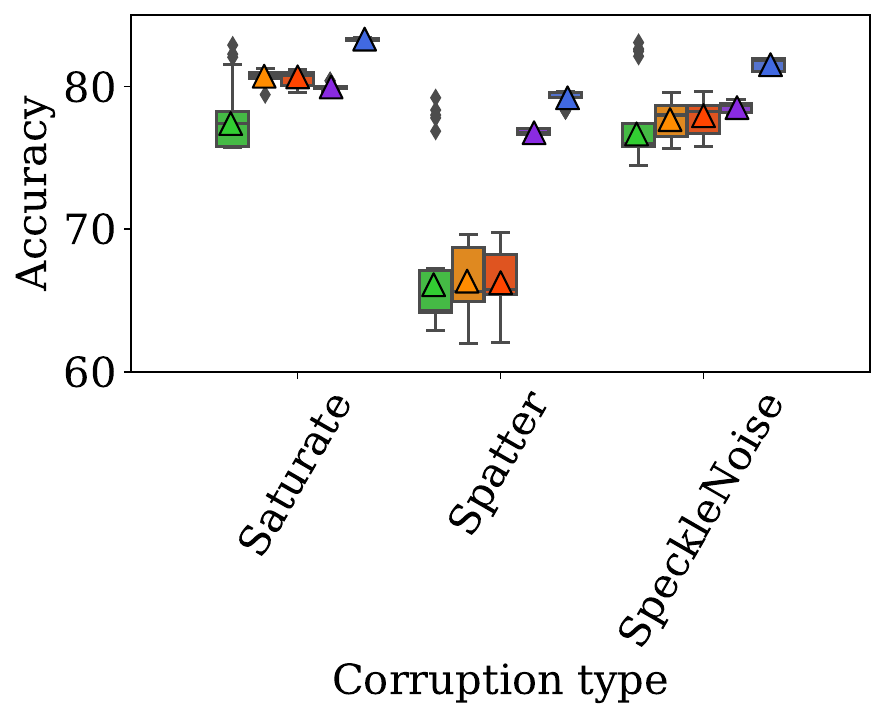}
    \caption{All test set}
    \end{subfigure}
    \begin{subfigure}[t]{0.275\linewidth}
        \includegraphics[width=\linewidth,valign=t]{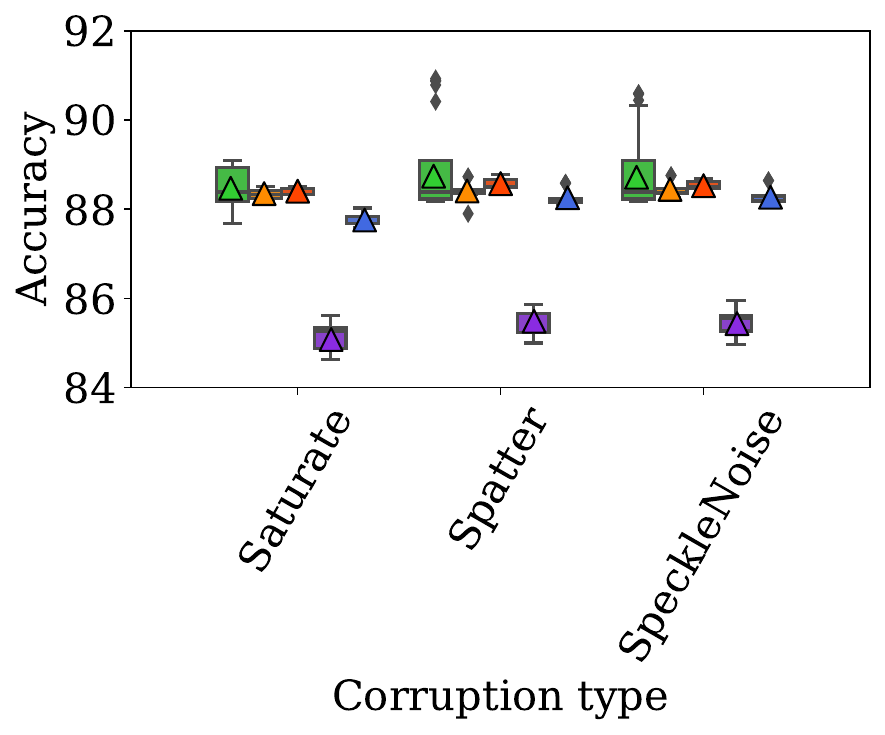}
    \caption{Group A}
    \end{subfigure}
    \begin{subfigure}[t]{0.275\linewidth}
    \includegraphics[width=\linewidth,valign=t]{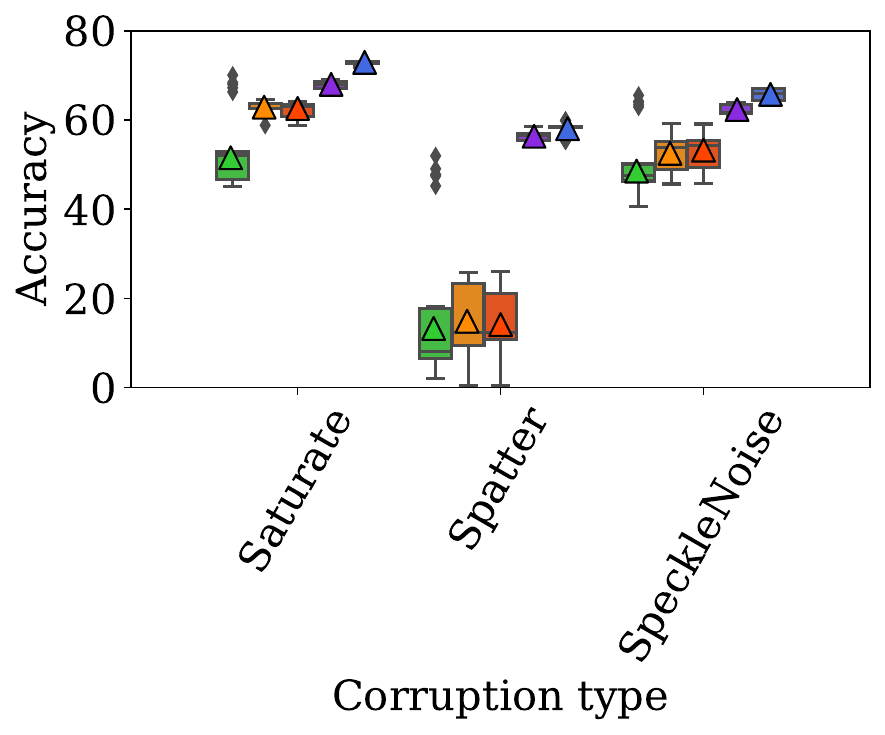}
    \caption{Group B}
    \end{subfigure}
    \begin{subfigure}[t]{0.15\linewidth}
    \vspace{10pt}
    \includegraphics[width=\linewidth]{figures/legend_cropped.pdf}
    \end{subfigure}
    \caption{Accuracy on CIFAR-10 data as a function of corruption type. Results are shown for: (a) the full test set; (b) Group A, clean images; and (c) Group B, corrupted images. Results evaluated on 5 different seeds.}
    \label{fig:cifar10-results}
\end{figure}

\section{Discussion}

\label{sec:discussion}
We introduce \methodName{}, a semi-supervised online learning framework that incorporates a debiased empirical risk to account for errors in pseudo-labels generated by a teacher model. We demonstrate that our approach to dynamically update the interpolation parameter, which governs the reliance on pseudo-labeled data during training, consistently enhances performance compared to baseline methods, especially when the teacher model exhibits degraded performance on a specific subgroup.

A key limitation of our method, shared by many SSL methods, is the assumption that labeled and unlabeled data are drawn from the same underlying distribution. In practice, unlabeled examples may come from different domains or be produced by generative models, introducing distributional shifts and bias. Extending our framework to handle such shifts is a natural direction for future work. For instance, importance weighting can be introduced to correct for covariate shift, either by reweighting the gradient estimator used for $\lambda$ adaptation or by modifying the loss itself. More complex shifts, such as those induced by synthetic or generative data, would likely require adaptations to both the pseudo-labeling mechanism and the gradient-based weighting strategy. 

Our theoretical analysis provides the first gradient variance bounds for PPI-based methods, yet assumes a fixed teacher for tractability. Extending these guarantees to self-training scenarios with evolving teachers remains challenging due to non-stationarity and requires more nuanced dynamic regret analysis. While not currently covered by our theory, \texttt{PP-SSL} remains practically applicable to self-training, where the model iteratively generates pseudo-labels the PPI loss remains unbiased.


Overall, our work offers a principled step toward more adaptive and reliable SSL, with both theoretical and practical implications. Similar to works in this field, this research has potential social implications.

\begin{ack}
NS, SS, and YR were supported by the European Union (ERC, SafetyBounds, 101163414). Views and opinions expressed are however those of the authors only and do not necessarily reflect those of the European Union or the European Research Council Executive Agency. Neither the European Union nor the granting authority can be held responsible for them. NS, SS, and YR were also partially supported by the Israel Science Foundation (ISF grant 729/21). YR acknowledges additional support from the Career Advancement Fellowship at the Technion. KYL and RD were partially supported by Israel PBC-VATAT, by the Technion Artificial Intelligence Hub (Tech.AI), and by the Israel Science Foundation (grant No. 3109/24).
\end{ack}
\bibliographystyle{unsrt}
\bibliography{bibtex}

\begin{thebibliography}{10}

\bibitem{SSLzhu2009introduction}
Xiaojin Zhu and Andrew Goldberg.
\newblock {\em Introduction to semi-supervised learning}.
\newblock Morgan \& Claypool Publishers, 2009.

\bibitem{SSL-surveyzhu2005semi}
Xiaojin Zhu.
\newblock Semi-supervised learning literature survey.
\newblock Technical Report 1530, Computer Sciences, University of Wisconsin-Madison, 2005.

\bibitem{SSLchapelle2009semi}
Olivier Chapelle, Bernhard Scholkopf, and Alexander Zien.
\newblock Semi-supervised learning.
\newblock {\em IEEE Transactions on Neural Networks}, 2009.

\bibitem{SSLlee2013pseudo}
Dong-Hyun Lee.
\newblock Pseudo-label: The simple and efficient semi-supervised learning method for deep neural networks.
\newblock In {\em Workshop on challenges in representation learning, ICML}, 2013.

\bibitem{SSLarazo2020pseudo}
Eric Arazo, Diego Ortego, Paul Albert, Noel~E O’Connor, and Kevin McGuinness.
\newblock Pseudo-labeling and confirmation bias in deep semi-supervised learning.
\newblock In {\em 2020 International joint conference on neural networks (IJCNN)}, pages 1--8. IEEE, 2020.

\bibitem{SSL-pseudotriguero2015self}
Isaac Triguero, Salvador Garc{\'\i}a, and Francisco Herrera.
\newblock Self-labeled techniques for semi-supervised learning: taxonomy, software and empirical study.
\newblock {\em Knowledge and Information systems}, 2015.

\bibitem{SSL-estimation-zhang2019semi}
Anru Zhang, Lawrence~D Brown, and T~Tony Cai.
\newblock Semi-supervised inference: General theory and estimation of means.
\newblock 2019.

\bibitem{SSLvan2020survey}
Jesper~E Van~Engelen and Holger~H Hoos.
\newblock A survey on semi-supervised learning.
\newblock {\em Machine learning}, 2020.

\bibitem{SSLchen2020simple}
Ting Chen, Simon Kornblith, Mohammad Norouzi, and Geoffrey Hinton.
\newblock A simple framework for contrastive learning of visual representations.
\newblock In {\em International conference on machine learning}, pages 1597--1607. PMLR, 2020.

\bibitem{SSL-estimation-azriel2022semi}
David Azriel, Lawrence~D Brown, Michael Sklar, Richard Berk, Andreas Buja, and Linda Zhao.
\newblock Semi-supervised linear regression.
\newblock {\em Journal of the American Statistical Association}, 2022.

\bibitem{song2022graph}
Zixing Song, Xiangli Yang, Zenglin Xu, and Irwin King.
\newblock Graph-based semi-supervised learning: A comprehensive review.
\newblock {\em IEEE Transactions on Neural Networks and Learning Systems}, 2022.

\bibitem{SSLhendrycks2019using}
Dan Hendrycks, Mantas Mazeika, Saurav Kadavath, and Dawn Song.
\newblock Using self-supervised learning can improve model robustness and uncertainty.
\newblock {\em Advances in neural information processing systems}, 2019.

\bibitem{SSLPrizve2021defense}
Mamshad~Nayeem Rizve, Kevin Duarte, Yogesh~S Rawat, and Mubarak Shah.
\newblock In defense of pseudo-labeling: An uncertainty-aware pseudo-label selection framework for semi-supervised learning.
\newblock {\em arXiv preprint arXiv:2101.06329}, 2021.

\bibitem{zhu2024doubly}
Banghua Zhu, Mingyu Ding, Philip Jacobson, Ming Wu, Wei Zhan, Michael Jordan, and Jiantao Jiao.
\newblock {Doubly-Robust Self-Training}.
\newblock {\em Advances in Neural Information Processing Systems}, 2024.

\bibitem{PShe2023does}
Haiyun He, Gholamali Aminian, Yuheng Bu, Miguel Rodrigues, and Vincent~YF Tan.
\newblock How does pseudo-labeling affect the generalization error of the semi-supervised gibbs algorithm?
\newblock In {\em International Conference on Artificial Intelligence and Statistics}, pages 8494--8520. PMLR, 2023.

\bibitem{PSSfrei2022self}
Spencer Frei, Difan Zou, Zixiang Chen, and Quanquan Gu.
\newblock Self-training converts weak learners to strong learners in mixture models.
\newblock In {\em International Conference on Artificial Intelligence and Statistics}, pages 8003--8021. PMLR, 2022.

\bibitem{PSSwei2020theoretical}
Colin Wei, Kendrick Shen, Yining Chen, and Tengyu Ma.
\newblock Theoretical analysis of self-training with deep networks on unlabeled data.
\newblock {\em arXiv preprint arXiv:2010.03622}, 2020.

\bibitem{PSTkontonis2023slam}
Vasilis Kontonis, Fotis Iliopoulos, Khoa Trinh, Cenk Baykal, Gaurav Menghani, and Erik Vee.
\newblock Slam: Student-label mixing for distillation with unlabeled examples.
\newblock {\em Advances in Neural Information Processing Systems}, 2023.

\bibitem{PSTlang2024theoretical}
Hunter Lang, David Sontag, and Aravindan Vijayaraghavan.
\newblock Theoretical analysis of weak-to-strong generalization.
\newblock {\em Advances in neural information processing systems}, 2024.

\bibitem{PSTildiz2024high}
M~Emrullah Ildiz, Halil~Alperen Gozeten, Ege~Onur Taga, Marco Mondelli, and Samet Oymak.
\newblock High-dimensional analysis of knowledge distillation: Weak-to-strong generalization and scaling laws.
\newblock {\em arXiv preprint arXiv:2410.18837}, 2024.

\bibitem{PSTnagarajan2023student}
Vaishnavh Nagarajan, Aditya~K Menon, Srinadh Bhojanapalli, Hossein Mobahi, and Sanjiv Kumar.
\newblock On student-teacher deviations in distillation: does it pay to disobey?
\newblock {\em Advances in Neural Information Processing Systems}, 2023.

\bibitem{PSTsafaryan2023knowledge}
Mher Safaryan, Alexandra Peste, and Dan Alistarh.
\newblock Knowledge distillation performs partial variance reduction.
\newblock {\em Advances in Neural Information Processing Systems}, 2023.

\bibitem{angelopoulos2023prediction}
Anastasios~N Angelopoulos, Stephen Bates, Clara Fannjiang, Michael~I Jordan, and Tijana Zrnic.
\newblock Prediction-powered inference.
\newblock {\em Science}, 2023.

\bibitem{angelopoulos2023ppi++}
Anastasios~N Angelopoulos, John~C Duchi, and Tijana Zrnic.
\newblock {PPI++: Efficient prediction-powered inference}.
\newblock {\em arXiv preprint arXiv:2311.01453}, 2023.

\bibitem{zrnic2024cross}
Tijana Zrnic and Emmanuel~J Cand{\`e}s.
\newblock Cross-prediction-powered inference.
\newblock {\em Proceedings of the National Academy of Sciences}, 2024.

\bibitem{lihua-ji2025predictions}
Wenlong Ji, Lihua Lei, and Tijana Zrnic.
\newblock Predictions as surrogates: Revisiting surrogate outcomes in the age of ai.
\newblock {\em arXiv preprint arXiv:2501.09731}, 2025.

\bibitem{sifaou2024semi}
Houssem Sifaou and Osvaldo Simeone.
\newblock Semi-supervised learning via cross-prediction-powered inference for wireless systems.
\newblock {\em IEEE Transactions on Machine Learning in Communications and Networking}, 2024.

\bibitem{oh2022daso}
Youngtaek Oh, Dong-Jin Kim, and In~So Kweon.
\newblock Daso: Distribution-aware semantics-oriented pseudo-label for imbalanced semi-supervised learning.
\newblock In {\em Proceedings of the IEEE/CVF conference on computer vision and pattern recognition}, pages 9786--9796, 2022.

\bibitem{ghadimi2013stochastic}
Saeed Ghadimi and Guanghui Lan.
\newblock {Stochastic first-and zeroth-order methods for nonconvex stochastic programming}.
\newblock {\em SIAM journal on optimization}, 2013.

\bibitem{duchi2011adaptive}
John Duchi, Elad Hazan, and Yoram Singer.
\newblock {Adaptive subgradient methods for online learning and stochastic optimization.}
\newblock {\em Journal of machine learning research}, 2011.

\bibitem{NIPS2017_ce5140df}
Kfir Levy.
\newblock {Online to Offline Conversions, Universality and Adaptive Minibatch Sizes}.
\newblock In {\em Advances in Neural Information Processing Systems}, 2017.

\bibitem{ward2020adagrad}
Rachel Ward, Xiaoxia Wu, and Leon Bottou.
\newblock {Adagrad stepsizes: Sharp convergence over nonconvex landscapes}.
\newblock {\em Journal of Machine Learning Research}, 2020.

\bibitem{kingma2014adam}
Diederik~P Kingma.
\newblock Adam: A method for stochastic optimization.
\newblock {\em arXiv preprint arXiv:1412.6980}, 2014.

\bibitem{california-housing}
{California Housing Data Set}.
\newblock \url{https://www.dcc.fc.up.pt/~ltorgo/Regression/cal_housing.html}.
\newblock Accessed: January, 2025.

\bibitem{zhifei2017cvpr}
Zhifei Zhang, Yang Song, and Hairong Qi.
\newblock Age progression/regression by conditional adversarial autoencoder.
\newblock In {\em IEEE Conference on Computer Vision and Pattern Recognition (CVPR)}. IEEE, 2017.

\bibitem{he2016deep}
Kaiming He, Xiangyu Zhang, Shaoqing Ren, and Jian Sun.
\newblock Deep residual learning for image recognition.
\newblock In {\em Proceedings of the IEEE conference on computer vision and pattern recognition}, pages 770--778, 2016.

\bibitem{krizhevsky2009learning}
Alex Krizhevsky, Geoffrey Hinton, et~al.
\newblock Learning multiple layers of features from tiny images.
\newblock 2009.

\bibitem{hendrycks2019benchmarking}
Dan Hendrycks and Thomas Dietterich.
\newblock Benchmarking neural network robustness to common corruptions and perturbations.
\newblock {\em arXiv preprint arXiv:1903.12261}, 2019.

\bibitem{dorfman2024dynamic}
Ron Dorfman, Naseem~Amin Yehya, and Kfir~Yehuda Levy.
\newblock Dynamic byzantine-robust learning: Adapting to switching byzantine workers.
\newblock In {\em Forty-first International Conference on Machine Learning}, 2024.

\bibitem{ebimsv2021facial}
Ebimobowei Sampson.
\newblock Facial age estimation in pytorch.
\newblock \url{https://github.com/Ebimsv/Facial_Age_estimation_PyTorch}, 2021.
\newblock Accessed: 2025-05-19.

\end{thebibliography}


\newpage
\setcounter{figure}{0}
\renewcommand{\thefigure}{S\arabic{figure}}
\setcounter{table}{0}
\renewcommand{\thetable}{S\arabic{table}}
\appendix

\section{Relating the teacher's prediction error to the gradient variance}\label{app:teacher_error_bound}
In this section, we prove Lemma~\ref{lem:sigma_e_to_teacher_err}, which establishes a connection between the teacher model's prediction error  $\ef=\bbE_{x,y\sim P}(y - f(x))^2$ and the variance bound of the error loss gradient $\sigma_e^2$, defined in \Cref{eq:bounded_err_var}. This result holds for any loss function whose gradient $\nabla\ell(w;x,y)$ is $L_{Y}$-Lipschitz in $y$. We then show that both the squared loss and the logistic loss satisfy this condition.

For ease of reference, we state Lemma~\ref{lem:sigma_e_to_teacher_err} from the main manuscript.

\noindent\textbf{Lemma \ref{lem:sigma_e_to_teacher_err}.}
Assume that the gradient $ \nabla \ell(w;x,y)$ is $L_Y$-Lipschitz in $y$, i.e., for any $w\in\reals^d, x\in\X$, and $y_1,y_2\in\Y$, we have $\|\nabla \ell(w;x,y_1)-\nabla \ell(w;x,y_2)\|\leq L_Y|y_1-y_2|$. Then, $\sigma_e^2 \leq L_Y^2 \cdot \ef$. 
\medskip
\textit{Proof of Lemma~\ref{lem:sigma_e_to_teacher_err}. } Recall that $\ell_e^{f}(w;x,y) = \ell(w;x,y) - \ell(w;x, f(x))$. Since $\nabla\ell(w;x,y)$ is $L_{Y}$-Lipschitz in $y$, we have:
\begin{align*}
    \E\norm{\nabla\ell_e^f(w;x,y)}^2 = \E\norm{\nabla\ell(w;x,y) - \nabla\ell(w;x,f(x))}^2 \leq \E[L_Y^2\brac{y-f(x)}^2] = L_Y^{2}\ef\; .
\end{align*}
Since the variance is always upper bounded by the second moment, we get:
\begin{align*}
    \bbE\norm{\nabla\ell_{e}^{f}(w; x,y) - \nabla\L_{e}^{f}(w)}^2 \leq  \bbE\norm{\nabla\ell_{e}^{f}(w; x,y)}^2 \leq L_Y^2\ef\; .
\end{align*}
Finally, as this bound holds for any $w\in\reals^{d}$, it also holds for the supremum, and thus $\sigma_{e}^2\leq L_Y^2\ef$. \QEDW

Next, we consider the squared and logistic losses and show that their gradients are Lipschitz with respect to the $y$ input. Let $\phi_{w}:\X\to\Y$ denote a model parameterized by weights $w\in\reals^{d}$, and assume that $\norm{\nabla_{w}\phi_w(x)}\leq R$ for all $w\in\reals^{d}$ and $x\in\X$. For instance, for a linear model $\phi_w(x)=w^\top x$, this condition holds when the input features are bounded, i.e., $\norm{x}\leq R$. 

\paragraph{Squared loss. } The squared $L_{2}$ loss is defined as $\ell_{L_2}(w;x,y) = \frac{1}{2}\brac{\phi_{w}(x)-y}^2$ with gradient $\nabla\ell_{L_2}(w;x,y) = (\phi_{w}(x) - y)\nabla_{w}\phi_{w}(x)$. Therefore, for any $y_1,y_2\in\Y$, we have:
\begin{align*}
    \norm{\nabla\ell_{L_2}(w; x,y_{1}) - \nabla\ell_{L_2}(w; x,y_{2})} = \norm{\brac{y_2-y_1}\nabla_{w}\phi_{w}(x)} &= \lvert y_1-y_2\rvert\norm{\nabla_{w}\phi_{w}(x)} \\ &\leq R\cdot\lvert y_1-y_2\rvert\; .
\end{align*}
Therefore, the squared loss gradient is Lipschitz with coefficient $R$.  

\paragraph{Logistic loss. } The logistic loss is defined as $\ell_{\text{log.}}(w;x,y) = -y\log\brac{\sigma(\phi_{w}(x))} - (1-y)\log\brac{1 - \sigma(\phi_{w}(x))}$, where $\sigma(u) =1 / (1+e^{-x})$ is the sigmoid function. Its gradient is given by $\nabla\ell_{\text{log.}}(w;x,y) = \brac{\sigma(\phi_{w}(x)) - y}\nabla_{w}\phi_{w}(x)$. Thus, for any $y_1,y_2\in\Y$, we have:
\begin{align*}
    \norm{\nabla\ell_{\text{log.}}(w; x,y_{1}) - \nabla\ell_{\text{log.}}(w; x,y_{2})} = \norm{\brac{y_2-y_1}\nabla_{w}\phi_{w}(x)} &= \lvert y_1-y_2\rvert\norm{\nabla_{w}\phi_{w}(x)} \\ &\leq R\cdot\lvert y_1-y_2\rvert\; .
\end{align*}
Similarly to the squared loss, the logistic loss gradient is also Lipschitz with coefficient $R$. For both cases, Lemma~\ref{lem:sigma_e_to_teacher_err} implies that $\sigma_e^2$ is bounded by $R^{2}\ef$.

\section{Theoretical analysis of prediction-powered
gradients and \texttt{PP-SSL}}\label{app:main_proof_aDAPTIVE}
In this section, we begin by establishing the variance bound of the prediction-powered gradient in \Cref{proof:lem:ppi_var}. Then, in \Cref{subapp:main_proof}, we prove the convergence result for our \methodName{} method, as stated in~\Cref{thm:main_convergence}.

\subsection{Proof of Lemma~\ref{lem:ppi_var}}
\label{proof:lem:ppi_var}
\begin{proof}
    Let us define $\tilde{g}^{N}\coloneqq\frac{1}{N}\sum_{i=1}^{N}{\nabla\ell(w;\tilde{x}^i, \tilde{y}^{i})}$, where $\tilde{y}^i$ is the \emph{true} (unknown) label of $\tilde{x}^{i}$, i.e., $\tilde{y}^i\sim \mathbb{P}_{Y|X}(\cdot|\tilde{x}^i)$. Thus, we can decompose $g_{\text{PP}}^1\coloneqq g^{n}+\tilde{g}^{N,f}-g^{n,f}$ as follows:
    \begin{align*}
        g_{\mathrm{PP}}^{1} &= \tilde{g}^{N} + \brac{\tilde{g}^{N,f} - \tilde{g}^{N}} - \brac{\nabla\L^{f}(w) - \nabla\L(w)} + \brac{g^{n} - g^{n,f}} - \brac{\nabla\L(w) - \nabla\L^{f}(w)}\; ,
    \end{align*}
    where $\L^{f}(w)\coloneqq\bbE_{x\sim\bb{P}_{X}}[\ell(w; x, f(x))]$. Note that $g^{\lambda}_{\mathrm{PP}}=(1-\lambda)g^{n} + \lambda g^{1}_{\mathrm{PP}}$.  Considering the variance of $g^{\lambda}_{\mathrm{PP}}$ and substituting this expression, we obtain:
    \begin{align}
        \bb{V}(g^{\lambda}_{\mathrm{PP}}) &= \bbE\lVert g^{\lambda}_{\mathrm{PP}} - \nabla\L(w)\rVert^{2} \nonumber \\ &= \bbE\norm{(1-\lambda)\brac{g^{n} - \nabla\L(w)} + \lambda \brac{g_{\mathrm{PP}}^{1} - \nabla\L(w)}}^{2} \nonumber \\ &\leq 4(1-\lambda)^2\bbE\norm{g^{n} - \nabla\L(w)}^2 + 4\lambda^2\bbE\norm{\tilde{g}^{N} - \nabla\L(w)}^2 + \nonumber \\&\qquad 4\lambda^2\bbE\norm{\brac{\tilde{g}^{N} - \tilde{g}^{N,f}} - \nabla\L_e^{f}(w)}^2 + 4\lambda^2\bbE\norm{\brac{g^{n} - g^{n,f}} - \nabla\L_e^{f}(w)}^2 \nonumber \\ &= 4(1-\lambda)^2\bb{V}(g^{n}) + 4\lambda^2\brac{\bb{V}\!\brac{\tilde{g}^{N}} + \bb{V}\!\brac{\tilde{g}^{N} - \tilde{g}^{N,f}} + \bb{V}\!\brac{g^{n} - g^{n,f}}} \;, \nonumber 
    \end{align}
    where the first inequality follows from $\norm{\sum_{i=1}^{n}{u_i}}^2\leq n\sum_{i=1}^{n}{\norm{u_i}^2}$, and we also used $\L_e^f(w)=\L(w)-\L^{f}(w)$. Dividing by $4$ and plugging the variance bounds from \Cref{eq:bounded_var,eq:bounded_err_var} (accounting for mini-batch gradients) yields:
    \begin{align*}
        \frac{\bb{V}(g^{\lambda}_{\mathrm{PP}})}{4} &\leq (1-\lambda)^2\cdot\frac{\sigma^2}{n} + \lambda^2\brac{\frac{\sigma^2}{N} + \frac{\sigma^2_e}{N} + \frac{\sigma_e^2}{n}} = \frac{\brac{1-\lambda}^2\sigma^2 + \lambda^2\brac{r\sigma^2 + (1+r)\sigma_e^2}}{n}\; ,
    \end{align*}
    where $1/N=r/n$. Finally, multiplying by $n$ concludes the proof. 
\end{proof}

\subsection{Proof of \Cref{thm:main_convergence}}\label{subapp:main_proof}
\begin{proof}
Recall that we use the AdaGrad stepsize to update $w_t$, i.e., for some parameter
$\eta_0>0$, we update $w_t$ as follows:
\begin{equation}\label{eq:adagrad-norm}
    w_{t+1} = w_t - \eta_t g_t^{\lambda_t}, \quad\text{where}\quad \eta_t = \frac{\eta_0}{\sqrt{\sum_{s=1}^{t}{\|g_s^{\lambda_s}\|^2}}}\; . \tag{AdaGrad-Norm}
\end{equation}
Therefore, we can employ the following lemma (see, e.g.,~Lemma G.3 in \cite{dorfman2024dynamic})
\begin{lemma}\label{lem:nonconvex-adagrad-helper}
    For a $\beta$-smooth and $M$-bounded function (i.e., $\max_{w\in\reals^{d}}{\lvert{\L(w)}\rvert}\leq M$), the iterates $w_1,\ldots,w_{T}$ defined by the above \eqref{eq:adagrad-norm} update rule satisfy:
    \begin{equation*}
        \sum_{t=1}^{T}{\|\nabla \L(w_t)\|^2} \leq \brac{\frac{2M}{\eta_0} + \eta_0 \beta}\sqrt{\sum_{t=1}^{T}{\|g_t^{\lambda_t}\|}^2} + \sum_{t=1}^{T}{(\nabla \L(w_t) - g_t^{\lambda_t})^\top\nabla \L(w_t)}\; .
    \end{equation*}
\end{lemma}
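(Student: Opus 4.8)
The plan is to establish this as a purely deterministic, per-trajectory inequality: no expectation is taken here (expectations enter only later, in the proof of \Cref{thm:main_convergence}), so the coupling of the adaptive stepsize $\eta_t$ to the gradient $g_t^{\lambda_t}$ causes no measurability issues. I would start from the $\beta$-smoothness of $\L$ applied across one \eqref{eq:adagrad-norm} step $w_{t+1}=w_t-\eta_t g_t^{\lambda_t}$, giving
\[
\L(w_{t+1}) \leq \L(w_t) - \eta_t\langle\nabla\L(w_t), g_t^{\lambda_t}\rangle + \frac{\beta\eta_t^2}{2}\norm{g_t^{\lambda_t}}^2\; .
\]
The key algebraic move is to split the inner product as $\langle\nabla\L(w_t),g_t^{\lambda_t}\rangle = \norm{\nabla\L(w_t)}^2 - (\nabla\L(w_t)-g_t^{\lambda_t})^\top\nabla\L(w_t)$, rearrange to isolate $\eta_t\norm{\nabla\L(w_t)}^2$, and then \emph{divide through by $\eta_t>0$}. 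This division is precisely what strips the stepsize off the error term $(\nabla\L(w_t)-g_t^{\lambda_t})^\top\nabla\L(w_t)$, so that after summing over $t=1,\dots,T$ it appears unweighted and reproduces the last term in the statement. Summing leaves two further terms to control: a potential-difference term $\sum_t \eta_t^{-1}(\L(w_t)-\L(w_{t+1}))$ and a curvature term $\frac{\beta}{2}\sum_t \eta_t\norm{g_t^{\lambda_t}}^2$.

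For the curvature term I would write $\eta_t = \eta_0/\sqrt{b_t}$ with $b_t := \sum_{s=1}^t\norm{g_s^{\lambda_s}}^2$, so that $\eta_t\norm{g_t^{\lambda_t}}^2 = \eta_0 (b_t-b_{t-1})/\sqrt{b_t}$, and then invoke the standard AdaGrad telescoping inequality $(b_t-b_{t-1})/\sqrt{b_t} \leq 2(\sqrt{b_t}-\sqrt{b_{t-1}})$, which follows immediately from $\sqrt{b_t}-\sqrt{b_{t-1}} = (b_t-b_{t-1})/(\sqrt{b_t}+\sqrt{b_{t-1}})$. Telescoping (with $b_0=0$) then yields $\frac{\beta}{2}\sum_t \eta_t\norm{g_t^{\lambda_t}}^2 \leq \eta_0\beta\sqrt{b_T}$, which matches the $\eta_0\beta$ piece of the prefactor.

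The main obstacle is the potential-difference term, because $\eta_t^{-1}=\sqrt{b_t}/\eta_0$ is time-varying and coupled to the gradients, so a naive telescoping fails. I would handle it by Abel summation (summation by parts): writing $\sum_t \eta_t^{-1}(\L(w_t)-\L(w_{t+1})) = \eta_1^{-1}\L(w_1) - \eta_T^{-1}\L(w_{T+1}) + \sum_{t=2}^T(\eta_t^{-1}-\eta_{t-1}^{-1})\L(w_t)$. Crucially, $\eta_t^{-1}$ is non-decreasing in $t$ (since $b_t$ is), so every increment $\eta_t^{-1}-\eta_{t-1}^{-1}$ is nonnegative; combining this monotonicity with the two-sided bound $\lvert\L\rvert\leq M$ on each endpoint and on every interior coefficient collapses the sum to $2M\eta_T^{-1} = (2M/\eta_0)\sqrt{b_T}$. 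The one subtlety to flag is that it is the application of $\lvert\L\rvert\le M$ at both endpoints (using $\L(w_1)\le M$ and $-\L(w_{T+1})\le M$) together with the nonnegative increments that produces the factor $2M$ rather than a one-sided $M$. Adding the two bounds together with the unweighted error term gives exactly $\big(\frac{2M}{\eta_0}+\eta_0\beta\big)\sqrt{b_T} + \sum_t(\nabla\L(w_t)-g_t^{\lambda_t})^\top\nabla\L(w_t)$, as claimed.
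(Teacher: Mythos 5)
Your proof is correct and follows the same route as the paper, which does not prove this lemma in-line but imports it as Lemma G.3 of \cite{dorfman2024dynamic}: the standard argument behind that citation is exactly your chain of $\beta$-smoothness, splitting $\langle\nabla\L(w_t),g_t^{\lambda_t}\rangle$, dividing by $\eta_t$, the AdaGrad telescoping bound $\sum_{t}(b_t-b_{t-1})/\sqrt{b_t}\le 2\sqrt{b_T}$, and the Abel-summation/monotonicity bound $\sum_{t}\eta_t^{-1}\brac{\L(w_t)-\L(w_{t+1})}\le 2M/\eta_T$. All steps check out, including the factor $2M$ obtained from applying $\lvert\L\rvert\le M$ at both endpoints together with the nonnegativity of the increments $\eta_t^{-1}-\eta_{t-1}^{-1}$ (the only unstated, harmless convention is handling the degenerate case $b_t=0$, where the update is vacuous anyway).
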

Now, recalling that $g_t^{\lambda_t}$ is an unbiased estimator of $\nabla \L(w_t)$, in the sense that $\E_{t-1} [g_t^{\lambda_t}] = \nabla\L(w_t)$, taking expectation of the bound above yields:
\begin{align}\label{eq:ErrBound1}
        \sum_{t=1}^{T}{\E\|\nabla \L(w_t)\|^2} 
        &\leq 
        \brac{\frac{2M}{\eta_0} + \eta_0 \beta}\E\sqrt{\sum_{t=1}^{T}{\|g_t^{\lambda_t}\|}^2} \nonumber\\
        &= 
        \brac{\frac{2M}{\eta_0} + \eta_0 \beta}\E\sqrt{\sum_{t=1}^{T}{h_t(\lambda_t)}} \nonumber\\
        &\leq 
        \brac{\frac{2M}{\eta_0} + \eta_0 \beta}\E\sqrt{\sum_{t=1}^{T}{\R_{T}^{h}(\lambda^*) +\sum_{t=1}^{T}{h_t(\lambda^*) }}}\; ,   
\end{align}
where the equality follows from the definition $h_t(\lambda)\coloneqq \|g_t^{\lambda}\|^2$, and the last inequality uses the definition of regret,
$\R_{T}^{h}(\lambda^*): = \sum_{t=1}^{T}{h_t(\lambda_t)}-\sum_{t=1}^{T}{h_t(\lambda^*)}$.

Since we update $\lambda_{t}$ using AdaGrad as well (see \Cref{eq:AdaGrad_Lambda}), we can bound the regret $\mathcal{R}_T^{h}(\lambda^*)$, as formalized in the following lemma (proved in \Cref{subapp:regret_bound_Adagrad}).
\begin{lemma}\label{lem:regret_boundAdaGrad}
Assume the stochastic gradients are $G$-bounded; that is, for any $w\in\reals^{d}$ and $(x,y)\in\X\times\Y$, we have $\norm{\nabla\ell(w;x,y)}\leq G$. Then, the 1D-AdaGrad algorithm used in \Cref{eq:AdaGrad_Lambda} yields the following regret bound:
    \begin{align*}
        \R_{T}^{h}(\lambda^*) \leq 128G^2 + 8G\sqrt{2\sum_{t=1}^{T}{h_t(\lambda^*)}}\; .
    \end{align*}
\end{lemma}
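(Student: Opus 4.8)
The plan is to prove \Cref{lem:regret_boundAdaGrad} by instantiating the generic AdaGrad regret bound from \Cref{lem:AdaGradMasterRegBound} for our one-dimensional online problem with loss functions $h_t(\lambda) = \|g_t + \lambda d_t^f\|^2$ over the interval $\D = [0,1]$, and then controlling the data-dependent quantity $\sum_{t=1}^T \|\nabla h_t(\lambda_t)\|^2$ that appears on the right-hand side. The diameter here is $R_\D = 1$, so \Cref{lem:AdaGradMasterRegBound} immediately gives $\R_T^h(\lambda^*) \leq \sqrt{2\sum_{t=1}^T (h_t'(\lambda_t))^2}$. The whole difficulty is that this bound is stated in terms of the gradients \emph{at the played points} $\lambda_t$, whereas the target bound is in terms of $\sum_t h_t(\lambda^*)$, the losses at the comparator. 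I would bridge this gap using self-boundedness: for a squared loss of the form $h_t(\lambda)=\|a_t+\lambda b_t\|^2$ one has $h_t'(\lambda) = 2\langle a_t+\lambda b_t, b_t\rangle$, so by Cauchy--Schwarz $|h_t'(\lambda)| \leq 2\|b_t\|\sqrt{h_t(\lambda)}$, i.e.\ the squared gradient is controlled by the loss value itself times $\|b_t\|^2 = \|d_t^f\|^2$.

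First I would establish the needed uniform bounds. Since $\|\nabla\ell(w;x,y)\|\leq G$, each of the three averaged gradients $g_t$, $g_t^{n,f}$, $\tilde g_t^{N,f}$ has norm at most $G$, whence $\|d_t^f\| = \|\tilde g_t^{N,f}-g_t^{n,f}\| \leq 2G$ and $\|g_t\|\leq G$. Consequently $h_t(\lambda)=\|g_t+\lambda d_t^f\|^2 \leq (G + 1\cdot 2G)^2 = 9G^2$ for all $\lambda\in[0,1]$, and the per-step squared gradient obeys $(h_t'(\lambda_t))^2 \leq 4\|d_t^f\|^2 h_t(\lambda_t) \leq 16G^2\, h_t(\lambda_t)$. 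Summing and plugging into the master bound yields
\begin{align*}
    \R_T^h(\lambda^*) \leq \sqrt{2\sum_{t=1}^T (h_t'(\lambda_t))^2} \leq \sqrt{32 G^2 \sum_{t=1}^T h_t(\lambda_t)} = 4\sqrt{2}\,G\sqrt{\sum_{t=1}^T h_t(\lambda_t)}\; .
\end{align*}

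Now the right-hand side still contains $\sum_t h_t(\lambda_t)$, the cumulative loss of the algorithm, not of the comparator. The key step is to close the loop: by definition $\sum_t h_t(\lambda_t) = \R_T^h(\lambda^*) + \sum_t h_t(\lambda^*)$, so writing $S \coloneqq \sum_t h_t(\lambda^*)$ and $R \coloneqq \R_T^h(\lambda^*)$ the inequality becomes the self-referential quadratic $R \leq 4\sqrt2\,G\sqrt{R+S}$. Squaring and solving this for $R$ (a routine quadratic-formula manipulation, using $\sqrt{u+v}\leq\sqrt u+\sqrt v$ to split the square root) yields a bound of the form $R \leq c_1 G^2 + c_2 G\sqrt{S}$. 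I expect the self-referential resolution to be the main obstacle---not because it is deep, but because it is where the explicit constants $128$ and $8\sqrt2$ in the stated lemma must be tracked carefully; loosening the constants at the self-bounding step or the $\sqrt{u+v}$ split is what pins them down. I would finish by verifying that the constants land at or below the claimed $128G^2$ and $8\sqrt2\,G$, completing the proof.
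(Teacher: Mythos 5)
Your proposal is correct and follows essentially the same route as the paper's proof: instantiate the AdaGrad master bound with $R_\D=1$, use the self-bounding property $\lvert h_t'(\lambda_t)\rvert^2\leq 4\lVert d_t^f\rVert^2 h_t(\lambda_t)\leq 16G^2h_t(\lambda_t)$ via Cauchy--Schwarz and $\lVert d_t^f\rVert\leq 2G$, and then resolve the self-referential inequality $R\leq 4\sqrt{2}\,G\sqrt{R+S}$ by splitting the square root and applying the quadratic helper lemma. The constants work out exactly as you anticipate ($4B^2+2C$ with $B=4\sqrt{2}\,G$ gives $128G^2$), so there is no gap.
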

Plugging this regret bound into \Cref{eq:ErrBound1} gives:
\begin{align}\label{eq:root_of_root}
        \sum_{t=1}^{T}{\E\|\nabla \L(w_t)\|^2} 
        &\leq 
        \brac{\frac{2M}{\eta_0} + \eta_0 \beta}\E\sqrt{128G^2+8G\sqrt{2\sum_{t=1}^{T}{h_t(\lambda^*)}} +\sum_{t=1}^{T}{h_t(\lambda^*)} } \; .
\end{align}
Observe that the last two terms under the square root take the form $8G\sqrt{2A} + A$, where $A=\sum_{t=1}^{T}{h_t(\lambda^*)}$, which we can bound using the helper Lemma~\ref{lem:Helper2} below, which we prove in Appendix~\ref{app:proof_lem-helper2}.
\begin{lemma}\label{lem:Helper2}
For any $A,C>0$, it holds that $C\sqrt{A} + A \leq 2C^2+2A$.
\end{lemma}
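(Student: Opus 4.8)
The plan is to treat this as a one-step application of Young's inequality (equivalently, AM--GM) to the cross term $C\sqrt{A}$. The only nonlinear term on the left-hand side is $C\sqrt{A}$; everything else is already linear in $A$ and $C^2$, so once that single term is controlled the bound is immediate.

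Concretely, I would apply the elementary inequality $2xy \le x^2 + y^2$ with $x = C$ and $y = \sqrt{A}$, which gives $C\sqrt{A} \le \tfrac{1}{2}\brac{C^2 + A}$. Substituting this into the left-hand side yields $C\sqrt{A} + A \le \tfrac{1}{2} C^2 + \tfrac{3}{2} A$. Since $C, A > 0$, the right-hand side is dominated term-by-term by $2C^2 + 2A$ (indeed with considerable slack, as $\tfrac{1}{2} C^2 \le 2C^2$ and $\tfrac{3}{2} A \le 2A$), which establishes the claim.

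There is no genuine obstacle here: the statement is a loose consequence of AM--GM, and the factor-of-two margin in the target bound means one need not even optimize the split in Young's inequality (any constant weighting of $C^2$ and $A$ would do). The looseness is presumably deliberate, since the lemma is invoked on \Cref{eq:root_of_root} only to collapse the nested square-root expression $8G\sqrt{2A} + A$ (i.e.\ the case $C = 8\sqrt{2}\,G$) into a clean linear-in-$A$ form, after which the outer square root separates into a leading $\O(\sqrt{A})$ term plus lower-order contributions; the precise constants are immaterial to the final convergence rate.
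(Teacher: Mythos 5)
Your proof is correct, and it takes a different (though equally elementary) route from the paper. The paper argues by a case split on whether $A \geq C\sqrt{A}$ or $A \leq C\sqrt{A}$: in the first case the left-hand side is at most $2A$, in the second one divides by $\sqrt{A}$ to get $\sqrt{A}\leq C$ and hence a bound of $2C^2$, and the two cases combine via $\max\{2A,2C^2\}\leq 2A+2C^2$. You instead apply Young's inequality $2xy\leq x^2+y^2$ with $x=C$, $y=\sqrt{A}$ to get $C\sqrt{A}\leq\tfrac12(C^2+A)$ directly, yielding the strictly sharper intermediate bound $\tfrac12 C^2+\tfrac32 A$ before relaxing to the stated constants. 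Your argument is a one-line computation with no branching and actually proves a tighter inequality; the paper's case analysis is equally short and mirrors the style of its companion Lemma~\ref{lem:helper_quadratic}, which is proved the same way. Since only the existence of \emph{some} absolute constants matters for the downstream use in \Cref{eq:root_of_root}, the two proofs are interchangeable, and your closing remark about why the slack is harmless is accurate.
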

Hence, from \Cref{eq:root_of_root}, it follows that:
\begin{align}\label{eq:ErrBound2}
        \sum_{t=1}^{T}{\E\|\nabla \L(w_t)\|^2} 
        &\leq
        \brac{\frac{2M}{\eta_0} + \eta_0 \beta}\E\sqrt{128G^2 + 256G^2 +2\sum_{t=1}^{T}{h_t(\lambda^*)}} 
        \nonumber\\ 
        &\leq 
        \brac{\frac{2M}{\eta_0} + \eta_0 \beta}8\sqrt{6}G + \brac{\frac{2M}{\eta_0} + \eta_0 \beta}\E\sqrt{2\sum_{t=1}^{T}{h_t(\lambda^*)} } 
        \nonumber\\  
        &\leq 
        \brac{\frac{2M}{\eta_0} + \eta_0 \beta}8\sqrt{6}G + \brac{\frac{2M}{\eta_0} + \eta_0 \beta}\sqrt{2\sum_{t=1}^{T}{\E h_t(\lambda^*)} }\; ,
\end{align}

where the second inequality holds because $\sqrt{a+b}\leq \sqrt{a} + \sqrt{b}$ for any $a,b\geq 0$, and the last inequality follows from Jensen's inequality applied to the concave function $H(z):=\sqrt{z}$. Next, focusing on $\bbE h_t(\lambda^*)$, we apply the law of total expectation to obtain:
\begin{align*}
    \bbE h_t(\lambda^*) = \bbE\lVert g_t^{\lambda^*}\rVert^2 = \bbE\bbE_{t-1}\lVert{g_t^{\lambda^*}}\rVert^2 &\overset{(\dag)}{=} \bbE[\bb{V}_{t-1}(g_t^{\lambda^*}) + \lVert \bbE_{t-1} g_t^{\lambda^*}\rVert^2] \\ &\overset{(\ddag)}{=} \bbE[\bb{V}_{t-1}(g_t^{\lambda^*})] + \bbE\!\norm{\nabla\L(w_t)}^2 \; .
\end{align*}
where $(\dag)$ follows from the definition of conditional variance and $(\ddag)$ is due to the (conditional) unbiasedness of $g_t^{\lambda^*}$. Note that the conditional variance with the optimal $\lambda^*$, $\bb{V}_{t-1}(g_t^{\lambda^*})$, is bounded by $V^*\coloneqq 4\kappa\sigma^2/n$ according to Corollary~\ref{cor:optimal_tuning}, where $\kappa\coloneqq \frac{(1 + \nicefrac{\sigma^2}{\sigma_e^2})^{-1} + r}{1 + r}$. Therefore,
\[
    \bbE h_t(\lambda^*) \leq {V^*} + {\bbE\!\norm{\nabla\L(w_t)}^2}\; .
\]
Substituting this back into \Cref{eq:ErrBound2}, we get:
\begin{align}\label{eq:ErrBound3}
        \sum_{t=1}^{T}{\E\|\nabla \L(w_t)\|^2} 
        &\leq 
        \brac{\frac{2M}{\eta_0} + \eta_0 \beta}8\sqrt{6}G + \brac{\frac{2M}{\eta_0} + \eta_0 \beta}
        \sqrt{2V^*T+  2\sum_{t=1}^{T}\bbE\!\norm{\nabla\L(w_t)}^2  } 
         \nonumber\\  
          &\leq 
        \brac{\frac{2M}{\eta_0} + \eta_0 \beta}8\sqrt{6}G + \brac{\frac{2M}{\eta_0} + \eta_0 \beta}\!\brac{\sqrt{2V^*T} + \sqrt{2\sum_{t=1}^{T}\bbE\!\norm{\nabla\L(w_t)}^2}}.
\end{align}
We can now solve this inequality for $\sum_{t=1}^{T}{\bbE\lVert\nabla\L(w_t)\rVert^2}$ using the following lemma, which we prove in \Cref{subapp:helper_quadratic}.
\begin{lemma}\label{lem:helper_quadratic}
    Let $A,B,C>0$. If $A\leq B\sqrt{A}+C$, then $A\leq 4B^2 + 2C$. 
\end{lemma}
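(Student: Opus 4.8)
The final statement to prove is Lemma~\ref{lem:helper_quadratic}: for $A,B,C>0$, if $A\leq B\sqrt{A}+C$, then $A\leq 4B^2+2C$.

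\textbf{The plan.} The inequality $A\leq B\sqrt{A}+C$ is quadratic in the variable $s\coloneqq\sqrt{A}$, so the natural approach is to treat $s^2 - Bs - C\leq 0$ and use the quadratic formula to extract an upper bound on $s$, then square. Concretely, I would first substitute $s=\sqrt{A}\geq 0$ to rewrite the hypothesis as $s^2 - Bs - C\leq 0$. Since the leading coefficient is positive, this forces $s$ to lie below the larger root of the associated quadratic, i.e.
\[
    s \leq \frac{B+\sqrt{B^2+4C}}{2}\; .
\]
Squaring and then bounding $(B+\sqrt{B^2+4C})^2$ crudely—using $(a+b)^2\leq 2a^2+2b^2$ with $a=B$, $b=\sqrt{B^2+4C}$—gives $s^2\leq \tfrac14\cdot 2(B^2 + B^2 + 4C) = B^2 + B^2 + 2C$, wait, let me recompute: $\tfrac14(2B^2 + 2(B^2+4C)) = \tfrac14(4B^2 + 8C) = B^2+2C$. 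That already yields $A=s^2\leq B^2+2C$, which is in fact tighter than the claimed $4B^2+2C$, so the stated bound follows immediately (with room to spare).

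\textbf{A cleaner alternative.} Rather than invoke the quadratic formula, I would prefer the elementary splitting argument, which avoids square roots entirely and is more robust. Split into two cases according to which term on the right dominates. If $B\sqrt{A}\leq C$, then $A\leq B\sqrt{A}+C\leq 2C\leq 4B^2+2C$. Otherwise $C< B\sqrt{A}$, so $A\leq B\sqrt{A}+C\leq 2B\sqrt{A}$, and dividing by $\sqrt{A}>0$ gives $\sqrt{A}\leq 2B$, hence $A\leq 4B^2\leq 4B^2+2C$. Either branch yields the claim. This case analysis is the approach I would actually write down, since it is transparent and each step is a one-line inequality; one only needs to handle the degenerate possibility $A=0$ separately (trivially $0\leq 4B^2+2C$), or note that the division step is valid whenever $\sqrt{A}>0$.

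\textbf{Expected obstacle.} There is essentially no hard step here—the result is a routine ``absorbing inequality'' of the type used to close convergence recursions. The only minor care needed is the case split (or the sign bookkeeping in the quadratic-formula route) and confirming the constant $4$ in $4B^2$ is large enough; the case analysis shows $4B^2$ is not even tight, so the bound holds comfortably. The reason the looser constant $4B^2+2C$ is stated rather than the sharper $B^2+2C$ is presumably just convenience for the application in \Cref{eq:ErrBound3}, where $A=\sum_t\bbE\|\nabla\L(w_t)\|^2$, $B=(2M/\eta_0+\eta_0\beta)\sqrt{2}$, and $C$ collects the remaining terms; the exact constant is immaterial to the final $\O(\sqrt{V^*/T})$ rate.
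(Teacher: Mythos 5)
Your proposal is correct, and the case-split argument you settle on (splitting on whether $B\sqrt{A}\leq C$ or $C< B\sqrt{A}$, absorbing into $2C$ or $4B^2$ respectively) is exactly the proof the paper gives. The quadratic-formula route you sketch first is also valid and yields the sharper constant $B^2+2C$, but since the paper only needs the stated bound, the two arguments are essentially interchangeable here.
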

Applying Lemma~\ref{lem:helper_quadratic} to the inequality in \Cref{eq:ErrBound3}, we get:
\begin{equation*}\label{eq:ErrBound4}
        \sum_{t=1}^{T}{\E\|\nabla \L(w_t)\|^2} 
        \leq 
        2\brac{\frac{2M}{\eta_0} + \eta_0 \beta}\brac{8\sqrt{6}G+\sqrt{2V^*T}}
        + 8\brac{\frac{2M}{\eta_0} + \eta_0 \beta}^2\; .  
\end{equation*}
Setting $\eta_0 = \sqrt{2M/\beta}$ gives the following bound:
\begin{align*}\label{eq:ErrBound4}
        \sum_{t=1}^{T}{\E\|\nabla \L(w_t)\|^2} 
        &\leq 64\sqrt{3M\beta}G + 8\sqrt{M\beta V^{*}T} + 64M\beta\; .
\end{align*}
Finally, dividing by $T$ concludes the proof.
\end{proof}

\subsection{Proof of Lemma~\ref{lem:regret_boundAdaGrad}}\label{subapp:regret_bound_Adagrad}
\begin{proof}
    By the definition of $h_t(\lambda)=\lVert g_t^{\lambda}\rVert^2=\lVert g_t + \lambda d_t^{f}\rVert^2$, the gradient (derivative) of $h_t$ is given by $\nabla h_t(\lambda) = 2\langle d_t^f, g_t^{\lambda}\rangle$. Applying the Cauchy-Schwarz inequality, we can bound this gradient norm as follows:
    \begin{equation}\label{eq:regret_self_bounding}
        \norm{\nabla h_t(\lambda_{t})}^2 \leq 4\lVert d_t^{f}\rVert^{2}\lVert g_t^{\lambda_{t}}\rVert^{2} = 4\lVert d_t^{f}\rVert^{2} h_t(\lambda_{t})\leq 16G^2 h_t(\lambda_{t})\; , 
    \end{equation}
    where the last inequality follows from the bounded gradients assumption and the triangle inequality:
    \begin{align*}
        \lVert d_{t}^{f}\rVert &= \norm{\frac{1}{N}\sum_{i=1}^{N}{\nabla\ell(w_t; \tilde{x}_{t}^{i}, f(\tilde{x}_{t}^{i}))} - \frac{1}{n}\sum_{i=1}^{n}{\nabla\ell(w_t; x_t^{i}, f(x_t^i))}} \\ &\leq \frac{1}{N}\sum_{i=1}^{N}{\norm{\nabla\ell(w_t; \tilde{x}_{t}^{i}, f(\tilde{x}_{t}^{i}))}} + \frac{1}{n}\sum_{i=1}^{n}{\norm{\nabla\ell(w_t; x_t^{i}, f(x_t^i))}} \leq 2G\; .
    \end{align*}
    Substituting \Cref{eq:regret_self_bounding} into the AdaGrad regret bound from Lemma~\ref{lem:AdaGradMasterRegBound}, and noting that for our case $\D=[0,1]$ with diameter $R_\D=1$, we obtain:
    \begin{align*}
        \R_T^{h}(\lambda^*) \leq \sqrt{2\sum_{t=1}^{T}{\norm{\nabla h_t(\lambda_{t})}^2}} &\leq \sqrt{32G^2\sum_{t=1}^{T}{h_t(\lambda_{t})}} \\ &= 4G\sqrt{2\R_{T}^{h}(\lambda^*) + 2\sum_{t=1}^{T}{h_t(\lambda^*)}}  \\ &\leq 4G\sqrt{2\R_T^h(\lambda^*)} + 4G\sqrt{2\sum_{t=1}^{T}{h_t(\lambda^*)}}\; ,
    \end{align*}
    where the last inequality is due to $\sqrt{a+b}\leq \sqrt{a} + \sqrt{b}$. Solving this inequality for $\R_T^{h}(\lambda^*)$  (using Lemma~\ref{lem:helper_quadratic}) yields the result:
    \begin{align*}
        \R_T^{h}(\lambda^*) \leq 128G^2+8G\sqrt{2\sum_{t=1}^{T}{h_t(\lambda^*)}}\; .
    \end{align*}
\end{proof}

\subsection{Proof of Lemma~\ref{lem:Helper2}}
\label{app:proof_lem-helper2}
\begin{proof}
We consider two cases: 
\paragraph{Case 1: $A\geq C\sqrt{A}$. } In this case, $C\sqrt{A}+A \leq 2A$.
\paragraph{Case 2: $A\leq C\sqrt{A}$. } Dividing both sides by $\sqrt{A}$, we obtain $\sqrt{A}\leq C$. Hence:
\[
    C\sqrt{A}+A\leq 2C\sqrt{A}\leq 2C^2\; .
\]
Combining both cases, we have $C\sqrt{A}+A\leq\max\{2A, 2C^2\}\leq 2C^2 +2A$, which concludes the proof.
\end{proof}

\subsection{Proof of Lemma~\ref{lem:helper_quadratic}}\label{subapp:helper_quadratic}
\begin{proof}
    Consider two cases. First, if $B\sqrt{A}\geq C$, then $A\leq 2B\sqrt{A}$, which implies $A\leq 4B^2$. Second, if $B\sqrt{A}\leq C$, then $A\leq 2C$. In both cases, we conclude that $A\leq\max\cbrac{4B^2,2C}\leq 4B^2 + 2C$. 
\end{proof}

\newpage 

\section{A comparative analysis of optimal $\lambda$ tuning for \texttt{PPI++} and \methodName{}}\label{app:lambda_comparison}
Next, we theoretically compare the optimal weighting parameter $\lambda$ in our method (\methodName{}) and in Prediction-Powered Inference (PPI++). This parameter plays a crucial role in balancing the influence of labeled and unlabeled data in both approaches, directly affecting the variance. While both methods aim to reduce variance in parameter estimation, they derive the optimal $\lambda$ from different optimization objectives. 
We analyze the derivations of the original PPI++ expression, and our variance bound minimization from Corollary~\ref{cor:optimal_tuning}.
For clarity and concrete interpretation, we focus on linear regression with mean squared error loss throughout our analysis.
\subsection{PPI++ expression}
In this section, we analyze the optimal $\lambda$ expression defined in~\cite[Proposition 2]{angelopoulos2023ppi++}:
$$\lambda_{PP}^* = \frac{\text{Tr}(H_{w^*}^{-1}(\text{Cov}(\nabla\ell_{w^*}, \nabla\ell_{w^*}^f) + \text{Cov}(\nabla\ell_{w^*}^f, \nabla\ell_{w^*}))H_{w^*}^{-1})}{2(1+r)\text{Tr}(H_{w^*}^{-1}\text{Cov}(\nabla\ell_{w^*}^f)H_{w^*}^{-1})},$$
where $H_{{w}^*}$ is the Hessian of loss ($\nabla^2\ell_{w^*}$) with the optimal parameter $w^*$.

For linear regression with MSE loss, we consider a model $\hat{y} = x^\top w$ with loss $\ell(w, (x, y)) = \frac{1}{2}(y - x^\top w)^2$ and prediction loss $\ell^f(w, (x, f(x))) = \frac{1}{2}(f(x) - x^\top w)^2$. 
This yields the following gradients: 
\begin{align*}
    \nabla\ell_{w} = x(x^\top w - y),\quad
    \nabla\ell_{w}^f = x(x^\top w - f(x)),
\end{align*}
 with Hessian $H_{w} = xx^T$, which at $w^*$ becomes $H_{w^*} = \mathbb{E}[xx^T] = \Sigma_x$.

We define the true residual $\varepsilon = y - x^\top w^{*}$ and pseudo-label residual $\varepsilon_f = f(x) - x^\top w^*$, allowing us to express the gradients as $\nabla\ell_{w^*} = -x\varepsilon$ and $\nabla\ell_{w^*}^f = -x\varepsilon_f$. Since we use the true regression coefficient vector $w^*$, it is reasonable to assume independence between features $x$ and residuals $\varepsilon, \varepsilon_f$. Under this simplified assumption, the covariance terms become $\text{Cov}(\nabla\ell_{w^*}, \nabla\ell_{w^*}^f) = \mathbb{E}[xx^T\varepsilon\varepsilon_f] \coloneqq \Sigma_x \cdot \text{Cov}(\varepsilon, \varepsilon_f)$ and $\text{Cov}(\nabla\ell_{w^*}^f) = \mathbb{E}[xx^T\varepsilon_f^2]\coloneqq \Sigma_x \cdot \text{Var}(\varepsilon_f)$.

We now substitute into the optimal $\lambda$ formula:
\begin{align*}
\lambda^* &= \frac{\text{Tr}(\Sigma_x^{-1}(\Sigma_x \cdot \text{Cov}(\varepsilon, \varepsilon_f) + \Sigma_x \cdot (\text{Cov}(\varepsilon, \varepsilon_f))^T)\Sigma_x^{-1})}{2(1+r)\text{Tr}(\Sigma_x^{-1}\Sigma_x \cdot \text{Var}(\varepsilon_f)\Sigma_x^{-1})} \\
&\underset{\text{cov is symmetric}}{\mathbf{=}} \frac{\text{Tr}(\Sigma_x^{-1}\Sigma_x \cdot \text{Cov}(\varepsilon, \varepsilon_f)\Sigma_x^{-1})}{(1+r)\text{Tr}(\Sigma_x^{-1}\Sigma_x \cdot \text{Var}(\varepsilon_f)\Sigma_x^{-1})}.
\end{align*}
This results in the simplified optimal value for $ \lambda_{\text{PPI++}}^* $ in the linear case:
\begin{equation}
    \label{app:lambda_ppi++}
    \lambda_{\text{PPI++}}^* = \frac{1}{1+r}\cdot\frac{\text{Cov}(\varepsilon, \varepsilon_f)}{\text{Var}(\varepsilon_f)}.
\end{equation}

This demonstrates that for the general linear case with MSE loss, the optimal $\lambda$ depends on the covariance between true $\varepsilon$ and pseudo-labeled $\varepsilon_f$ residuals, the variance of the pseudo-labeled residuals$\text{Var}(\varepsilon_f)$, and the labeled-unlabeled data ratio $r$.

\subsection{PP-SSL expression}
From Corollary~\ref{cor:optimal_tuning}, the optimal $\lambda^*$ that minimizes the variance bound is:
$$\lambda^* = \frac{1}{1 + r} \cdot \frac{\sigma^2}{\sigma^2 + \sigma_e^2},$$
where $r = n/N$ is the labeled-unlabeled ratio and $\sigma^2$ and $\sigma_e^2$ are constants from Assumption~\ref{eq:bounded_var},~\ref{eq:bounded_err_var}.

For linear regression with MSE loss, by assuming independence between features and residuals we have:
$$\sigma^2= \sup_{w\in\reals^{d}}{\bbE_{x,y\sim P}\norm{\nabla\ell(w; x,y) - \nabla\L(w)}^2} = \bbE_{x,y\sim P}[\varepsilon^2\norm{x}^2] = \text{Var}(\varepsilon)\text{Tr}(\Sigma_x),$$
and
$$\sigma_e^2 = \bbE_{x,y\sim P}(\varepsilon_f^2\cdot x^\top x-\norm{\text{Cov}(x,\varepsilon_f)^2})= \text{Var}(\varepsilon_f)\cdot \text{Tr}(\Sigma_x).$$

Substituting the two expressions into $\lambda_\text{PP-SSL}^*$, yields
$$\lambda_\text{PP-SSL}^* = \frac{1}{1 + r} \cdot \frac{\text{Var}(\varepsilon)}{\text{Var}(\varepsilon) + \text{Var}(\varepsilon_f)}.$$
The above expression shows that the optimal $\lambda_\text{PP-SSL}^*$ depends on the variances of true and pseudo-labeled residuals, and $r$---the ratio between the number of labeled and unlabeled points.

While both \texttt{PPI++} and \methodName{} down-weight the contribution of the pseudo-labeled data based on the accuracy of the teacher, they differ in key ways. When pseudo-labels are perfect (i.e., $\varepsilon_f = \varepsilon$), both \texttt{PPI++} and \methodName{} assign high weights to the pseudo-label term: $\lambda_{\text{PP-SSL}}^* = \frac{1}{2(1 + r)}$ and $\lambda_{\text{PP}}^* = \frac{1}{1 + r}$, with \texttt{PPI++} placing more trust in the pseudo-labels. When pseudo-labels are noisy or uncorrelated with the true residuals, both methods reduce the contribution of the  pseudo-labeled data by assigning smaller $\lambda^*$ values. 


The analysis of \methodName{} under the simple linear regression model with MSE loss can be conducted not only through minimizing a bound on the gradient variance, but also by directly minimizing the gradient variance itself. In fact, this direct optimization yields the same analytical solution for $\lambda^*$ as in \texttt{PPI++}; see the derivation below. However, the two methods differ in practice: \texttt{PPI++} computes gradients and Hessians at the optimal point using all data offline, while \methodName{} adapts $\lambda$ dynamically during training by minimizing the variance of the gradient estimate on-the-fly. Consequently, the actual values of $\lambda$ used in training often differ between the two approaches, despite the similar underlying theory. This is reflected in performance differences, as shown in Figure~\ref{fig:synth-2groups-mse}.

For completeness, we conclude this discussion by presenting the analytical solution for our $\lambda^*$ that minimizes the prediction-powered gradient variance under a linear model.
Recall $g^{\text{PPI}}_\lambda(w)$ from Equation~\ref{eq:g_ppi}, its $\text{Cov}(g^{\text{PPI}}_\lambda(w))$ can be reformulated as
$$\text{Cov}(g^{\text{PPI}}_\lambda(w)) = \frac{\text{Cov}(\nabla\ell)}{n} - \frac{\lambda(\text{Cov}(\nabla\ell, \nabla\ell^f) + \text{Cov}(\nabla\ell^f, \nabla\ell))}{n} + \lambda^2 (r+1)\frac{\text{Cov}(\nabla \ell^f)}{n}.$$

For linear regression with MSE loss, assuming independence between features and residuals, as detailed before, the covariance matrices share a common factor $\Sigma_x$, resulting in
$$\text{Cov}(g^{\text{PPI}}_\lambda(w)) = \frac{\Sigma_x}{n} \left( \text{Var}(\varepsilon) - \lambda(2\text{Cov}(\varepsilon, \varepsilon_f)) + \lambda^2 (r+1)\text{Var}(\varepsilon_f) \right).$$
By taking the gradient with respect to $\lambda$, equating it to zero, and re-arranging the terms we get:
$$\lambda^*_{\text{linear}} = \frac{1}{1+r}\frac{\text{Cov}(\varepsilon, \varepsilon_f)}{\text{Var}(\varepsilon_f)}.$$

Observe the above $\lambda^*_{\text{linear}}$ coincides with the one from Equation \eqref{app:lambda_ppi++}.

\section{Synthetic regression experiments with two-groups: supplementary details}
\label{app:exp-synth}

In this section, we provide additional experimental details related to the experiments described in Section~\ref{Sec:synth-experiments}. This includes the data generation process and hyperparameter configuration (Section~\ref{sec:app-synth-exp-setup}), extended results for the same experimental setup (Section~\ref{sec:experiments-group-feature}), and a new experiment where the group indicator is assumed to be unknown (Section~\ref{sec:app-synth-two-group-no-ind}). Lastly, Section~\ref{app:synth_comparison} presents a comparison between models trained with and without access to the group indicator.

\subsection{Experimental details}
\label{sec:app-synth-exp-setup}
\paragraph{Metrics}
 We performed tests of solving regression problems, therefore our metrics are as expected for those experiments: 
 \begin{itemize}
     \item MSE (Mean Squared Error), defined as $\frac{1}{n_{\text{test}}}\sum_{i=1}^{n_{\text{test}}}(f(x^i)-y^i)^2$, where $f(x)$ is a regression model. (lower is better)
     \item $R^2$, defined as $1-\frac{\sum_{i=1}^{n_\text{test}}(f(x^i)-y^i)^2}{\sum_{i=1}^{n_\text{test}}(\bar{y}-y^i)^2}$, where $\bar{y}$ is the empirical mean of $y$ values in the test set. (higher is better)
 \end{itemize}
\paragraph{Setup and environment}
The experiments were conducted on a system running Ubuntu 20.04.6 LTS, each experimnt (single seed) with 2 CPU cores of Intel(R) Xeon(R) Gold CPUs at 2.40 GHz, 32 GB of RAM. The software environment used Python 3.11.3 and PyTorch 2.5.1.

\paragraph{Data generation process}
\label{app:synth_data}
We design a controlled synthetic regression experiment to evaluate the performance of our method in a setting with group-wise label shift. The dataset consists of $n$ samples $x^i \in \mathbb{R}^{m}$, where each row is independently drawn from a standard normal distribution $\mathcal{N}(0,I_m)$. The weight vector $w \in \mathbb{R}^m$ is also sampled from $\mathcal{N}(0,I_m)$. The target labels are computed as $y^i = w^\top x^i+ b^i$, where $b \in \mathbb{R}^n$ is a bias term drawn from $\mathcal{N}(0,I_n)$. To create two distinct groups, we define a split threshold $\tau=0.2$, which corresponds to the $\tau$-th percentile of the linear projection $w^\top x$ (excluding the bias term). For samples in the second group (i.e., those above the threshold), we modify the target by adding noise: $y =  w^\top x + b + \epsilon$, where $\epsilon \sim \mathcal{N}(\mu, 1)$. By varying $\mu$, we control the shift in label distribution between groups and evaluate the robustness of each method. In the experiments from Section~\ref{Sec:synth-experiments}, which assume access to the group indicator, we augment the generated features with an additional feature representing the group identity. This results in $m + 1$ input features. The ground truth labels are generated using a weight vector $w\in\mathbb{R}^{m+1}$, where the last component---corresponding to the group feature---is zero. This setup allows us to train a model with an expanded input space, enabling it to capture potential group-dependent variations in the data.

\paragraph{Model details}
We train a linear model in PyTorch with weights of size 
$m$ (or $m+1$ when the group indicator feature is included) and a bias term, using the ADAM optimizer.
\paragraph{Hyperparameters}
Table~\ref{tab:combined-experiment-settings} summarizes the hyperparameters used in the experiments described in Sections ~\ref{sec:app-synth-two-group-no-ind},~\ref{sec:experiments-group-feature}, and~\ref{Sec:synth-experiments}. 

\begin{table}[h]
\centering
\caption{Experimental settings for synthetic regression and two-groups regression tasks}
\vspace{0.1in}
\label{tab:combined-experiment-settings}
\begin{tabular}{lll}
\toprule
\textbf{Parameter} & \shortstack[l]{\textbf{Synthetic regression} \\ \textbf{without group indicator}} & \shortstack[l]{\textbf{Synthetic regression} \\ \textbf{with group indicator}} \\
\midrule
Total number of samples & 2000 & 2000 \\
Number of features ($m$) & 10 & 10 (+1 group indicator) \\
Split value ($\tau$) & 0.2 & 0.2 \\
Optimizer & Adam & Adam \\
Batch size & 256 & 256 \\
Learning rate & 0.001 & 0.001 \\
Epochs & 3000 & 3000 \\
Number of labeled samples ($n$) & 20 & 20 \\
Number of unlabeled samples ($N$) & 990 & 990 \\
Number of test samples ($n_\text{test}$) & 790 & 790 \\
Number of validation samples ($n_\text{val}$) & 200 & 200 \\
Labeled fraction & 1\% & 1\% \\
Unlabeled fraction & 50\% & 50\% \\
Validation fraction & 10\% & 10\% \\
Test fraction & 39\% & 39\% \\
Number of seeds & 100 & 100 \\
Early stopping & Enabled & Enabled \\
\bottomrule
\end{tabular}
\end{table}

\subsection{Training with access to the group indicator}
\label{sec:experiments-group-feature}

This section provides a detailed report of the experiments summarized in Section~\ref{Sec:synth-experiments}.
We evaluate performance across a range of noise means $\mu \in [0.1, 7]$, which are applied as a bias to group B. Models are assessed using mean squared error (MSE) and $R^2$ scores, both overall and per group.
Figure~\ref{fig:synt_2groups_w_feature} presents the MSE for all methods across the extended range of noise biases $\mu$, expanding upon the results shown in Figure~\ref{fig:synth-wfeature-groupwise}. Figure~\ref{fig:synt_2groups_w_feature_r2} reports the corresponding $R^2$ scores on the full test set. Note that higher $R^2$ values indicate better performance, while $R^2 < 0$ implies that a naive predictor (i.e., always predicting the mean $\bar{y}$) performs better.
Following that figure, we can see that our method matches or outperforms all baselines across the full range of noise values. The performance gap increases as the bias in group B increases, demonstrating our method’s robustness to inaccurate pseudo-labels and its ability to effectively leverage the group indicator.
\begin{figure}[h]
    \centering
     \begin{subfigure}[t]{0.40\linewidth}
    \includegraphics[width=1\linewidth]{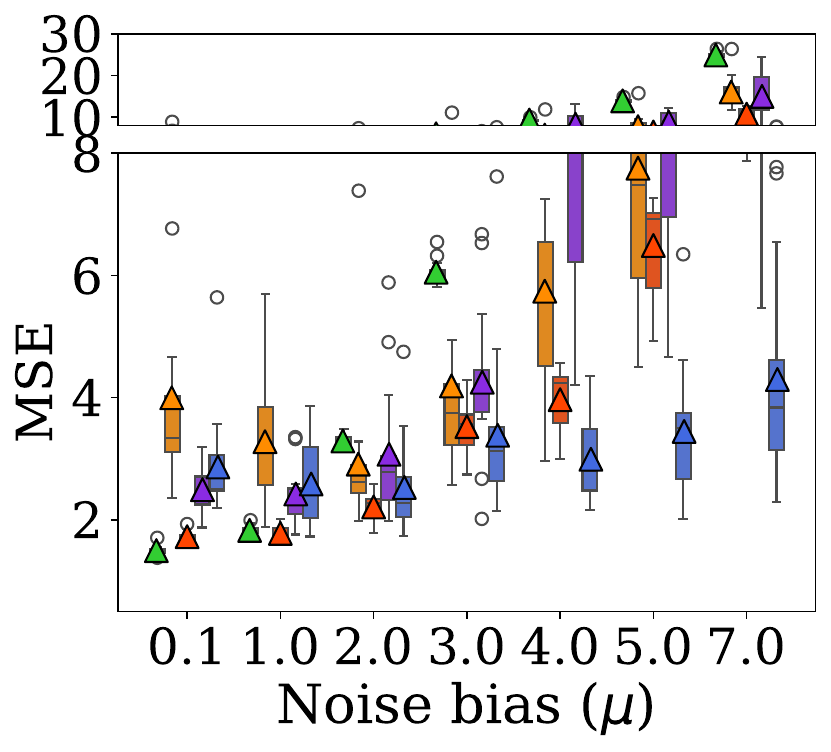}
    \end{subfigure}
     \begin{subfigure}[t]{0.20\linewidth}
     \vspace{-100pt}
    \includegraphics[width=1\linewidth]{figures/legend_cropped.pdf}
    \end{subfigure}
    \caption{MSE on synthetic data as a function of noise bias \( \mu \). Results are shown for the entire test set. Results evaluated on 100 independent experiments.}
    \label{fig:synt_2groups_w_feature}
\end{figure}

\begin{figure}[h]
    \centering
    \begin{subfigure}[t]{0.40\linewidth}
    \includegraphics[width=1\linewidth]{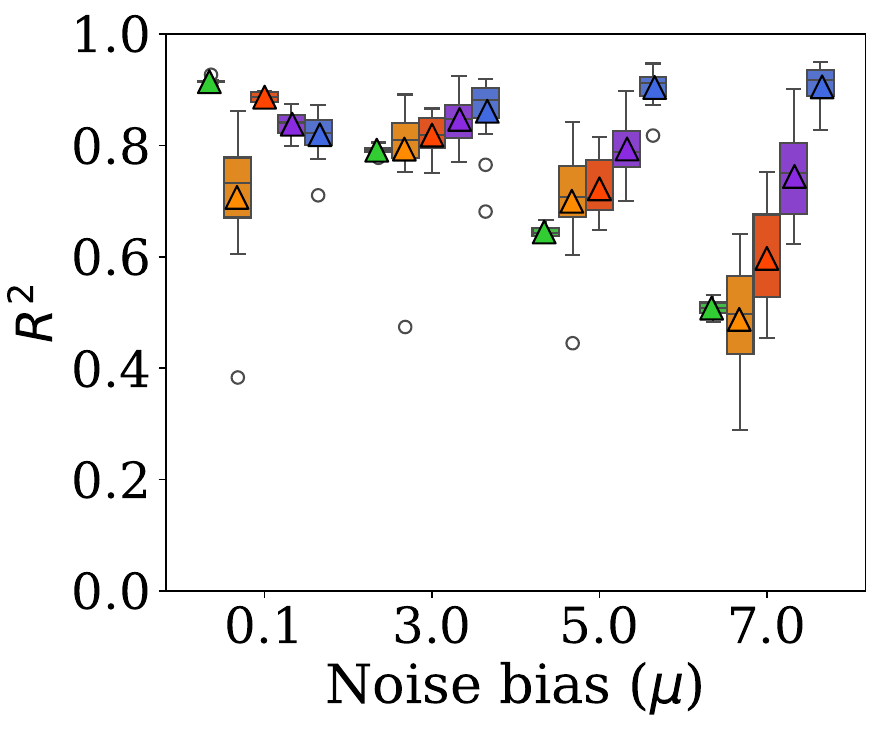}
    \end{subfigure}
     \begin{subfigure}[t]{0.20\linewidth}
     \vspace{-100pt}
    \includegraphics[width=1\linewidth]{figures/legend_cropped.pdf}
    \end{subfigure}
    \caption{$R^2$ on synthetic data as a function of noise bias \( \mu \). Results on the entire test set when training with access to the group indicator feature. Results evaluated on 100 independent experiments.}
    \label{fig:synt_2groups_w_feature_r2}
\end{figure}

\subsection{Training without access to the group indicator}
\label{sec:app-synth-two-group-no-ind}
This experiment extends the setup from Section~\ref{Sec:synth-experiments}, but differs in a key aspect: it does not assume access to the group indicator for each sample.

The data is generated as described in Section~\ref{app:synth_data}, using the hyperparameters specified in Table~\ref{tab:combined-experiment-settings}.
Figure~\ref{fig:synth-2groups-mse} shows the group-wise MSE. For group A, where the teacher is accurate, all methods perform similarly well, with the \texttt{Teacher} model outperforming others. However, in group B, which is affected by biased noise, we can see the advantage of our approach---especially as the noise mean $\mu$ increases. This highlights our method’s robustness even in the absence of explicit group information.
As can be seen in Figure~\ref{fig:synth-2groups-mse-r2-total} our method also achieves the lowest MSE and the highest $R^2$ under high noise-bias conditions across both groups, while maintaining competitive results when noise-bias is low.

\begin{figure}[h]
    \centering
    \begin{subfigure}[t]{0.40\linewidth}
        \includegraphics[width=\linewidth,valign=t]{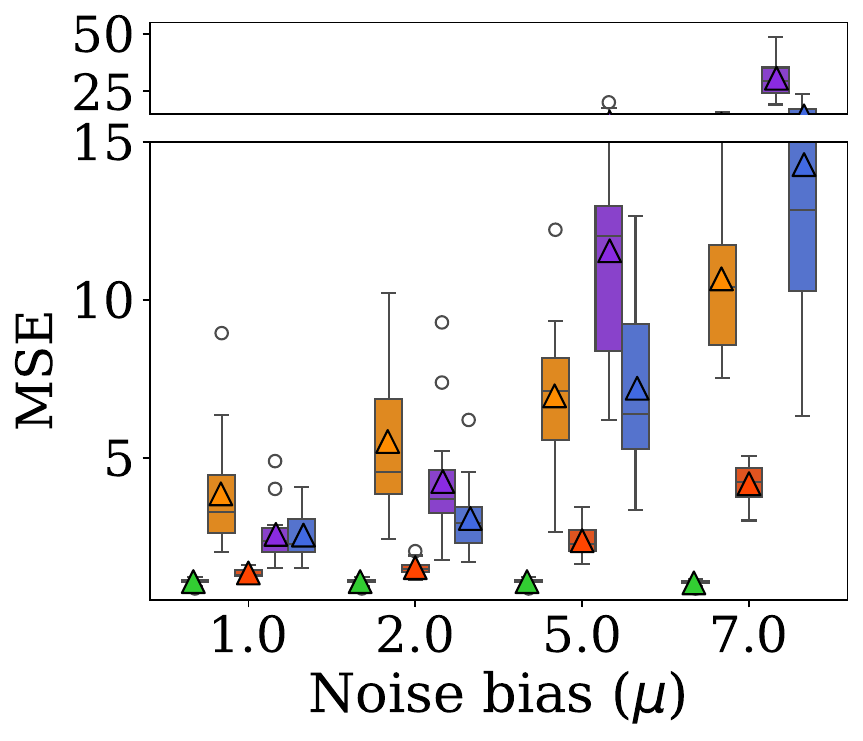}
    \caption{Group A}
    \end{subfigure}
    \begin{subfigure}[t]{0.40\linewidth}
    \includegraphics[width=\linewidth,valign=t]{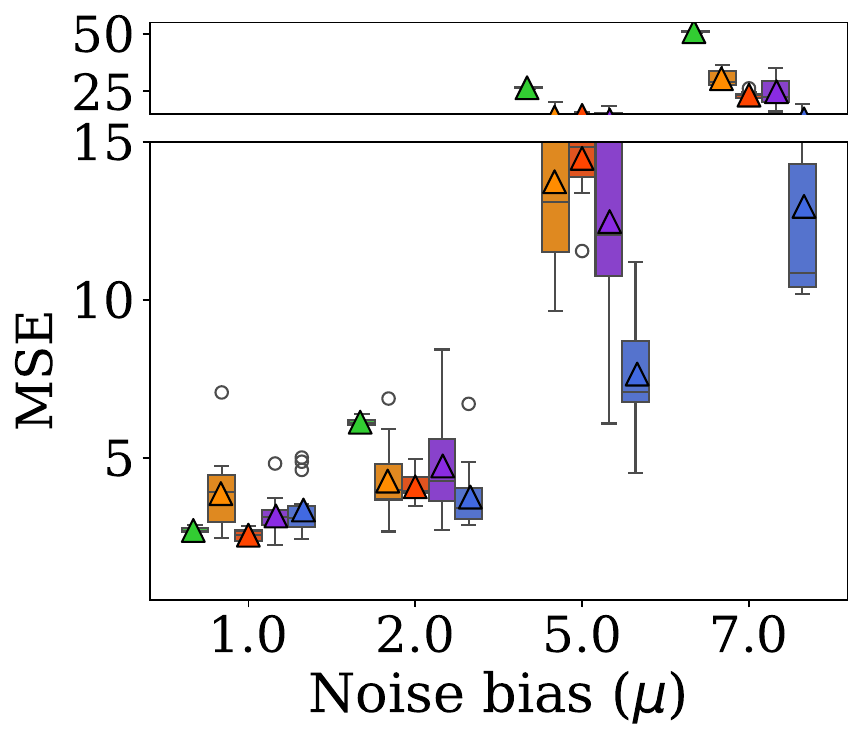}
    \caption{Group B}
    \end{subfigure}
    \begin{subfigure}[t]{0.15\linewidth}
    \vspace{10pt}
    \includegraphics[width=\linewidth]{figures/legend_cropped.pdf}
    \end{subfigure}
    \caption{MSE on synthetic data without group indicator as a function of noise bias \( \mu \). Results are shown for: (a) the full test set; (b) Group A, where the teacher model is accurate (oracle-like); and (c) Group B, which includes additive biased noise. Results evaluated on 100 independent experiments.}
    \label{fig:synth-2groups-mse}
\end{figure}

\begin{figure}[h]
    \centering
    \begin{subfigure}[t]{0.40\linewidth}
        \includegraphics[width=\linewidth,valign=t]{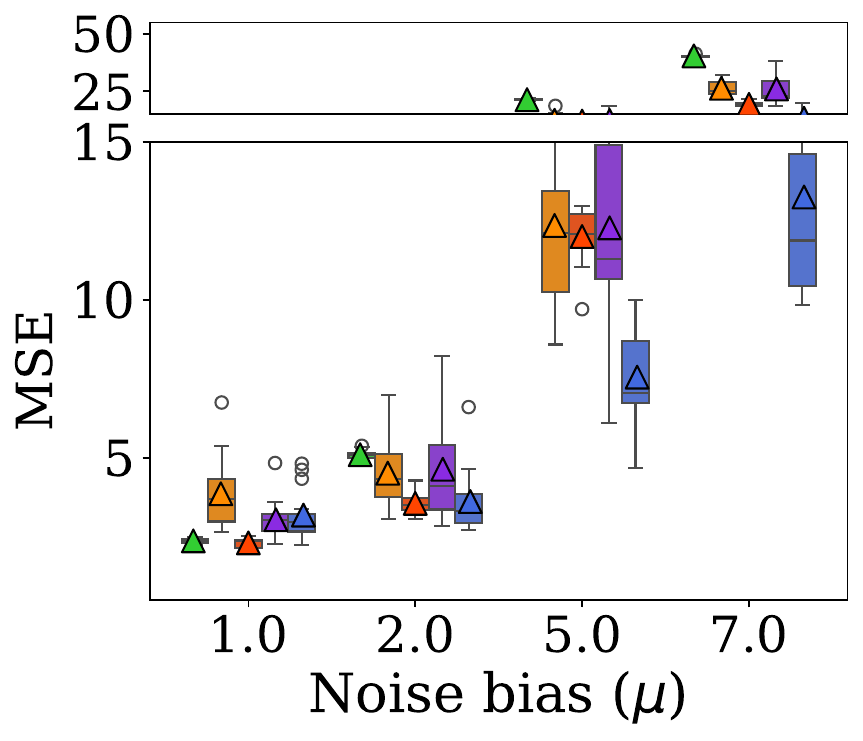}
    \caption{MSE}
    \end{subfigure}
    \begin{subfigure}[t]{0.40\linewidth}
    \includegraphics[width=\linewidth,valign=t]{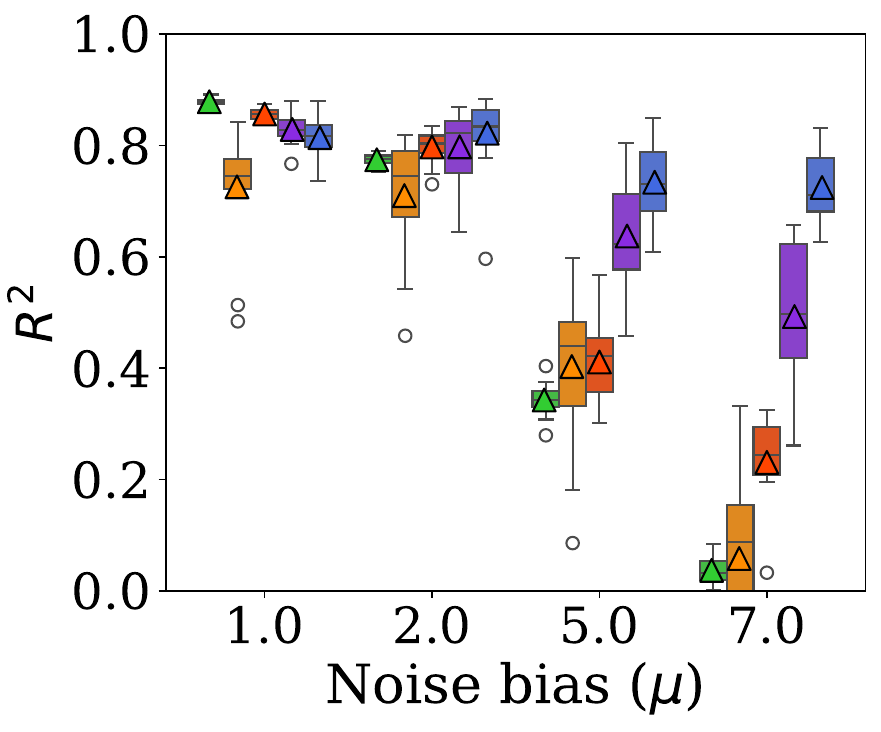}
    \caption{$R^2$}
    \end{subfigure}
    \begin{subfigure}[t]{0.15\linewidth}
    \vspace{10pt}
    \includegraphics[width=\linewidth]{figures/legend_cropped.pdf}
    \end{subfigure}
    \caption{Overall MSE and $R^2$ on synthetic data without group indicator, across all test samples versus noise bias $\mu$ added to group B.}
    \label{fig:synth-2groups-mse-r2-total}
\end{figure}

Figure~\ref{fig:synth-training-curve} presents the MSE curves evaluated on the training and test data. As can be seen, our \methodName{} method converges faster than the \texttt{Only Labeled} baseline, which is in line with the analysis from Section~\ref{sec:prediciton-powered-gradients}. Observe also that \texttt{PPI++} exhibits a similar convergence rate.
\begin{figure}[h]
    \centering
    \begin{subfigure}[t]{0.45\linewidth}
        \includegraphics[width=\linewidth,valign=t]{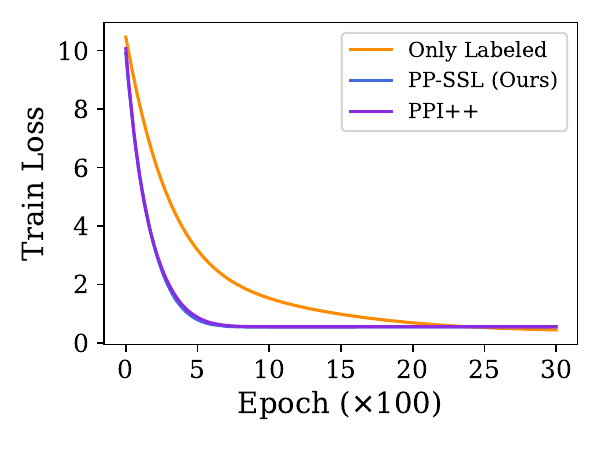}
    \caption{Train MSE}
    \end{subfigure}
    \begin{subfigure}[t]{0.45\linewidth}
    \includegraphics[width=\linewidth,valign=t]{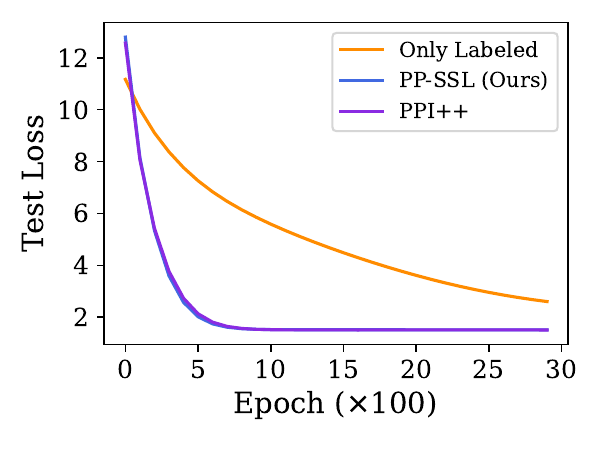}
    \caption{Test MSE}
    \end{subfigure}
    \caption{Train and test MSE (loss) during model training using $\mu=0.1$ for synthetic data experiment.}
    \label{fig:synth-training-curve}
\end{figure}

\subsection{Effect of group indicator: a comparative analysis}
\label{app:synth_comparison}

We now provide a side-by-side comparison of the performance of the models trained with and without access to the group indicator. Figure~\ref{fig:feature-groups-comparison} shows that including a group indicator significantly improves model performance, as expected.
These results highlight the value of leveraging group information, particularly for methods like ours that can effectively utilize this knowledge.




\begin{figure}[t]
    \centering
     \begin{subfigure}[t]{0.40\linewidth}
        \includegraphics[width=\linewidth,valign=t]{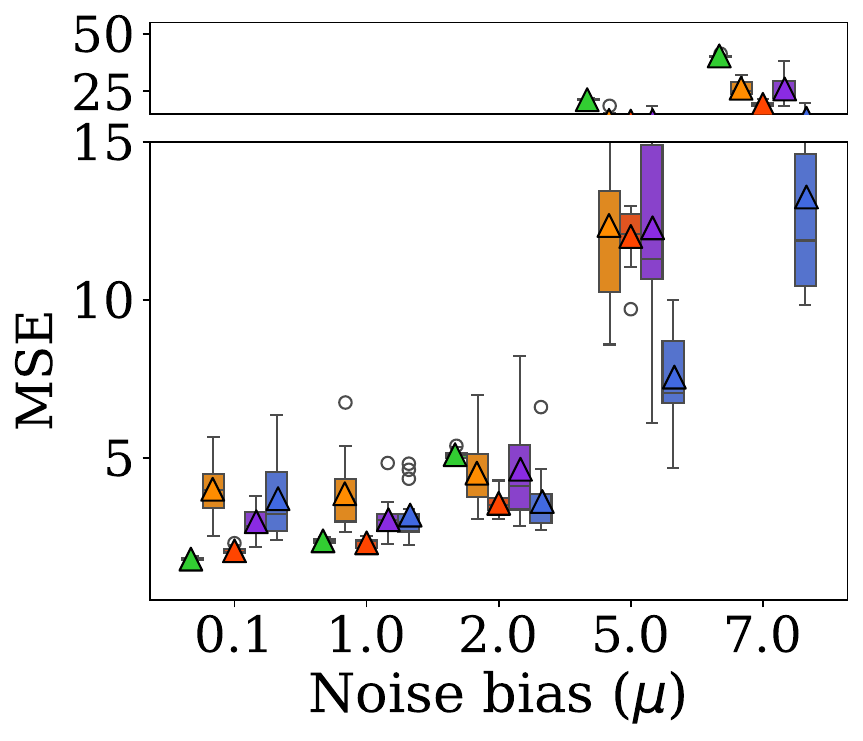}
    \caption{Without group indicator}
    \end{subfigure}
    \begin{subfigure}[t]{0.40\linewidth}
    \includegraphics[width=\linewidth,valign=t]{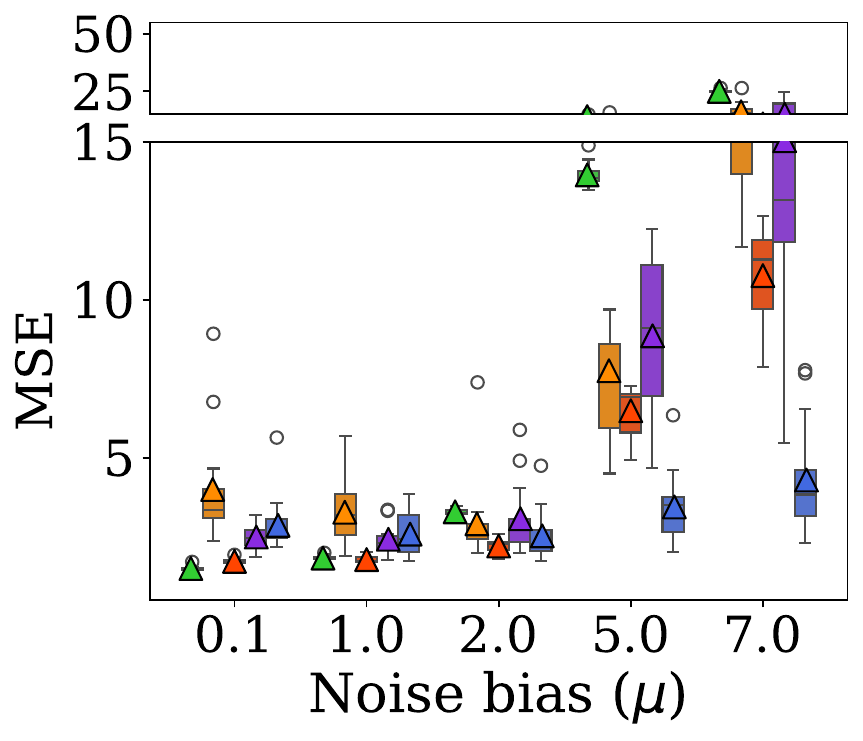}
    \caption{With group indicator}
    \end{subfigure}
    \begin{subfigure}[t]{0.15\linewidth}
    \vspace{10pt}
    \includegraphics[width=\linewidth]{figures/legend_cropped.pdf}
    \end{subfigure}
    \caption{ Synthetic experiments comparison of MSE across different methods as a function of noise level on the entire test set, with and without access to the group indicator feature. 
    \textbf{Right:} Models are trained with an additional input feature indicating group membership. 
    \textbf{Left:} Models are trained without access to group membership information. 
    }
    \label{fig:feature-groups-comparison}
\end{figure}

\section{Tabular real data experiments: supplementary details} \label{app:tabular_exp}
In this section, we provide additional experimental details related to the experiments described in Section~\ref{sec:real-tabular}. This includes the data split process and hyperparameter configuration (Section~\ref{app:tabular-exp-details}), and extended results for the same experimental setup (Section~\ref{app:tabular-additional_res}). 

\subsection{Experimental details}
\label{app:tabular-exp-details}

\paragraph{Metrics}
We report both MSE and $R^2$ metrics to evaluate model performance.

\paragraph{Setup and environment}
All experiments were run on an Ubuntu 20.04.6 LTS system. In order to run single seed experiment, hardware included 2 CPU cores from an Intel(R) Xeon(R) Gold processor at 2.40 GHz and 32 GB of RAM. The software environment used Python 3.11.3.

\paragraph{Data preparation}
We use the California Housing dataset~\citep{california-housing}, which consists of 20,640 samples and 8 numerical features: median income, house age, average rooms per household, average bedrooms per household, population, average household size, latitude, and longitude. The target variable is median house value. To simulate a natural distribution shift, we partitioned the data into two groups based on the $\tau$-percentile of the target variable $y$. Group A consists of samples with lower target values (bottom $\tau$ percent), and group B contains higher-value samples. We chose $\tau = 0.4$, meaning the bottom $40\%$ of the samples belong to group A. Figure~\ref{fig:tabular_target_data} shows the sorted target values used for this grouping.

\begin{figure}[h]
    \centering
    \includegraphics[width=0.5\linewidth]{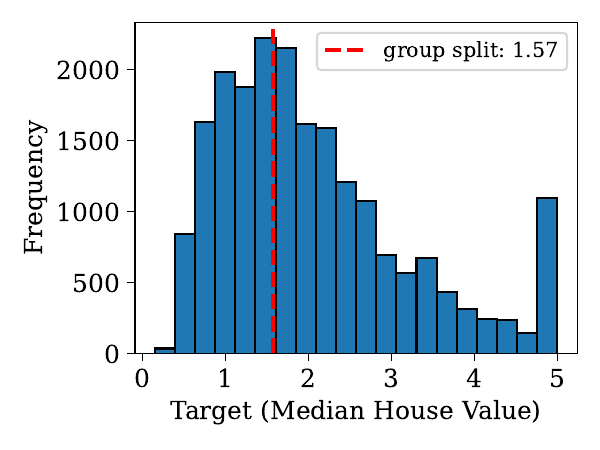}
    \caption{Ordered target house prices illustrating the $40\%$ threshold for group splitting.}
    \label{fig:tabular_target_data}
\end{figure}

\paragraph{Model details}
We use a linear model of the form $w^\top x + b$, trained for 1,000 epochs on each configuration. For consistency and fair comparison, all methods (including the \texttt{Only Labeled} baseline) were implemented using numpy code, as \texttt{PPI++} and our method could not be supported directly by Scikit-learn.

\paragraph{Hyperparameters}
Table~\ref{tab:tabular-settings} summarizes the key hyperparameters and dataset statistics used in the California Housing experiment.
\begin{table}[h]
\centering
\caption{California Housing Experiment Parameters}
\vspace{0.1in}
\label{tab:tabular-settings}
\begin{tabular}{ll}
\toprule
\textbf{Parameter} & \textbf{Value} \\
\midrule
Number of labeled samples ($n$) & 102 ($0.5\%$) \\
Number of unlabeled samples ($N$) & 16,327 ($79\%$) \\
Pre-train sample count ($N_A + \max{N_B}$) & 102 ($0.5\%$) \\
Pre-train samples from group A ($N_A$) & 51 \\
Pre-train samples from group B ($N_B$) & \{51, 38, 25, 12, 10, 5, 2, 1\} \\
Number of test samples ($n_\text{test}$) & 4,108 ($20\%$) \\
Number of features ($m$) & 8 \\
Split threshold ($\tau$) & 0.4 \\
Number of seeds & 100 \\
Epochs & 1000 \\
Optimizer & SGD \\
Batch size & Full dataset \\
Learning rate & 0.01 \\
Early stopping & Enabled (patience: 10 steps) \\
\bottomrule
\end{tabular}
\end{table}

\subsection{Additional results}
\label{app:tabular-additional_res}
In what follows, we provide additional results for the experiment from Section~\ref{sec:real-tabular}. Figure~\ref{fig:r2-test-tabular} reports the overall $R^2$ score on the full test set, covering both groups A and B. Figure~\ref{fig:tabular-full-mse} shows the MSE across the two groups with a broader range of $N_B$ values that were omitted in Section~\ref{sec:real-tabular} for readability. Overall, one can see that the results follow the same trends as those presented in Section~\ref{sec:real-tabular}.

\begin{figure}[h]
\centering
\begin{subfigure}[t]{0.70\linewidth}
    \includegraphics[width=\linewidth,valign=t]{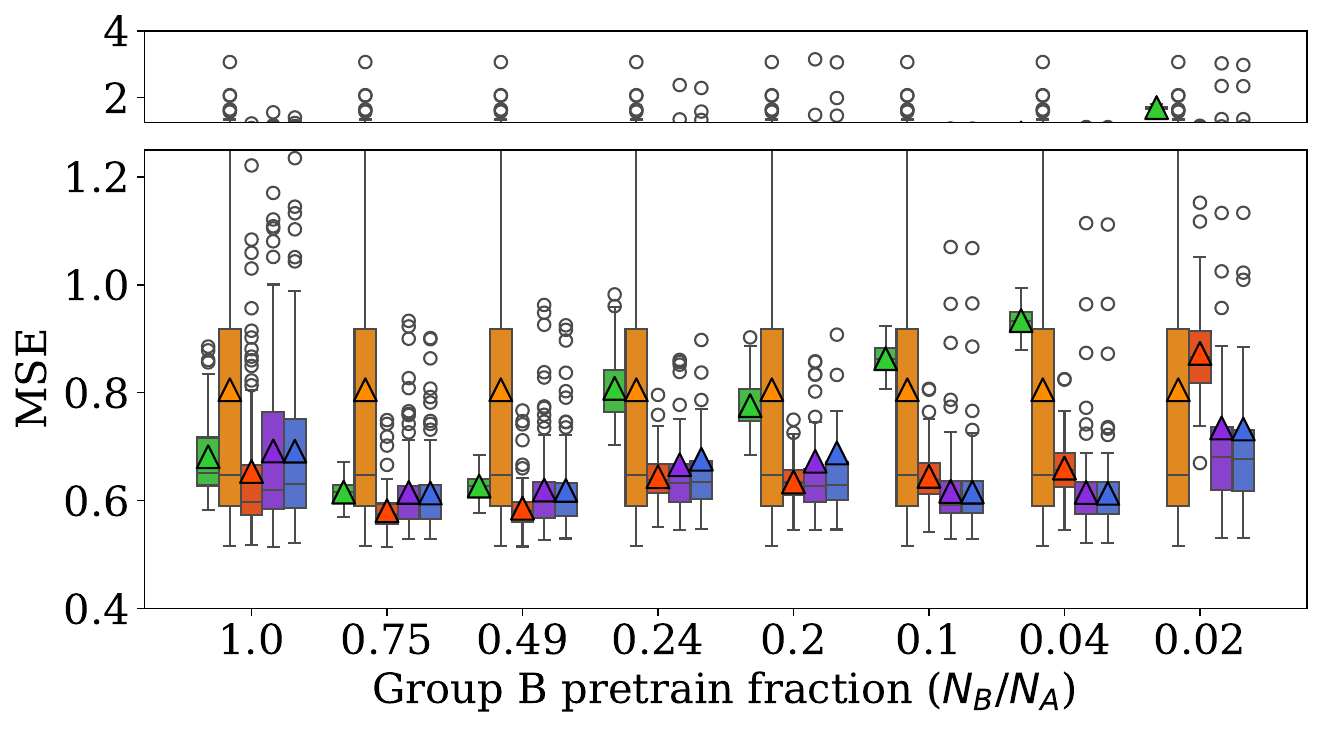}
    \end{subfigure}
    \begin{subfigure}[t]{0.2\linewidth}
    \vspace{10pt}
    \includegraphics[width=\linewidth]{figures/legend_cropped.pdf}
    \end{subfigure}
\caption{Results on California Housing dataset: MSE as a function of ${N_B}/{N_A}$ on the entire test set. Other details are as in Figure~\ref{fig:tabular-exp-main}.}
\label{fig:tabular-full-mse}
\end{figure}

\begin{figure}[h]
\centering
\begin{subfigure}[t]{0.40\linewidth}
    \includegraphics[width=\linewidth,valign=t]{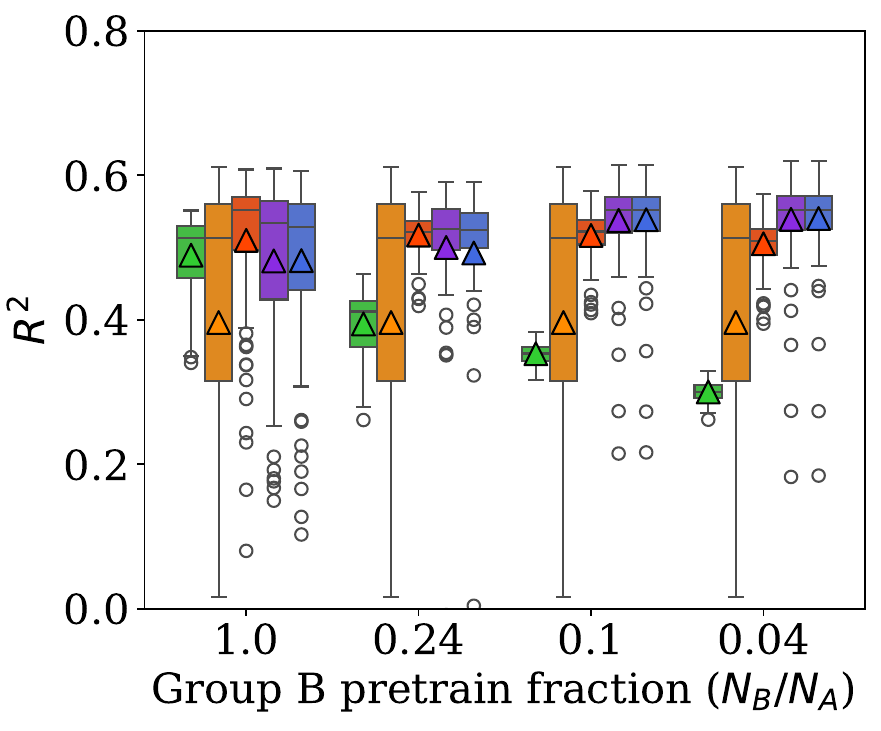}
    \end{subfigure}
    \begin{subfigure}[t]{0.2\linewidth}
    \vspace{10pt}
    \includegraphics[width=\linewidth]{figures/legend_cropped.pdf}
    \end{subfigure}
\caption{Results on California Housing dataset: $R^2$ as a function of $\frac{N_B}{N_A}$ on the entire test set. Other details are as in Figure~\ref{fig:tabular-exp-main}.}
\label{fig:r2-test-tabular}
\end{figure}

\section{Visual real data experiments: supplementary details} 
\label{app:age-estimation}

\subsection{Experimental details}
\paragraph{Metrics}
For evaluation, we report the Mean Average Error (MAE) on the full test set as well as separately on group A (ages 0–30) and group B (ages 31+). MAE is defined as $\frac{1}{n_{\text{test}}}\sum_{i=1}^{n_{\text{test}}}|f(x^i)-y^i|$, where $f(x)$ is a regression model. (lower is better) We report also MSE and $R^2$. 

\paragraph{Setup and environment}
All experiments were conducted on a high-performance computing cluster running Ubuntu 20.04.6 LTS. The hardware configuration includes 98 CPU cores (Intel(R) Xeon(R) Gold 2.40GHz), 256 GB of RAM, and 8 NVIDIA A40 GPUs. The software stack consists of Python 3.11.3, PyTorch 2.5.1, and CUDA 12.4. 

\paragraph{Data preparation}
We use the UTKFace dataset~\cite{zhifei2017cvpr}, which comprises over 20,000 face images labeled with age, gender, and ethnicity. Images are preprocessed by applying standard face alignment followed by resizing to 128×128 pixels. For data augmentation during training, we apply random horizontal flipping, random rotations, and color jitter to improve generalization. The age labels in UTKFace range from 0 to 116 years, which we treat as a regression problem to predict the exact age.


\paragraph{Model details}
All models are based on the ResNet-50 architecture pretrained on ImageNet, downloaded from \url{"https://download.pytorch.org/models/resnet50-11ad3fa6.pth"}. The final classification head is replaced with a fully connected module composed of: Dropout(0.2) → Linear(2048, 256) → ReLU → Linear(256, 1), as implemented in the open-source codebase in~\cite{ebimsv2021facial}. This architecture is used across all methods.

\paragraph{Hyperparameters}
We use the same training setting for all methods, as summarized in Table~\ref{tab:age-settings}.

\begin{table}[h]
\centering
\caption{Training and experimental configuration for the UTKFace age estimation experiments.}
\label{tab:age-settings}
\vspace{0.1in}
\begin{tabular}{ll}
\hline
\textbf{Parameter} & \textbf{Value} \\
\hline
Backbone architecture & ResNet-50 pretrained on ImageNet \\
Model final layers & Dropout(0.2) → Linear(2048,256) → ReLU → Linear(256,1) \\
Loss function & L1 \\
Optimizer & SGD \\
Momentum & 0.9 \\
Weight decay & 0.001 \\
Initial learning rate & 0.001\\
Batch size & 512 \\
Epochs & 100 \\
Early stopping & Patience = 10 (based on validation loss) \\
Number of seeds & 5 \\
\hline
Pretrain set fraction & 7\% (1,628 samples) \\
Labeled set fraction & 3\% (698 samples) \\
Unlabeled set fraction & 70\% (16,288 samples) \\
Validation set fraction & 10\% (2,326 samples) \\
Test set fraction & 10\% (2,062 samples) \\
Group A size (ages 0–29) & 813 samples \\
Group B sizes (ages 30+) & \{16, 33, 81, 163, 325, 488, 816\} \\
\hline
\end{tabular}
\end{table}

\subsection{Additional results}
In Section~\ref{sec:exp-age-estimation}, we present experiments using the MAE performance metric. Here, we complement those by reporting additional results using the MSE and $R^2$ metrics, evaluated both per group and on the entire test set. Figure~\ref{fig:age_estimation_mse} presents the results based on the MSE metric, while Figure~\ref{fig:age-r^2} displays the corresponding $R^2$ scores.

Notably, our proposed \texttt{PP-SSL} method consistently achieves the best performance among all semi-supervised approaches. It demonstrates noticeable improvements, with particularly strong gains in group B where the \texttt{Teacher} model in relatively inaccurate. Additionally, the low variability of the results highlights the stability of \texttt{PP-SSL} across different training runs.

\begin{figure}
    \centering
    \begin{subfigure}[t]{0.275\linewidth}
        \includegraphics[width=\linewidth,valign=t]{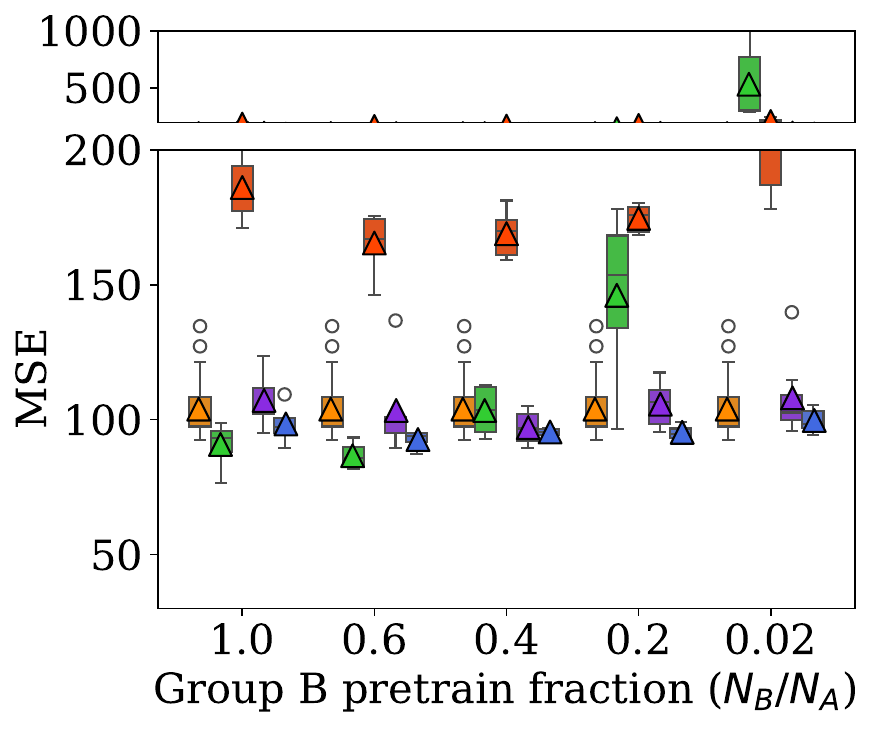}
    \caption{All test set}
    \end{subfigure}
    \begin{subfigure}[t]{0.275\linewidth}
        \includegraphics[width=\linewidth,valign=t]{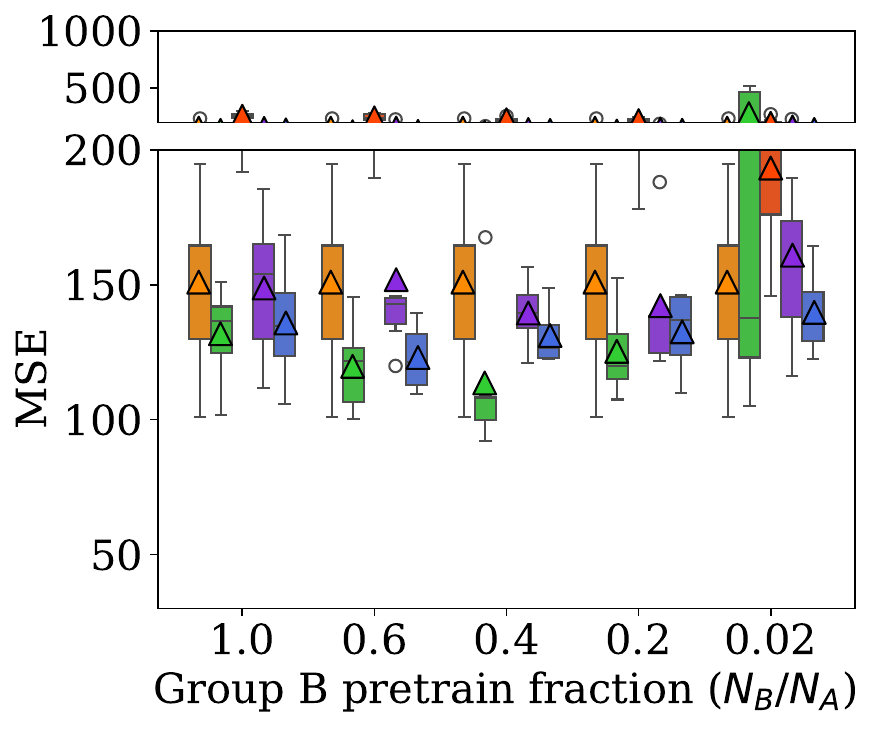}
    \caption{Group A}
    \end{subfigure}
    \begin{subfigure}[t]{0.275\linewidth}
    \includegraphics[width=\linewidth,valign=t]{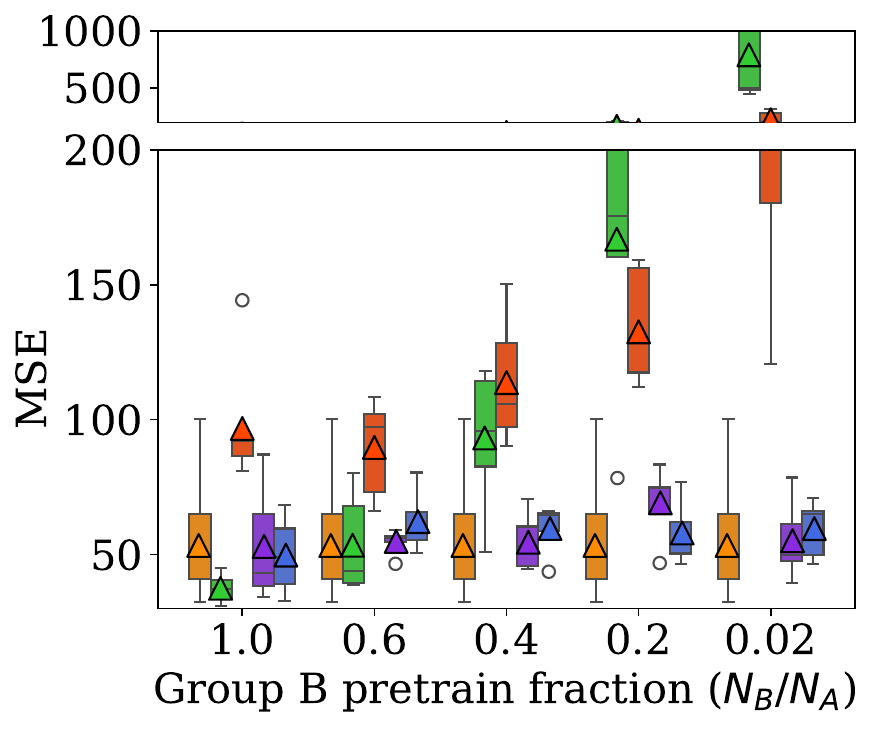}
    \caption{Group B}
    \end{subfigure}
    \begin{subfigure}[t]{0.15\linewidth}
    \vspace{12pt}
    \includegraphics[width=\linewidth]{figures/legend_cropped.pdf}
    \end{subfigure}
    \caption{Results for age estimation: MAE as a function of $N_B/N_A$, across 5 data splits. Other details are as in Figure~\ref{fig:age_estimation_exp}
   }
    \label{fig:age_estimation_mse}
\end{figure}

\begin{figure}
    \centering
    \begin{subfigure}[t]{0.5\linewidth}
    \includegraphics[width=1\linewidth]{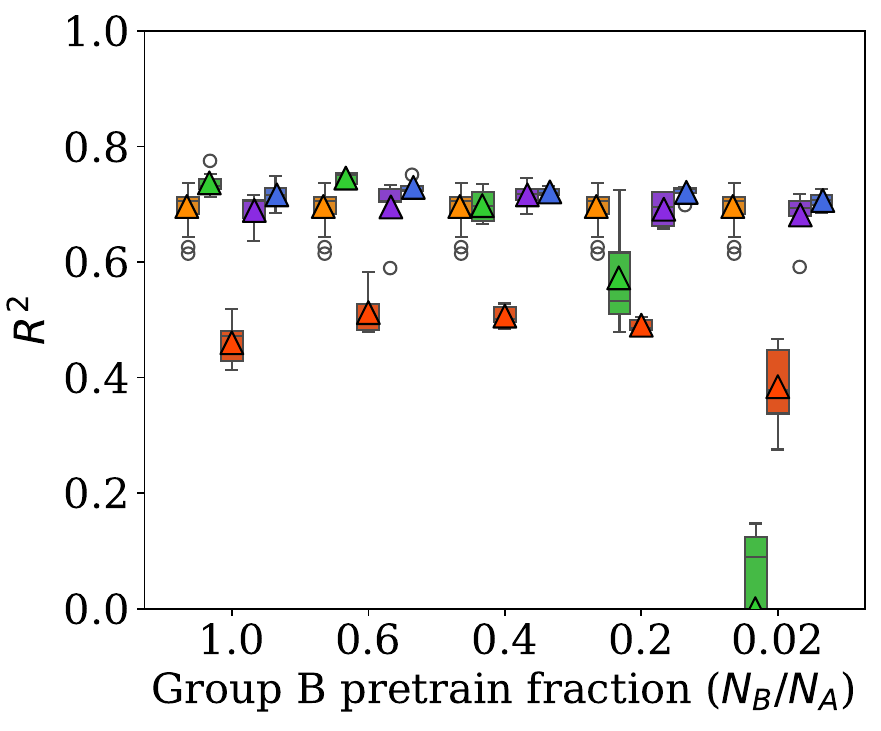}
    \end{subfigure}
    \begin{subfigure}[t]{0.15\linewidth}
    \includegraphics[width=\linewidth]{figures/legend_cropped.pdf}
    \end{subfigure}
    \caption{Results for age estimation: $R^2$ as a function of $N_B/N_A$, on the entire test set across 5 data splits. Other details are as in Figure~\ref{fig:age_estimation_exp}}
    \label{fig:age-r^2}
\end{figure}

\section{CIFAR-10 real data experiments: supplementary details} \label{app:cifar-exp}
\subsection{Experimental details}
\paragraph{Metrics}
For evaluation, we report the top-1 Accuracy on the full test set as well as separately on Group A (clean classes) and Group B (corrupted classes). Accuracy is defined as correct-classification by all classifications. 

\paragraph{Setup and environment}
All experiments were conducted on a high-performance computing cluster running Ubuntu 20.04.6 LTS. The hardware configuration includes 98 CPU cores (Intel(R) Xeon(R) Gold 2.40GHz), 256 GB of RAM, and 8 NVIDIA A40 GPUs. The software stack consists of Python 3.11.3, PyTorch 2.5.1, and CUDA 12.4. 

\paragraph{Data preparation}
We use the CIFAR-10 dataset~\cite{krizhevsky2009learning}, a standard benchmark comprising 60,000 color images in 10 classes (airplane, automobile, bird, cat, deer, dog, frog, horse, ship, and truck), with 6,000 images per class. The dataset is split into 50,000 training images and 10,000 test images. For our experiment, we reserve 5,000 images from the training set to pretrain a teacher model, and the remaining 45,000 images are used for student training.

All images are resized to $256 \times 256$ and center-cropped to $224 \times 224$ to match the input requirements of pretrained ResNet-50 models. We apply standard data augmentation for training, including random resized crops and horizontal flipping. To evaluate model robustness, we optionally add visual corruptions (e.g., Gaussian noise, saturation, brightness) to selected classes such as \emph{cat}, \emph{dog}, and \emph{deer} using a class-aware corruption transform prior to normalization. 

\paragraph{Model details}
We extract features using a ResNet-50 model pretrained on ImageNet, obtained from the official PyTorch model zoo at \url{https://download.pytorch.org/models/resnet50-11ad3fa6.pth}. The ResNet encoder outputs a 2048-dimensional feature vector for each image. We replace the final classification head with a single linear layer of shape (2048, 10) to classify among the 10 CIFAR-10 classes.

In knowledge distillation settings, we initialize the teacher with a pretrained linear model and load the checkpoint into a duplicate of the student architecture. The teacher is frozen during training and provides soft targets for distillation-based supervision. 

\paragraph{Hyperparameters}
Table~\ref{tab:cifar-settings} summarizes the key hyperparameters and dataset statistics used in the California Housing experiment.
\begin{table}[h]
\centering
\caption{California Housing Experiment Parameters}
\vspace{0.1in}
\label{tab:cifar-settings}
\begin{tabular}{ll}
\toprule
\textbf{Parameter} & \textbf{Value} \\
\midrule
Number of labeled samples ($n$) & 4,500 ($9\%$) \\
Number of unlabeled samples ($N$) & 40,500 ($81\%$) \\
Number of pretrain samples ($N_{pre}$) & 5,000 ($10\%$) \\
Number of test samples ($n_\text{test}$) & 10,000  \\
Number of features in classifier  & 2048 \\
Number of seeds & 5 \\
Epochs & 100 \\
Optimizer & SGD \\
Batch size & 256 \\
Learning rate & 0.001 \\
Labeled Loss & CE \\
Unlabeled Loss & KL divergence\\
\bottomrule
\end{tabular}
\end{table}

\subsection{Experimental Setup}

In this section, we present an empirical study on semi-supervised learning under class-conditional corruption, using the CIFAR-10 dataset as a controlled testbed. Our goal is to investigate how the presence of  visual corruption in a subset of classes affects learning, and how prediction-powered or distillation-based methods can mitigate these effects. To that end, we inject various types of corruption into the dataset in the image space (e.g., saturation, brightness, Gaussian noise). We split the training data into “noisy” and “clean” subsets based on class labels, and study how different training strategies perform in this mixed-quality regime. 

Following recent semi-supervised learning pipelines, we first extract 2048-dimensional features from all training and test images using a fixed ResNet50 backbone pretrained on ImageNet.
To enable a teacher-student framework, we first split the training set into two parts. A disjoint subset of 5,\!000 clean training samples is used to train a teacher model. This model is trained in a supervised manner using the standard cross-entropy loss. Once trained, the teacher model is saved and used to guide downstream learning via knowledge distillation. We evaluate performance separately on corrupted and non-corrupted classes to understand the strengths and limitations of each method under real-world noise scenarios.

To simulate a semi-supervised learning scenario, we split the feature-extracted training data such that only 10\% of samples have labels, and the remaining 90\% are treated as unlabeled. A linear classifier is trained on this data, with supervision provided via hard labels and, when available, soft predictions from the teacher model.
The loss function combines cross-entropy for labeled data with a Kullback--Leibler divergence term for distillation (pseudo-labeled data):
\[
\mathcal{L}_{\text{SSL}} = \mathcal{L}_{\text{CE}} + \lambda \cdot \mathcal{L}_{\text{KL}}, \qquad    
\mathcal{L}_{\text{PP-SSL}} = \mathcal{L}_{\text{CE}} + \lambda \cdot (\mathcal{L}^f_{N,\text{KL}}-\mathcal{L}^f_{n,\text{KL}}),
\]
where for a sample with true label vector $y \in \{0,1\}^C$ and predicted probabilities $p \in [0,1]^C$, the cross-entropy loss is defined as:
\[
\mathcal{L}_{\text{CE}}(p, y) = -\sum_{i=1}^C y_i \log(p_i),
\]
where $C$ is the number of classes and $p_i$ is the predicted probability for class $i$.  When applying knowledge distillation, we further incorporate a KL divergence loss between the probability distribution produced by the teacher model and that of the student model. The KL divergence is given by:
\[
\mathcal{L}_{\text{KL}}(q \| p) = \sum_{i=1}^C q_i \log\left(\frac{q_i}{p_i}\right),
\]
where $q$ is the distribution from the teacher, and $p$ is the student's predicted distribution.

\subsection{Additional Results}
\begin{table}[h]
    \centering
    \caption{Results for corruption severity = 1}
    \vspace{0.1in}
    \label{tab:results-severity1}


\begin{tabular}{lccccc}
\toprule
 & \texttt{Labeled} & \texttt{SSL} & \texttt{PPI++} & \texttt{PP-SSL} & \texttt{Teacher} \\
\midrule
\multicolumn{6}{c}{\textbf{Brightness}} \\
\midrule
Total   & 81.42±0.03 & 81.48±0.06 & 77.92±0.02 & 81.23±0.11 & \textbf{85.68±0.01} \\
Group A & 85.83±0.03 & 85.97±0.05 & 81.99±0.04 & 84.93±0.06 & \textbf{88.50±0.03} \\
Group B &  3.72±0.74 &  4.92±1.41 & 56.93±0.12 & \textbf{60.55±0.26} &  4.81±0.30 \\
\midrule
\multicolumn{6}{c}{\textbf{Saturate}} \\
\midrule
Total   & 74.89±0.09 & 74.65±0.30 & 70.82±0.14 & 74.97±0.65 & \textbf{79.15±0.06 }\\
Group A & 79.11±0.35 & 78.88±0.97 & 81.14±0.03 & \textbf{84.12±0.21} & 75.46±0.09 \\
Group B & 88.62±0.02 &\textbf{ 88.70±0.03 }& 85.49±0.03 & 88.24±0.04 & 88.53±0.03 \\
\midrule
\multicolumn{6}{c}{\textbf{Shot Noise}} \\
\midrule
Total   & 84.22±0.07 & 84.41±0.19 & 80.96±0.03 & 83.91±0.18 & \textbf{88.48±0.03} \\
Group A & 57.44±1.12 & 56.38±3.20 & 70.99±0.10 &\textbf{ 74.41±0.55} & 45.31±0.26 \\
Group B & 78.21±0.33 & 78.45±0.45 & 80.58±0.03 & \textbf{83.56±0.09 }& 74.45±0.07 \\
\midrule
\multicolumn{6}{c}{\textbf{Spatter}} \\
\midrule
Total   & 82.31±0.03 & 82.34±0.05 & 80.16±0.03 & \textbf{83.10±0.06} & 82.86±0.02 \\
Group A & 88.40±0.04 & \textbf{88.52±0.06} & 85.48±0.03 & 88.28±0.10 & 88.50±0.03 \\
Group B & 54.36±1.10 & 54.95±1.52 & 69.18±0.09 & \textbf{72.57±0.21} & 42.06±0.23 \\
\midrule
\multicolumn{6}{c}{\textbf{Speckle Noise}} \\
\midrule
Total   & 74.08±0.11 & 73.89±0.17 & 75.88±0.06 & \textbf{78.82±0.09} & 69.81±0.10 \\
Group A & 63.15±0.22 & 63.57±0.42 & 76.92±0.03 & \textbf{79.93±0.09} & 63.19±0.07 \\
Group B & 88.43±0.03 & \textbf{88.53±0.02 }& 85.46±0.03 & 88.27±0.04 & \textbf{88.53±0.03} \\
\bottomrule
\end{tabular}

\end{table}


\begin{table}[h]
    \centering
    \caption{Results for corruption severity = 5}
    \vspace{0.1in}
    \label{tab:results-severity5} 

\begin{tabular}{lccccc}
\toprule

  & \texttt{Labeled} & \texttt{SSL} & \texttt{PPI++} & \texttt{PP-SSL} & \texttt{Teacher} \\
\midrule
\multicolumn{6}{c}{\textbf{Brightness}} \\
\midrule
Total   & 82.89±0.07 & 82.87±0.19 & 80.97±0.17 &\textbf{84.26±0.11} &  81.14±0.06 \\
Group A & 72.10±0.28 & 71.63±0.75 & 72.65±0.63 &\textbf{ 77.59±0.38} &  64.15±0.17 \\
Group B & 87.52±0.03 & 87.68±0.09 & 84.54±0.14 & 87.12±0.04 &  \textbf{88.48±0.06} \\
\midrule
\multicolumn{6}{c}{\textbf{Saturate}} \\
\midrule
Total   & 80.73±0.09 & 80.67±0.13 & 79.95±0.06 & \textbf{83.32±0.02 }&  77.15±0.15 \\
Group A & 62.94±0.28 & 62.61±0.41 & 67.92±0.19 & \textbf{72.94±0.10} &  51.07±0.52 \\
Group B & 88.35±0.02 & 88.41±0.02 & 85.11±0.07 & 87.77±0.03 & \textbf{88.48±0.04} \\
\midrule
\multicolumn{6}{c}{\textbf{Shot Noise}} \\
\midrule
Total   & 79.29±0.20 & 79.31±0.45 & 78.92±0.09 & \textbf{ 81.99±0.16} &  77.34±0.14 \\
Group A & 57.93±0.65 & 57.67±1.44 & 63.57±0.19 & \textbf{67.31±0.39} &  51.87±0.44 \\
Group B & 88.45±0.03 & \textbf{88.58±0.06 }& 85.49±0.09 & 88.28±0.10 &  88.50±0.05 \\
\midrule
\multicolumn{6}{c}{\textbf{Spatter}} \\
\midrule
Total   & 66.09±0.42 & 66.13±0.58 & 76.75±0.05 & \textbf{79.20±0.11} &  65.46±0.26 \\
Group A & 13.92±1.39 & 13.77±1.95 & 56.37±0.23 & \textbf{ 58.09±0.34} &  12.63±0.91 \\
Group B & 88.45±0.03 & 88.58±0.03 & 85.49±0.05 & 88.25±0.04 & \textbf{88.67±0.06} \\
\midrule
\multicolumn{6}{c}{\textbf{Speckle Noise}} \\
\midrule
Total   & 77.69±0.23 & 77.86±0.31 & 78.52±0.08 & \textbf{81.53±0.11} &  76.30±0.15 \\
Group A & 52.62±0.74 & 52.95±1.02 & 62.33±0.21 & \textbf{65.79±0.31} &  48.13±0.45 \\
Group B & 88.44±0.03 & 88.54±0.02 & 85.46±0.06 & 88.27±0.04 &  \textbf{88.65±0.05} \\
\bottomrule
\end{tabular}

\end{table}

\paragraph{Analysis.}
Tables~\ref{tab:results-severity1},~\ref{tab:results-severity5} shows that our method improves the accuracy on the corrupted classes across different corruption severities, with the adaptive choice of $\lambda$ leading to improved performance compared to baselines. The results confirm the robustness of \methodName{} to varying levels of noise.

In addition to accuracy, in Figure~\ref{fig:cifar-convergence} we also evaluated convergence time. \methodName{} converges  faster than the only-labeled baseline, standard SSL, and \texttt{PPI++}, which require more training epochs to stabilize. This faster convergence further highlights the efficiency of our adaptive weighting strategy in practice.

\begin{figure}
    \centering
    \includegraphics[width=0.7\linewidth]{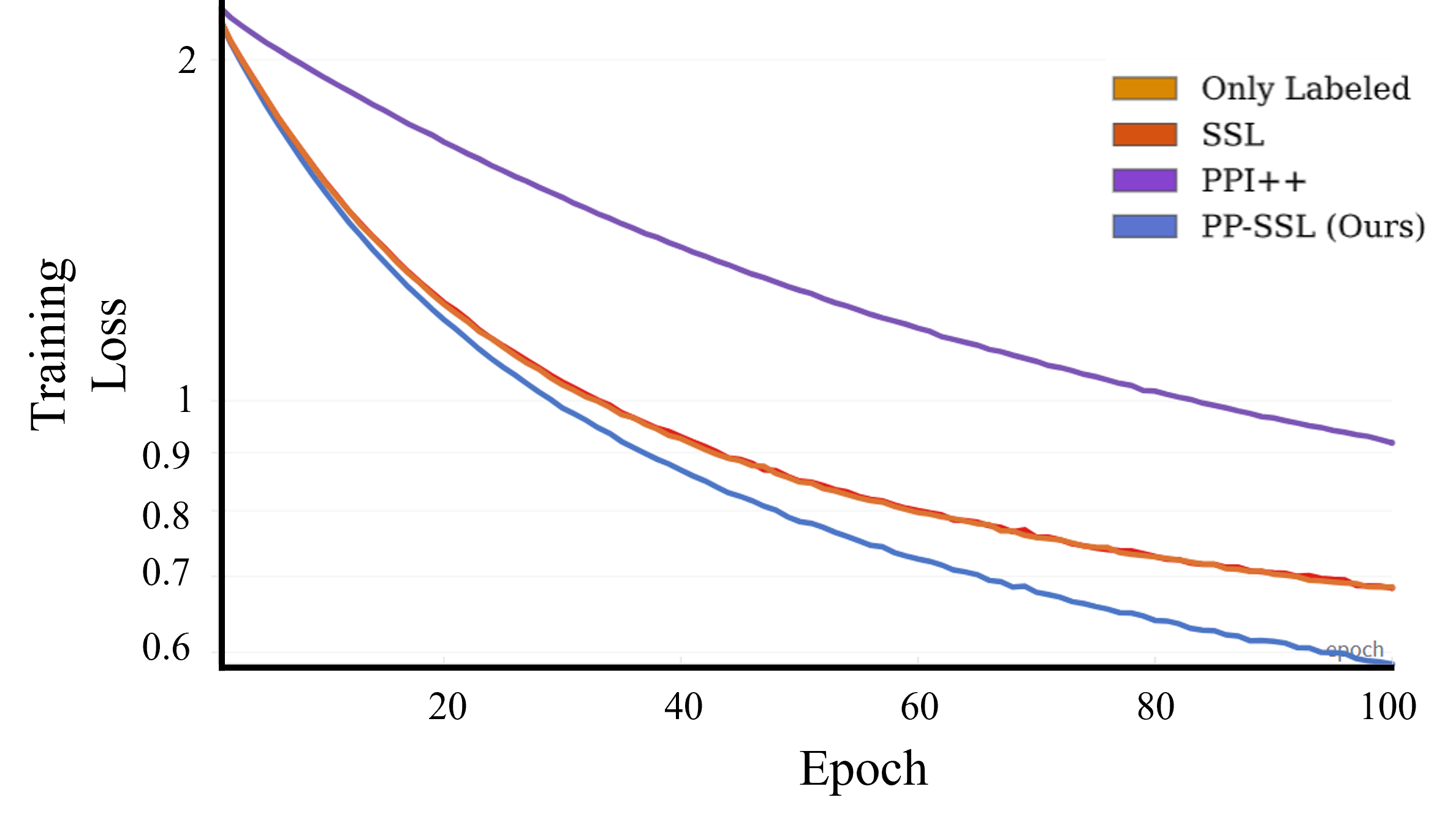}
    \caption{Convergence graph training loss as a function of epochs during CIFAR-10 training}
    \label{fig:cifar-convergence}
\end{figure}

\section{Additional Experiments on the Choice of $\lambda$}
\label{app:lambda_experiments}

In this section, we provide additional experiments that validate our theory regarding the optimal weight parameter $\lambda$.
These results complement the insights derived in Corollary~\ref{cor:optimal_tuning}.


\subsection{Synthetic Regression: Theoretical vs.\ Empirical $\lambda^*$}
We first conduct a controlled synthetic regression experiment to compare the theoretical $\lambda^*$ from Corollary~\ref{cor:optimal_tuning} (i.e., in Eq.~\ref{eq:lambda_star}) with the empirical value that minimizes the gradient variance. 
Since the theoretical $\lambda^*$ (denotes as $\lambda^*_{\text{theory}}$ below) is derived by minimizing an \emph{upper bound} on the variance rather than the variance itself, it need not coincide exactly with the empirical variance minimizer (denoted as $\lambda^*_{\text{practice}}$).


\paragraph{Experimental setup.} Synthetic data is generated according to the linear noisy model $y = \mathbf{x}^\top \mathbf{w}^* + \epsilon$, with $\mathbf{x} \sim \mathcal{N}(0,I_d)$ where $d=10$, $\epsilon \sim \mathcal{N}(0,\sigma_*^2)$ where $\sigma_*^2=0.01$, and $\mathbf{w}^* \sim \mathcal{N}(0,I_d)$. We sample $n=50$ labeled examples and $N \in \{200,2000\}$ unlabeled examples from this model. A synthetic teacher provides pseudo-labels of the form $f(\mathbf{x}) = \mathbf{x}^\top (\mathbf{w}^* + b \mathbf{e}_1) + \zeta$, where $b\in\reals$ is a bias parameter and $\zeta\sim \mathcal{N}(0,\sigma_\zeta^2)$ introduces noise. Both $b$ and $\sigma_{\zeta}^2$ control the teacher's quality. Since directly mapping them to $\sigma^2$ and $\sigma_e^2$ is non-trivial, we set $b\in\{0.1,1\}$, $\sigma_{\zeta}^2\in\{0.01,1.5\}$ and empirically estimate $\sigma^2$ and $\sigma_e^2$. We consider the MSE loss $\ell(w; x,y)=\frac{1}{2}(x^\top w - y)^2$ and estimate its gradient at a randomly sampled test point $\tilde{w} \sim \mathcal{N}(0,I_d)$.

\paragraph{Results.} Table~\ref{tab:lambda_theory_vs_practice} reveals that $\lambda^*_{\text{theory}}$ consistently approaches the performance (in terms of measured variance) of $\lambda^*_{\text{practice}}$ across different settings. The theoretical value yields variance reductions within a narrow margin of the empirical optimum.

\begin{table}[h]
\centering
\caption{Comparison between theoretical and empirical $\lambda^*$. Theoretical values match empirical ones closely in terms of minimizing gradient variance.}
\vspace{0.1in}
\begin{tabular}{ccc|cccccc}
\toprule
$r$ & $\sigma_\zeta^2$ & $b$ & $\sigma^2$ & $\sigma_e^2$ & $\lambda^*_{\text{theory}}$ & $\lambda^*_{\text{practice}}$ & Var($\lambda^*_{\text{theory}}$) & Var($\lambda^*_{\text{practice}}$) \\
\midrule
0.025 & 0.01 & 0.1 &  10.4078 & 0.0793 & 0.9682 & 0.99 & 0.0050 & 0.0046 \\
0.025 & 1.5 & 1 & 14.1715 & 7.6479 & 0.6336 & 0.70 & 0.1733 & 0.1669 \\
0.25  & 0.01 & 0.1 & 11.5082 & 0.0440 & 0.7969 & 0.77 & 0.0749 & 0.0709 \\
0.25  & 1.5 & 1 & 11.9349 & 10.1486 & 0.4324 & 0.35 & 0.1034 & 0.1004 \\
\bottomrule
\end{tabular}
\label{tab:lambda_theory_vs_practice}
\end{table}

\subsection{Adaptive Dynamics of $\lambda$}
Next, we investigate how the adaptive algorithm adjusts $\lambda$ over training epochs in relation to teacher quality. We track $\lambda_t$ when initialized at $\lambda_0=1$ across 1000 epochs for teachers of varying pseudo-label mean squared error (MSE). 

\paragraph{Results.} Table~\ref{tab:lambda_dynamics} shows that $\lambda_t$ converges automatically toward the theoretical $\lambda^*$. Moreover, the limiting value of $\lambda$ increases as the teacher error decreases, which matches intuition: better teachers justify a higher weight on pseudo-labels.

\begin{table}[h]
\centering
\caption{Adaptive dynamics of $\lambda$ across epochs. $\lambda_t$ converges toward the theoretical $\lambda^*$, with larger $\lambda$ for higher-quality teachers.}
\vspace{0.1in}
\resizebox{\textwidth}{!}{%
\begin{tabular}{cccccccccccc|c}
\toprule
Teacher MSE & $\lambda_0$ & $\lambda_{100}$ & $\lambda_{200}$ & $\lambda_{300}$ & $\lambda_{400}$ & $\lambda_{500}$ & $\lambda_{600}$ & $\lambda_{700}$ & $\lambda_{800}$ & $\lambda_{900}$ & $\lambda_{1000}$ & $\lambda^*_{\text{theory}}$ \\
\midrule
0.01 & 1.000 & 0.966 & 0.946 & 0.934 & 0.929 & 0.926 & 0.925 & 0.924 & 0.924 & 0.924 & 0.924 & 0.968 \\
0.04 & 1.000 & 0.945 & 0.902 & 0.866 & 0.836 & 0.811 & 0.791 & 0.775 & 0.761 & 0.751 & 0.742 & 0.795 \\
0.05 & 1.000 & 0.930 & 0.867 & 0.812 & 0.766 & 0.727 & 0.696 & 0.671 & 0.651 & 0.635 & 0.622 & 0.634 \\
\bottomrule
\end{tabular}}
\label{tab:lambda_dynamics}
\end{table}

\subsection{Comparison to the Infeasible Best Fixed $\lambda$}\label{app:lambda-adaptive-vs-fixed}
Finally, we benchmark our adaptive method (\methodName{}) against an oracle baseline that uses the infeasible “best” fixed $\lambda$ minimizing test error. We employ the subgroup bias synthetic regression setup from Section~\ref{Sec:synth-experiments}.

\paragraph{Results.} Table~\ref{tab:lambda_oracle} shows that \methodName{} matches the performance of the infeasible oracle across all bias levels. This demonstrates that adaptivity is sufficient to recover the performance of the best fixed $\lambda$, without prior knowledge of the optimal value.

\begin{table}[h]
\centering
\caption{Comparison of \methodName{} to the (infeasible) best $\lambda$. Our adaptive algorithm matches the oracle performance across all bias settings.}
\vspace{0.1in}
\begin{tabular}{c|c|c|c|c}
\toprule
Bias $\mu$ & Inf.\ $\lambda^*$ & Inf.\ MSE (Total/A/B) & \methodName{} $\lambda$ & \methodName{} MSE (Total/A/B) \\
\midrule
0.1 & 0.50 & 1.93 / 1.76 / 2.59 & 0.46 & 1.93 / 1.77 / 2.58 \\
0.5 & 0.50 & 1.85 / 1.69 / 2.50 & 0.59 & 1.85 / 1.69 / 2.50 \\
3.0 & 0.35 & 2.76 / 1.95 / 6.05 & 0.33 & 2.76 / 1.96 / 6.04 \\
5.0 & 0.35 & 2.45 / 2.07 / 4.05 & 0.36 & 2.47 / 2.06 / 4.12 \\
\bottomrule
\end{tabular}
\label{tab:lambda_oracle}
\end{table}

\paragraph{Summary.} Across all three settings, we observe strong alignment between the theoretical prescription, the adaptive algorithm, and the empirical optimum. This provides further evidence that the proposed choice of $\lambda$ is both theoretically sound and practically effective.


\end{document}